\documentclass[twoside]{article}
\usepackage[accepted]{aistats2027}
\usepackage{amsmath,amssymb,amsthm}
\usepackage{graphicx,booktabs,xcolor}
\usepackage{tikz,placeins}
\usepackage[round]{natbib}

\usepackage[hidelinks]{hyperref}
\newtheorem{theorem}{Theorem}
\newtheorem{proposition}{Proposition}
\newtheorem{corollary}{Corollary}
\newtheorem{lemma}{Lemma}

\makeatletter
\renewcommand{\@copyrightspace}{}
\makeatother

\renewcommand{\url}[1]{}
\newcommand{\E}{\mathbb E}

\newcommand{\N}{\mathcal N}
\newcommand{\Reg}{\operatorname{Reg}}
\newcommand{\tr}{\operatorname{tr}}
\newcommand{\diag}{\operatorname{diag}}
\newcommand{\argmin}{\operatorname{argmin}}
\newcommand{\Cov}{\operatorname{Cov}}

\begin{document}
\twocolumn[
\aistatstitle{Optimal Multitask Representations Under a Rate Constraint:\\An Information Bottleneck Analysis}
\aistatsauthor{Nicholas Stiles \And  Gregory W. Wornell}
\aistatsaddress{Massachusetts Institute of Technology}
]
\begin{abstract}
Which pretraining objective yields the best representation for downstream tasks under an information budget? We formalize this problem as a multitask information bottleneck, using a mutual information constraint to model incomplete information retention. For Gaussian inputs and tasks, we characterize optimal representations and show that a task-aligned specialist can outperform a reconstruction generalist at low rates but lose at high rates. We also construct task families with increasing SNR for which every fixed Gaussian prediction objective incurs arbitrarily large relative regret, driven by reversals in optimal allocation priorities. Conversely, the average task information matrix defines an objective with a uniform relative-regret bound controlled by maximum task SNR. CelebA and dSprites experiments illustrate that compression can change which pretraining objective performs best, and even task-aware training can reverse which downstream task is better preserved as rate increases.
\end{abstract}

\section{Introduction}
\label{sec:intro}
Representation learning seeks features that support predictions beyond the
objective used to learn them. Image--text pretraining and masked reconstruction have demonstrated broad visual transfer \citep{radford2021transferable,he2022masked}. Theoretical accounts explain how auxiliary prediction can expose latent structure to simple downstream predictors \citep{Arora2019ATA,lee2021,tosh2021}, and how these procedures can produce approximately sufficient representations for related tasks. These results motivate a sufficiency intuition: an objective that requires broadly useful information can supply the information that many tasks need. Under a restricted representation budget, however, preserving a fully sufficient representation may be impossible. This leaves another question:
\emph{which information should receive the available capacity?}

This question concerns both learning compact representations and compressing
features supplied by a pretrained model. An unsupervised ``generalist'' objective may reward preserving many factors in the input, whereas a supervised ``specialist'' emphasizes a smaller set of predictions. 
With unlimited capacity, it is unsurprising that the former objective might yield preferable representations. But when capacity is scarce, the specialist might be preferable even if it omits some information relevant to certain downstream tasks. We study how this tradeoff affects the selection of the representation objective, or {\em pretext}, as a function of representation bitrate. This budget constrains information retained about each observation, rather than network parameter count or computational cost.

We use the term {\em pretext} for any surrogate objective, in place of the downstream/evaluation tasks, which is used to obtain representations. These objectives could be self-supervised in the usual sense \citep{10.1109/ICCV.2015.167, Kolesnikov_2019_CVPR, conf/iclr/GidarisSK18} or supervised on auxiliary labels, distinct from the evaluation labels. Our results concern the information a surrogate prioritizes rather than how its targets are obtained.

Our starting point is task-aware rate-distortion theory \citep{berger1971,cover2006}. Statistical sufficiency concerns preserving the
information required for inference \citep{kullback1951}, while the information
bottleneck (IB) generalizes this condition by trading off relevance against compression \citep{tishby1999}. With unrestricted probabilistic prediction, excess log-loss equals information lost about the target, linking IB to remote source coding under logarithmic loss
\citep{zaidi2020,courtade2014}. Multivariate IB already accommodates several
relevance variables \citep{friedman2001}. We use this established framework to
measure representation quality by task-averaged excess log-loss and capacity
by mutual information.

Several related lines of work clarify what information a pretext should preserve. Task-aware compression can exploit shared invariances to control predictive loss across an entire task family \citep{dubois2021compression}. View-selection principles similarly seek to discard shared nuisance information while retaining task relevance \citep{tian2020}. Spectral and kernel analyses connect generalization to augmentation structure, representation dimension, and inductive biases \citep{haochen2021,pmlr-v202-cabannes23a}. Our question is complementary to these analyses: for a certain task suite and information budget, does optimizing a
surrogate prediction objective allocate capacity appropriately, and can the
same surrogate remain appropriate across budgets? This population-level allocation question differs from the statistical benefit of learning shared structure from multiple tasks \citep{maurer2016}, and also abstracts from decoder restrictions, whose importance is formalized by the decodable IB
\citep{dubois2020}.

Gaussian task models make the allocation question explicit. Classical
Gaussian IB (GIB) yields noisy linear representations with a fixed canonical basis and rate-dependent mode activation \citep{JMLR:v6:chechik05a}. Building on Gaussian extremality results \citep{liu2017informationtheoreticperspectivesbrascampliebinequality}, we reduce the multitask Gaussian problem to optimization of a conditional covariance. We then compare its solutions with those induced by single-task Gaussian pretexts. Even when all task matrices share an eigenbasis, their optimal allocation ordering can reverse with rate. Directions serving frequent but less precise tasks can take priority at low rates, while directions serving rare yet more precise tasks become more important at high rates. This is stronger than sequential activation of additional modes within a fixed ordering, as occurs in the usual GIB.

We establish both negative and positive consequences. We show that weakly task-aligned specialist objectives can outperform a generalist at low rates but lose at higher rates due to omitted information. More strongly, we show that some task families force substantial relative regret for any rate-independent single-target Gaussian pretext. Conversely, bounded task SNR guarantees a rate-uniform approximation for a certain pretext constructed using the average task information matrix, and its regret bound is second order at low SNR.

Experiments on frozen CelebA features compare compression objectives at matched budgets, using both the exact Gaussian solution and trained finite-bit encoders, and exhibit rate-dependent pretext rankings and specialist–generalist crossovers. A second experiment trains encoders end-to-end on a two-factor dSprites mixture and finds the predicted reversal in relative factor recoverability, near the rate the Gaussian analysis identifies. Our results therefore identify when the deployment budget matters to surrogate-objective selection, and when even a task-aware fixed surrogate may be insufficient to approximate the optimal allocation at all rates.

\section{From Single-Task to Multitask Information Bottleneck}
\label{sec:setup}

\subsection{Single-Task IB: What Does a Pretext Preserve?}
\label{sec:single-task-ib}
For input $X$ and prediction target $V$, the information bottleneck
\citep{tishby1999} chooses an encoder $P_{Z\mid X}$ satisfying
$Z-X-V$ to solve
\begin{equation}
    \max_{I(X;Z)\le R} I(Z;V)
    \quad\Longleftrightarrow\quad
    \min_{I(X;Z)\le R} I(X;V\mid Z).
    \label{eq:single-task-ib}
\end{equation}
All information quantities are in bits. The target specifies which
information is useful, while $R$ limits how much the representation retains.
With an unrestricted probabilistic decoder, the minimized quantity is the
increase in optimal predictive log-loss caused by replacing $X$ with $Z$.
The budget is an information-theoretic model of representation compression.

The Gaussian case makes this allocation explicit. For details, see \citep{JMLR:v6:chechik05a}. Let $X\sim\N(0,I_d)$ and
$V=AX+\epsilon$, with independent $\epsilon\sim\N(0,N)$ and $N\succ0$, and define the \textit{task information matrix}
$H=A^\top N^{-1}A=U\diag(h_i)U^\top$. The eigenvalues of $H$ describe task signal-to-noise ratios along feature
directions. If $H \succ 0$, then an optimal Gaussian
representation allocates
\begin{align}
    r_i^H(R)&=\left[\tfrac12\log_2 h_i-\tau_R\right]_+,
    &\sum_i r_i^H(R)&=R,\nonumber\\
    K_R^H&=U\diag(2^{-2r_i^H(R)})U^\top,
    \label{eq:pretextalloc}
\end{align}
where $K_R^H=\Cov(X\mid Z)$ is the residual covariance, and ``water level'' $\tau_R$ is chosen to satisfy the rate constraint. Observe that more bits (greater $r_i^H$) reduce uncertainty in the corresponding direction. Crucially, a fixed Gaussian target fixes both the basis and the ordering of the allocations: if $h_i\ge h_j$, then $r_i^H(R)\ge r_j^H(R)$ at every rate. Increasing the budget changes the allocation, but cannot reverse this ordering. We next ask whether this structure suffices when the representation must support multiple tasks.

\subsection{A Shared Representation for Multiple Tasks}
Let $t\in\{1,\ldots,m\}$ index evaluation tasks, with weights $\pi_t>0$
and $\sum_t\pi_t=1$. The shared encoder does not observe $t$, while a
task-dependent decoder does, mirroring the case where a pretrained representation is adapted to an evaluation task through probing. Under $Z-X-Y_t$, define distortion measures
\begin{align}
    D_t(Z)&=\E\operatorname{KL}(P_{Y_t\mid X}\|P_{Y_t\mid Z})
           =I(X;Y_t\mid Z),\label{eq:distortion}\\
    D_\pi(Z)&=\sum_t\pi_tD_t(Z),\nonumber\\
    D_\pi^*(R)&=\inf_{I(X;Z)\le R}D_\pi(Z).
\end{align}
Thus $D_\pi^*(R)$ is the smallest average compression-induced excess
log-loss available at rate $R$. For a pretext target $V_a$, let
\begin{align}
    Z_{a,R}&\in\arg\min_{I(X;Z)\le R}I(X;V_a\mid Z),\nonumber\\
    \Reg_\pi(a,R)&=D_\pi(Z_{a,R})-D_\pi^*(R).
    \label{eq:regret}
\end{align}
A fixed pretext fixes the target channel $P_{V_a\mid X}$, but its encoder is re-optimized at each rate. $\Reg_\pi(a,R)$ measures the excess loss of substituting a surrogate IB with target $V_a$ in place of the actual multitask IB defined by the evaluation tasks.

For $T\sim\pi$ independent of $X$, the task-indexed target
$Y^\dagger=(T,Y_T)$ satisfies
$I(Z;Y^\dagger)=\sum_t\pi_tI(Z;Y_t)$. The multitask objective is
therefore itself an IB objective, but $Y^\dagger$ is generally not jointly
Gaussian with $X$, even when every individual task is Gaussian.

\subsection{Gaussian Task Model and Exact Benchmark}
\label{sec:gaussian-task-model}
Assume
\begin{equation}
    X\sim\N(0,I_d),\qquad Y_t=A_tX+\epsilon_t,
    \label{eq:gaussian-task-model}
\end{equation}
where $\epsilon_t\sim\N(0,N_t)$ is independent of $X$ and
$N_t\succ0$. No coupling between task noises is needed. Define
\begin{align}
    H_t&=A_t^\top N_t^{-1}A_t,
    &C_t&=H_t(I+H_t)^{-1},\label{eq:task-information-matrices}\\
    \overline H&=\sum_t\pi_tH_t,
    &\overline C&=\sum_t\pi_tC_t.
    \label{eq:average-information-matrices}
\end{align}
We assume $\overline H\succ0$, but otherwise our results hold when restricting to the task-relevant subspace $\operatorname{span}_t\operatorname{range}(H_t)$.

Our first result shows that the multitask GIB is solved by a representation which is jointly Gaussian with $X$.
\begin{proposition}[Gaussian reduction]
\label{prop:gaussian}
Under \eqref{eq:gaussian-task-model} and $\overline H\succ0$, for every
finite $R\ge0$,
\begin{align}
    D_\pi^*(R)&=\min_{K\in\mathcal K_R}D_\pi(K),\label{eq:program}\\
    D_\pi(K)&=\frac12\sum_t\pi_t
       \log_2\det(I+H_t^{1/2}KH_t^{1/2}),
       \label{eq:gaussian-covariance-objective}\\
    \mathcal K_R&=\{0\prec K\preceq I:
       -\tfrac12\log_2\det K\le R\}.
       \label{eq:gaussian-feasible-set}
\end{align}
The minimizer $K_R$ exists, is unique, and exhausts the rate budget:
\begin{equation}
    -\tfrac12\log_2\det K_R=R.
    \label{eq:gaussian-rate-exhaustion}
\end{equation}
It is realized by
\begin{equation}
    Z=(K_R^{-1}-I)^{1/2}X+\xi,\qquad
    \xi\sim\N(0,I_d),\quad \xi\perp X,
    \label{eq:gaussian-optimal-encoder}
\end{equation}
for which $\Cov(X\mid Z)=K_R$ and $I(X;Z)=R$.
\end{proposition}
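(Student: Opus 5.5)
We reduce the problem to a single conditional covariance $K=\Cov(X\mid Z)$: first for Gaussian encoders, then for arbitrary ones via Gaussian extremality, and finally settle existence, uniqueness and rate exhaustion by strict convexity and monotonicity.

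\textbf{Gaussian encoders.} Suppose $Z$ is jointly Gaussian with $X$, and let $K=\Cov(X\mid Z)$. Then $0\prec K\preceq I$, and
\[
I(X;Z)=\tfrac12\log_2\det(I)-\tfrac12\log_2\det K=-\tfrac12\log_2\det K .
\]
Next we compute $D_t(Z)=I(X;Y_t\mid Z)$. Given $Z$, the input satisfies $X\sim\N(\hat X,K)$ and $Y_t=A_tX+\epsilon_t$, so
\[
I(X;Y_t\mid Z)=\tfrac12\log_2\det(N_t+A_tKA_t^\top)-\tfrac12\log_2\det N_t=\tfrac12\log_2\det(I+N_t^{-1/2}A_tKA_t^\top N_t^{-1/2}).
\]
By Sylvester's identity $\det(I+MN)=\det(I+NM)$, this equals $\tfrac12\log_2\det(I+K^{1/2}H_tK^{1/2})$, which is \eqref{eq:gaussian-covariance-objective} after one more application of Sylvester. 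For realizability, take \eqref{eq:gaussian-optimal-encoder} with $B=(K^{-1}-I)^{1/2}$. Then
\[
\Cov(X\mid Z)=(I+B^2)^{-1}=K .
\]
When $K$ has eigenvalues equal to $1$, $B$ is singular; the formula still holds, since $(I+B^\top B)^{-1}$ is the standard posterior covariance. Hence every $K\in\mathcal K_R$ is achieved, so $D_\pi^*(R)\le\min_{\mathcal K_R}D_\pi(K)$.

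\textbf{Converse for arbitrary encoders.} This is the main obstacle. Write
\[
D_t(Z)=I(X;Y_t)-I(Z;Y_t),\qquad I(Z;Y_t)=h(Y_t)-h(Y_t\mid Z).
\]
The difficulty is that the objective is a weighted sum over several $t$ of $h(Y_t\mid Z)$, subject to a constraint on $h(X\mid Z)$. We need the whole sum to be minimized by Gaussian $X\mid Z$, not each term separately. The plan is to invoke the Gaussian extremality result of \citet{liu2017informationtheoreticperspectivesbrascampliebinequality}, the Brascamp--Lieb/forward-channel form. It states that for the functional
\[
\lambda\, h(X\mid Z)-\sum_t\pi_t h(Y_t\mid Z),
\]
where the $Y_t$ are Gaussian linear channels of $X$, the infimum over $P_{Z\mid X}$ with fixed Gaussian $X$ is attained by Gaussian $Z$. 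That is, the Lagrangian $\sum_t\pi_tI(X;Y_t\mid Z)+\lambda I(X;Z)$ has a Gaussian minimizer for every $\lambda\ge0$.

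To pass from the Lagrangian to the constrained problem, we use the convexity of $D_\pi^*(R)$ in $R$. Convexity follows from time-sharing, that is, by appending an independent switch variable to $Z$. Supporting lines of a convex function then recover every point of its graph. At any point where the Lagrangian's Gaussian minimizer is not unique in rate, we combine this with continuity.

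If invoking the cited result directly proves awkward, a fallback is available. Conditioned on $Z=z$, the covariance $K_z$ of $X$ given $z$ gives the bound
\[
h(Y_t\mid Z=z)\le\tfrac12\log\det(2\pi e(A_tK_zA_t^\top+N_t)),
\]
but this bound has the wrong sign for a converse. That is why the extremality theorem is genuinely needed.

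\textbf{Existence, uniqueness, exhaustion.} The set $\mathcal K_R$ is compact: since $K\preceq I$ and $\det K\ge 2^{-2R}$, the eigenvalues of $K$ are bounded below by $2^{-2R}$, and the constraint is closed. The set is also convex, because $-\log\det$ is convex. The objective $D_\pi$ is continuous, so a minimizer exists.

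For uniqueness, $K\mapsto\log\det(I+H_t^{1/2}KH_t^{1/2})$ is concave, not convex. So we reparametrize by $J=K^{-1}$ (posterior precision), with constraint
\[
I\preceq J,\qquad \log\det J\le 2R .
\]
The objective becomes
\[
\log\det(J+H_t)-\log\det J .
\]
Minimizing $\sum_t\pi_t\log\det(J+H_t)-\log\det J$ over this set is equivalent to maximizing $\sum_t\pi_t I(Z;Y_t)$. To obtain a convex program, we substitute $Q=I-K$, so that $I(Z;Y_t)$ is concave in the Fisher-type parametrization, or we work directly with $\overline C$-type expressions.

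Rate exhaustion follows because $D_\pi(K)$ is strictly increasing in $K$ in the Loewner order whenever $\overline H\succ0$. Indeed, if $\det K>2^{-2R}$, then shrinking $K$ toward $(1-\delta)K$ stays feasible and strictly lowers the objective. Uniqueness is the step I would verify most carefully. The plan is to show strict concavity of $\sum_t\pi_tI(Z;Y_t)$ in $J$ along the active constraint surface, using $\overline H\succ0$. If that fails, uniqueness would instead be argued from the Lagrangian stationarity equation
\[
\sum_t\pi_t (J+H_t)^{-1}=(1-\lambda)J^{-1},
\]
by showing that it determines $J$ via a monotone map.
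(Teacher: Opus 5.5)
Your Gaussian-realization computation, compactness argument for existence, and rate-exhaustion argument via strict Loewner monotonicity all match the paper. The proof breaks at the two steps you flag, and for one common reason: you never establish any convexity of the covariance program itself. In Euclidean coordinates none is available, since $D_\pi$ is concave in $K$.

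\emph{Passing from the Lagrangian to the constrained problem.} Time-sharing makes $D_\pi^*$ convex, and the extremality lemma gives $\inf_Z\{D_\pi(Z)+\beta I(X;Z)\}=\inf_K\{D_\pi(K)+\beta\rho(K)\}$ with $\rho(K)=-\tfrac12\log_2\det K$. Together these only show that $D_\pi^*$ equals the lower convex envelope of the Gaussian curve $g(R)=\min_{K\in\mathcal K_R}D_\pi(K)$. Time-sharing two Gaussian encoders yields a Gaussian mixture, not a Gaussian encoder, so convexity of $D_\pi^*$ says nothing about $g$. Wherever $g$ failed to be convex, the supporting line of $D_\pi^*$ at $R$ would touch the Gaussian curve only at other rates. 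That would give $D_\pi^*(R)<g(R)$, and no continuity argument in $\beta$ produces a Gaussian encoder at rate $R$. What you need instead is strong duality for the covariance program (equivalently, convexity of $g$): a $\beta_R>0$ such that $K_R$ minimizes $D_\pi(K)+\beta_R\rho(K)$ over all $0\prec K\preceq I$. Then every encoder with $I(X;Z)\le R$ satisfies $D_\pi(Z)\ge D_\pi(K_R)+\beta_R\bigl(R-I(X;Z)\bigr)\ge D_\pi(K_R)$, and convexity of $D_\pi^*$ is never needed.

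\emph{Uniqueness.} Your plan fails for the same reason. The substitution $Q=I-K$ is affine, so $\sum_t\pi_tI(Z;Y_t)$ remains convex in $Q$, and you would be maximizing a convex function. In $J=K^{-1}$ (or the Fisher-type $J-I$) the difficulty moves into the constraint. The set $\{J\succeq I,\ \log\det J\le 2R\}$ is a sublevel set of the concave $\log\det$ and is not convex: for $M>1$, $\diag(M,1)$ and $\diag(1,M)$ both have rate $\tfrac12\log_2M$, while their midpoint has determinant $(M+1)^2/4>M$. Your fallback stationarity equation $\sum_t\pi_t(J+H_t)^{-1}=(1-\lambda)J^{-1}$ also drops the multiplier for $K\preceq I$. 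That multiplier is active whenever a direction is left unencoded; at $R_1$ in the example of Section~\ref{sec:obstruction_example}, all rate goes to mode~1.

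\emph{The missing idea.} The paper uses it for both steps: convexity along affine-invariant geodesics $K_0\#_sK_1=K_0^{1/2}(K_0^{-1/2}K_1K_0^{-1/2})^sK_0^{1/2}$.
\begin{itemize}
\item Along these geodesics $\det$ interpolates geometrically, so $\rho$ is affine.
\item $\mathcal K_R$ is geodesically convex by monotonicity of the geometric mean.
\item Along $K^{1/2}e^{uW}K^{1/2}$, each term $\ln\det(I+H_t^{1/2}KH_t^{1/2})$ has second derivative $\tr(CW^2)-\tr(CWCW)=\sum_{i,j}c_i(1-c_j)W_{ij}^2\ge0$ at $u=0$. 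Here $C=B_t(I+B_t)^{-1}$ with $B_t=K^{1/2}H_tK^{1/2}$, written in an eigenbasis of $C$.
\item Term $t$ vanishes only if $B_tW=0$. Hence the $\pi$-weighted sum is strictly positive for $W\neq0$, because $\overline H\succ0$.
\end{itemize}
Strict geodesic convexity gives uniqueness through the geodesic midpoint of two minimizers. Geodesic convexity of $D_\pi+\beta\rho$ turns the KKT conditions at $K_R$ into a global certificate. Those conditions are available from the Slater point $\alpha I$, with $\beta_R>0$ forced by $\nabla D_\pi(K_R)\succ0$. This certificate is exactly the supporting multiplier required above.
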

Gaussian extremality
\citep[Theorem~15]{liu2017informationtheoreticperspectivesbrascampliebinequality}
reduces optimization over arbitrary encoders to this covariance program. The proof and optimality conditions are in Appendix~\ref{app:baseline}.
This supplies an exact benchmark in the model, so failures of a pretext can be attributed to the information it prioritizes, through $K_R$.

\section{Task Knowledge and Pretext Choice}
\label{sec:taskknowledge}
We next consider how weak task knowledge informs how one should spend a limited representation budget. Consider two types of objectives: one concentrates on a strict subspace (a ``specialist''), while the other preserves information about the entire input (a ``generalist''). For example, the former could include a suite of supervised attribute prediction tasks, while the latter could include broader reconstructive, contrastive, or JEPA-style objectives \citep{assran2023}.

Throughout, $X\sim\N(0,I_d)$, $D_\pi$ denotes the evaluation distortion from Proposition~\ref{prop:gaussian}, and $\overline C$ is as defined in (\ref{eq:average-information-matrices}). $\overline C$ measures the sensitivity of distortion to rate allocation near zero rate.

\subsection{When Specialists Beat a Generalist}
\label{sec:crossovers}
Let $S$ be a $k$-dimensional subspace, $1\le k<d$, with orthonormal
basis $u_1,\ldots,u_k$ and projector $P$. Consider equally weighted
specialist pretexts
\begin{equation}
V_i=\sqrt h\,u_i^\top X+\epsilon_i,\qquad
\epsilon_i\sim\N(0,1),\quad h>0,
\label{eq:specialistpretexts}
\end{equation}
whose noises are independent of $X$. This defines a multitask GIB problem which the specialist solves by allocating $R/k$ bits to each direction in $S$.

Next, let the generalist use the pretext $V_g=X+\eta$ with independent
$\eta\sim\N(0,\sigma_g^2I)$, $\sigma_g^2>0$.
Its optimal allocation is isotropic: each direction receives $R/d$ bits. The resulting residual covariances are
\begin{align}
K_s(R)&=2^{-2R/k}P+(I-P),\nonumber\\
K_g(R)&=2^{-2R/d}I.
\label{eq:focuscoverage}
\end{align}
The following proposition establishes that one can expect a crossover in the relative performance of the two objectives as rate increases.
\begin{proposition}[Specialist--generalist crossover]
\label{prop:crossovers}
Suppose the specialist span $S$ has above-average evaluation relevance per
dimension:
\begin{equation}
\Delta=\frac{\tr(P\overline C)}{k}
-\frac{\tr(\overline C)}{d}>0.
\label{eq:specialistalignment}
\end{equation}
Suppose also that it omits evaluation-relevant information, so
\begin{equation}
D_{\rm miss}:=D_\pi(I-P)>0.
\label{eq:missingdistortion}
\end{equation}
Then $D_\pi(K_s(R))<D_\pi(K_g(R))$ for all sufficiently small
$R>0$, whereas the reverse strict inequality holds for all sufficiently
large $R$. In particular, the curves cross at a finite positive rate.
\end{proposition}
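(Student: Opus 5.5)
The plan is to compare the two distortion curves in two asymptotic regimes: a first-order expansion near $R=0$ and the limits as $R\to\infty$. In both regimes we use the explicit covariance objective $D_\pi(K)=\frac12\sum_t\pi_t\log_2\det(I+H_t^{1/2}KH_t^{1/2})$ from Proposition~\ref{prop:gaussian}, evaluated at $K_s(R)$ and $K_g(R)$ from \eqref{eq:focuscoverage}.

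\emph{Low rate.} Both curves start at the same point, since $K_s(0)=K_g(0)=I$. Write $K_s(R)=I-(1-2^{-2R/k})P$ and $K_g(R)=I-(1-2^{-2R/d})I$. The derivative of $K\mapsto\frac12\log_2\det(I+H_t^{1/2}KH_t^{1/2})$ at $K=I$ in direction $E$ is $\frac{1}{2\ln 2}\tr\bigl((I+H_t)^{-1}H_tE\bigr)=\frac{1}{2\ln2}\tr(C_tE)$, using the identity $\det(I+H^{1/2}KH^{1/2})=\det(I+KH)$. Averaging over $t$ and using $1-2^{-2R/k}=\frac{2\ln 2}{k}R+O(R^2)$ gives
\[
D_\pi(K_s(R))=D_\pi(I)-\tfrac{R}{k}\tr(P\overline C)+O(R^2),\qquad D_\pi(K_g(R))=D_\pi(I)-\tfrac{R}{d}\tr(\overline C)+O(R^2).
\]
The difference is therefore $D_\pi(K_s(R))-D_\pi(K_g(R))=-\Delta R+O(R^2)$, which is negative for small $R>0$ by \eqref{eq:specialistalignment}. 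Since both maps are smooth (indeed analytic) in $R$, the $O(R^2)$ remainder is legitimate. This step is routine.

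\emph{High rate.} As $R\to\infty$, $K_g(R)\to0$, so $D_\pi(K_g(R))\to D_\pi(0)=0$ by continuity of the log-det on $K\succeq0$. Meanwhile $K_s(R)\to I-P$, so $D_\pi(K_s(R))\to D_\pi(I-P)=D_{\rm miss}>0$ by \eqref{eq:missingdistortion}. Here we extend $D_\pi$ continuously to the positive semidefinite matrix $I-P$, which is what the definition of $D_{\rm miss}$ presupposes. Hence for all sufficiently large $R$ we have $D_\pi(K_g(R))<D_{\rm miss}/2<D_\pi(K_s(R))$. A cleaner route is to use monotonicity: $K_s(R)\succeq I-P$ and $D_\pi$ is monotone in $K$ in the Loewner order, so $D_\pi(K_s(R))\ge D_{\rm miss}$ for every $R$. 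Then we only need $D_\pi(K_g(R))<D_{\rm miss}$ eventually, which follows since $D_\pi(K_g(R))\downarrow0$.

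\emph{Crossing.} The difference $f(R)=D_\pi(K_s(R))-D_\pi(K_g(R))$ is continuous on $(0,\infty)$, negative near $0$, and positive for large $R$. By the intermediate value theorem it has a zero at some finite $R^*>0$.

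The main, though mild, obstacle is the low-rate step: justifying the first-order expansion uniformly and correctly identifying the derivative as $\overline C$ rather than $\overline H$. I would handle it by differentiating $\log\det(I+KH_t)$ directly at $K=I$, using $(I+H_t)^{-1}H_t=C_t$, and noting that the Taylor remainder is controlled because everything is analytic in the scalars $2^{-2R/k}$ and $2^{-2R/d}$.
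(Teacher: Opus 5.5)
Your proposal is correct and follows the paper's proof essentially step for step. You use the same three ingredients: the differential $\tr(\overline C E)/(2\ln 2)$ of $D_\pi$ at $K=I$, which gives the $\Delta R+O(R^2)$ low-rate gap; continuity of the log-determinant at the positive semidefinite limits $I-P$ and $0$ for the high-rate ordering; and the intermediate value theorem for the crossing. Your monotonicity remark ($K_s(R)\succeq I-P$, hence $D_\pi(K_s(R))\ge D_{\rm miss}$ at every rate) is also what the paper uses to derive its explicit sufficient high-rate threshold.
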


The mechanism follows from their initial slopes and limiting losses:
\begin{align}
D_\pi(K_g(R))-D_\pi(K_s(R))
&=\Delta R+O(R^2),\nonumber\\
D_\pi(K_s(R))&\longrightarrow D_{\rm miss},\nonumber\\
D_\pi(K_g(R))&\longrightarrow0.
\label{eq:crossovermechanism}
\end{align}
Specialists obtain more downstream benefit per bit initially, but their
limited coverage creates a loss floor. Condition~\eqref{eq:specialistalignment} relates pretraining directions to evaluation relevance, but actual pretext and
evaluation targets need not coincide. Condition~\eqref{eq:missingdistortion}
is equivalent to $\tr((I-P)\overline H)>0$. We illustrate Proposition \ref{prop:crossovers} in Figure \ref{fig:specialist_vs_generalist_synth}, and its full proof is in Appendix~\ref{app:knowledgeproofs}.

\begin{figure}[t]
    \centering
    \includegraphics[width=0.7\linewidth]{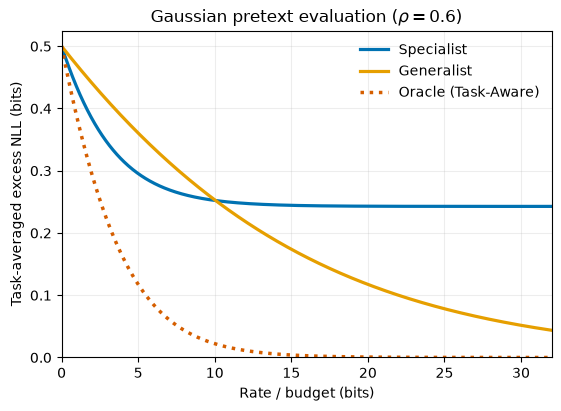}
    \caption{Illustration of Proposition \ref{prop:crossovers}. Here, $k=4$, $d=16$, and the evaluation tasks span a $k$-dimensional subspace for which squared cosines of the principal angles with $S$ all equal $\rho = 0.6$. SNR of all tasks equals 1, including the specialist tasks, reconstruction, and the evaluation tasks.}
\label{fig:specialist_vs_generalist_synth}
\end{figure}

This crossover demonstrates that transferability of certain pretexts is rate-dependent, but does not establish that every fixed Gaussian pretext must lead to suboptimal representations for the multitask problem. We address the stronger question next.

\section{Reversals in Optimal Allocation Priorities}
\label{sec:obstruction}
Increasing the information budget can change the relative priority of input factors, even when the task suite remains fixed. Directions serving frequent tasks can receive more rate initially, while directions serving rare but more precise tasks can receive more at higher budgets. We first exhibit this reversal and quantify its consequences for fixed Gaussian pretexts, whose allocation ordering cannot change with rate (Section \ref{sec:single-task-ib}). We then characterize a rate-dependent pretext that matches the multitask optimum.

\subsection{Example: task frequency versus precision}
\label{sec:obstruction_example}
We can illustrate the obstruction with a simple two-task example: one task is common and requires moderate precision, while the other is rare but supports much finer prediction. For $M > 1$, define rank-one task matrices
\begin{multline}
    H_1=M^2e_1e_1^\top,\quad H_2=M^4e_2e_2^\top,\\
(\pi_1,\pi_2)=(1-M^{-1},M^{-1}).
\label{eq:diagonalexample}
\end{multline}
Their optimal covariance is diagonal. Writing $p=1-M^{-1}$ and $q=M^{-1}$,
its distortion is the minimum over $0\le r\le R$ of
\begin{multline}
F_R(r)=\tfrac p2\log_2(1+M^2 2^{-2r}) \\
+\tfrac q2\log_2(1+M^4 2^{-2(R-r)}).
\label{eq:diagonalf}
\end{multline}
For $M>3$, at $R_1=\log_2 M$ all rate goes to mode~1: its marginal
benefit at $r=R_1$ is $p/2>q$, the maximum marginal benefit of mode~2.
At $R_2=4\log_2 M$, mode~2 instead receives more rate. At equal allocation
its marginal benefit is $q/2$, exceeding $p/(1+M^2)$ for mode~1. See Appendix~\ref{app:gap} for this section's calculations.

\begin{figure}[t]
    \centering
    \includegraphics[width=0.75\linewidth]{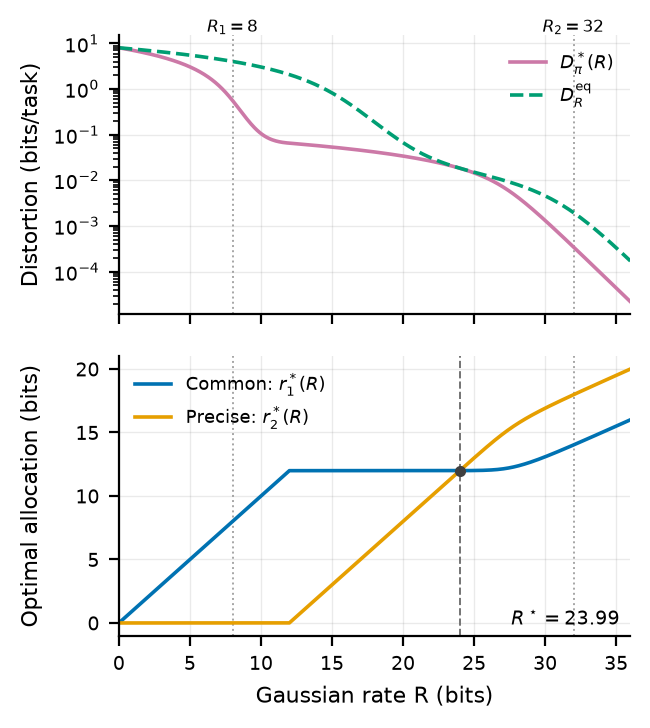}
    \caption{Rate--distortion gap (top) and optimal rate allocations (bottom) for the two-task example with $M=256$. Dotted vertical lines mark $R_1=8$ and $R_2=32$, and the dashed line marks the exact reversal $R^* \approx 24$.}
    \label{fig:reversal}
\end{figure}

The mechanism is \textit{task frequency} versus \textit{precision}. At low rate, a direction's marginal value saturates at its task weight. At high rate, residual signal strength matters: here $qM^4>pM^2$. Neither allocation decreases with total rate, and the active supports remain nested. What reverses is their \emph{relative priority}, which does not happen in standard single-target GIB.

We can also compute the relative suboptimality of approximating the multitask GIB's solutions by those of a single-target GIB. Let $D_R^{\rm eq}=F_R(R/2)$ and $g_R=D_R^{\rm eq}-D_\pi^*(R)$. For this two-dimensional family, every fixed Gaussian pretext, including one in a different basis, has the wrong coordinate ordering at one of the two rates. Its distortion there is at least $D_R^{\rm eq}$, so
\begin{equation}
\inf_{H\succ0}\max_{j=1,2}
\frac{\Reg_\pi(H,R_j)}{D_\pi^*(R_j)}
\ge \min_{j=1,2}\frac{g_{R_j}}{D_\pi^*(R_j)}.
\label{eq:examplerelative}
\end{equation}
For $M=256$, we obtain a lower bound exceeding $4.67$, meaning every fixed Gaussian pretext has distortion exceeding
$5.67$ times the optimum at at least one of the rates $R_1 = 8$ or $R_2 = 32$. This gap is visualized in Figure \ref{fig:reversal}.

More generally, for commuting tasks, the marginal value of a bit at rate $r$ is
\begin{equation}
m_i(r)=\sum_t\pi_t\frac{h_{ti}2^{-2r}}{1+h_{ti}2^{-2r}}.
\label{eq:marginal}
\end{equation}
These curves can cross, while for one Gaussian
target, their ordering is fixed by its eigenvalues. The example shows that this obstruction can entail a large relative loss for certain standard choices of pretext.

\subsection{Rate-Dependence of Optimal Pretext Geometry}
While a rate-independent optimal Gaussian pretext need not exist, a rate-dependent one always does. That is, at a given rate $R$, there exists a single-task GIB problem sharing the same solution as the multitask problem. This suggests that multitask pretraining may be unnecessary if the objective is designed with deployment bitrate in mind.

Here, we consider Gaussian pretexts defined by additive corruption,
\begin{equation}
    V_a=X+\eta_a,
    \qquad
    \eta_a\sim\N(0,\Sigma_a),
    \qquad
    \eta_a\perp X,
\end{equation}
where $\Sigma_a\succ0$. Choosing a pretext amounts to choosing the
directions and strengths of corruption. Its information matrix is
$H_a=\Sigma_a^{-1}$.

The following result establishes that for any task family $\{H_t,\pi_t\}$ and rate $R$, the corresponding IB problem shares the same solutions as a certain single-task IB problem with a covariance depending on $K_R$. Refer to Section \ref{sec:setup} for definitions and to Appendix~\ref{app:matched} for the proof.

\begin{theorem}[Downstream-matched pretext]
\label{thm:matched}
Under the assumptions of Proposition~\ref{prop:gaussian}, suppose that
$\overline H\succ0$. For $R>0$, define
\begin{align}
    G_R
    &=
    \sum_t\pi_t
    \left[H_t - H_t(K_R^{-1}+H_t)^{-1}H_t \right],
    \label{eq:GR}\\
    \Sigma_a^*(R)
    &=
    G_R^{-1}-K_R.
    \label{eq:matched}
\end{align}
Then
\begin{equation}
    0\prec G_R\prec K_R^{-1},
    \qquad
    \Sigma_a^*(R)\succ0.
\end{equation}
Moreover, $K_R$ is a global optimum of the rate-constrained Gaussian
pretext problem
\begin{multline}
    K_R\in
    \argmin_{K\in\mathcal K_R}
    \frac12\log_2\det\!\Big(
        I+\\ \Sigma_a^*(R)^{-1/2}
        K
        \Sigma_a^*(R)^{-1/2}
    \Big).
\end{multline}
\end{theorem}
Many Gaussian pretexts can induce a prescribed covariance (e.g., by scaling the noise by a constant), so the statement is not about uniqueness. The takeaway from Theorem~\ref{thm:matched} is the matching rule \eqref{eq:matched} and its dependence on rate.

Specializing Theorem \ref{thm:matched} to low- and high- rate limits illustrates how the optimal pretext geometry changes with rate. Recalling the saturated task matrix $\overline C = \sum_t \pi_t C_t$ from (\ref{eq:average-information-matrices}), if $\overline C$ has a simple largest eigenvalue with unit eigenvector $v$, then
\begin{equation}
    \frac{I-K_R}{2(\ln2)R}
    \longrightarrow vv^\top
    \qquad\text{as }R\downarrow0.
    \label{eq:low}
\end{equation}
On the other hand, at high rate,
\begin{equation}
    \frac{K_R}{c_R}
    \longrightarrow \overline H^{-1},
    \quad
    c_R=(\det\overline H)^{1/d}2^{-2R/d},
    \quad R\to\infty.
    \label{eq:high}
\end{equation}
These endpoint expansions, calculated in Appendix \ref{app:endpoint-expansions}, determine the limiting matched pretexts.

\begin{corollary}[Limiting matched pretexts]
\label{cor:limits}
Under the assumptions of Theorem~\ref{thm:matched},
\begin{equation}
    \Sigma_a^*(R)
    \longrightarrow
    \begin{cases}
        \overline C^{-1}-I,
        & R\downarrow0,\\[3pt]
        \overline H^{-1},
        & R\to\infty.
    \end{cases}
    \label{eq:matched-limits}
\end{equation}
\end{corollary}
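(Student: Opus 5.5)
Both limits follow by substituting the endpoint expansions \eqref{eq:low} and \eqref{eq:high} into the definitions \eqref{eq:GR} and \eqref{eq:matched}. We need $G_R$ to converge, after the appropriate normalization, to an invertible matrix, so that $\Sigma_a^*(R)=G_R^{-1}-K_R$ converges.

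\emph{Low rate.} From \eqref{eq:low} we have $K_R\to I$, since $I-K_R=O(R)$. Because $K\mapsto H_t-H_t(K^{-1}+H_t)^{-1}H_t$ is continuous on $K\succ0$, we get
\[
G_R\to\sum_t\pi_t\left[H_t-H_t(I+H_t)^{-1}H_t\right].
\]
The bracket simplifies to $H_t(I+H_t)^{-1}=C_t$: the matrices commute, and $H_t-H_t^2(I+H_t)^{-1}=H_t(I+H_t)^{-1}\bigl[(I+H_t)-H_t\bigr]$. Hence $G_R\to\overline C$. Since $\overline H\succ0$, for any $x\neq0$ some $t$ has $x^\top H_tx>0$, so $\overline C\succ0$. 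Matrix inversion is continuous at invertible matrices, so $G_R^{-1}\to\overline C^{-1}$ and $\Sigma_a^*(R)\to\overline C^{-1}-I$. Only $K_R\to I$ is needed here, not the rank-one direction $v$. If we prefer not to assume a simple top eigenvalue, $K_R\to I$ follows directly from the rate constraint: $0\prec K_R\preceq I$ and $-\tfrac12\log_2\det K_R=R\to0$ force every eigenvalue to $1$.

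\emph{High rate.} Here $K_R\to0$, so $G_R\to\overline H$ and $G_R^{-1}\to\overline H^{-1}$. Again this needs only $\|K_R\|\to0$, not the full asymptotic shape in \eqref{eq:high}. To verify the convergence, I will rewrite the bracket via Woodbury. For $K\succ0$,
\[
H_t-H_t(K^{-1}+H_t)^{-1}H_t=H_t^{1/2}\bigl(I+H_t^{1/2}KH_t^{1/2}\bigr)^{-1}H_t^{1/2},
\]
which is the form that is continuous in $K$ down to $K=0$, where it equals $H_t$. Therefore $G_R\to\overline H\succ0$, and $\Sigma_a^*(R)=G_R^{-1}-K_R\to\overline H^{-1}-0$.

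The main obstacle is justifying $K_R\to0$ as $R\to\infty$. The rate constraint only forces $\det K_R\to0$; by itself it does not prevent a single eigenvalue from staying bounded away from zero. I would take this from \eqref{eq:high}, which gives $K_R\approx c_R\overline H^{-1}$ with $c_R\to0$. As a self-contained backup, I would argue by optimality. Suppose some eigenvalue of $K_R$ did not vanish along a sequence. Compare with the feasible isotropic choice $K=2^{-2R/d}I$, whose distortion tends to $0$. Then $D_\pi(K_R)\to0$ would force $H_t^{1/2}K_RH_t^{1/2}\to0$ for every $t$. This gives $\tr(\overline HK_R)\to0$, and since $\overline H\succ0$, $K_R\to0$, a contradiction.
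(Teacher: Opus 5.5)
Your proposal is correct and follows the paper's proof. Both arguments use continuity of $G(K)$ at $K=I$ and $K=0$ (the paper writes it as $\sum_t\pi_tA_t^\top(N_t+A_tKA_t^\top)^{-1}A_t$, you as $\sum_t\pi_tH_t^{1/2}(I+H_t^{1/2}KH_t^{1/2})^{-1}H_t^{1/2}$), with $K_R\to I$ from the rate constraint and $K_R\to0$ from \eqref{eq:high}, then invert at $\overline C\succ0$ and $\overline H\succ0$. Your extra steps are valid and fill in points the paper leaves implicit: the positivity of $\overline C$, and the comparison with $2^{-2R/d}I$ that gives $K_R\to0$ without the full high-rate asymptotics.
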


At low rate, the marginal value of a direction is governed by
\[
    \overline C
    =
    \sum_t\pi_tH_t(I+H_t)^{-1}.
\]
The transformation $H_t\mapsto H_t(I+H_t)^{-1}$ saturates strong task
signals, so making an already predictable task still more precise has
limited effect on the first allocated bits. At high rate, the remaining
loss is instead
\[
    D_\pi(K)
    =
    \frac{\tr(\overline HK)}{2\ln2}
    +o(\|K\|),
\]
and task precision enters linearly. Consequently, the appropriate
pretext can change the relative priority of its directions across
budgets.

\section{When a Fixed Pretext Suffices}
\label{sec:bounded-snr}
The preceding example shows that a fixed Gaussian pretext can incur large
relative regret, but uses tasks with widely separated, high signal-to-noise
ratios (SNRs). We now give a complementary positive result: bounded task SNR
limits the excess distortion of a certain fixed pretext, uniformly over all rates.

Define the Specht ratio
\begin{equation}
S(\kappa)=\frac{(\kappa-1)\kappa^{1/(\kappa-1)}}{e\ln\kappa},
\qquad \kappa>1,\qquad S(1)=1.
\label{eq:specht-ratio}
\end{equation}
\begin{theorem}[Bounded-SNR fixed-pretext approximation]
\label{thm:bounded-snr}
Under Proposition~\ref{prop:gaussian}, suppose
$0\preceq H_t\preceq LI$ for every task and
$\overline H=\sum_t\pi_tH_t\succ0$.
Let $K_R^{\overline H}$ denote the rate-$R$ optimal covariance for the
single Gaussian pretext with information matrix $\overline H$.
For every finite $R\ge0$,
\begin{equation}
D_\pi(K_R^{\overline H})
\le S(1+L)D_\pi^*(R).
\label{eq:bounded-snr-factor}
\end{equation}
Consequently,
\begin{equation}
\inf_{H\succ0}\sup_{0\le R<\infty}
\frac{\Reg_\pi(H,R)}{D_\pi^*(R)}
\le S(1+L)-1.
\label{eq:bounded-snr-regret}
\end{equation}
\end{theorem}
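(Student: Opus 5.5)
The proof is a sandwich argument around a single surrogate objective, which is exactly the one the pretext $\overline H$ optimizes. For $K\in\mathcal K_R$ define
\[
\Phi(K)=\tfrac12\log_2\det\bigl(I+K^{1/2}\overline H K^{1/2}\bigr).
\]
Since $\det(I+H_t^{1/2}KH_t^{1/2})=\det(I+K^{1/2}H_tK^{1/2})$, we can write
\[
D_\pi(K)=\tfrac12\sum_t\pi_t\log_2\det(A_t(K)),\qquad A_t(K)=I+K^{1/2}H_tK^{1/2},
\]
and then $\Phi(K)=\tfrac12\log_2\det\bigl(\sum_t\pi_tA_t(K)\bigr)$. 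By Proposition~\ref{prop:gaussian} applied to the single Gaussian target with information matrix $\overline H$, the pretext covariance $K_R^{\overline H}$ minimizes $\Phi$ over $\mathcal K_R$. With $K_R$ the multitask minimizer from Proposition~\ref{prop:gaussian}, I would establish the chain
\[
D_\pi(K_R^{\overline H})\;\le\;\Phi(K_R^{\overline H})\;\le\;\Phi(K_R)\;\le\;S(1+L)\,D_\pi(K_R)=S(1+L)\,D_\pi^*(R).
\]

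The first inequality is concavity of $\log\det$ on positive definite matrices (Jensen), which gives $\sum_t\pi_t\log\det A_t\le\log\det\sum_t\pi_tA_t$. The second holds by the optimality of $K_R^{\overline H}$ for $\Phi$. The last equality is Proposition~\ref{prop:gaussian}. The case $R=0$ is trivial, since then $K=I$ is forced and both sides coincide. Once \eqref{eq:bounded-snr-factor} is proved, \eqref{eq:bounded-snr-regret} follows by taking $H=\overline H$, subtracting $D_\pi^*(R)$, dividing by $D_\pi^*(R)$, and taking the supremum over $R$. Here $D_\pi^*(R)>0$ for finite $R$ when $\overline H\succ0$, because $K_R\succ0$.

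Everything therefore rests on a reverse-Jensen inequality for $\log\det$ with a multiplicative constant. The hypotheses make it available: $0\preceq H_t\preceq LI$ and $0\prec K\preceq I$ give $I\preceq A_t(K)\preceq(1+L)I$, so every $A_t$ has spectrum in $[1,\kappa]$ with $\kappa=1+L$. The lemma I would prove is
\[
\log\det\Bigl(\sum_t\pi_tA_t\Bigr)\le S(\kappa)\sum_t\pi_t\log\det A_t
\quad\text{whenever } I\preceq A_t\preceq\kappa I.
\]
This is the main obstacle. The familiar Specht-type reverse arithmetic--geometric mean bounds have the form $\mathrm{AM}\le S(\kappa)\,\mathrm{GM}$, which after taking logarithms is an additive statement. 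Here I need a multiplicative comparison of the logarithms themselves, and this comparison must degrade gracefully as all $A_t\to I$.

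My first approach would be a Mond--Pe\v{c}ari\'c style reduction. Use the chord bound $\log x\ge\frac{\ln\kappa}{\kappa-1}(x-1)$ on $[1,\kappa]$ applied eigenvalue-wise to each $A_t$, which lower bounds $\sum_t\pi_t\log\det A_t$ by a trace functional of the weighted mean. Then combine it with the tangent-type upper bound on $\log\det$ of the mean $\sum_t\pi_tA_t$. Finally, optimize the resulting scalar ratio over the spectrum of the mean, which lies in $[1,\kappa]$. The extremal one-variable problem should produce exactly $\frac{(\kappa-1)\kappa^{1/(\kappa-1)}}{e\ln\kappa}$, which is the maximizer structure of the Specht ratio.

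If the direct trace reduction loses too much, I would fall back on reducing to commuting (diagonal) $A_t$. The worst case would be obtained by a majorization or unitary-invariance argument, and the scalar inequality would then be verified by concavity in each eigenvalue. That scalar step is where I expect most of the work.
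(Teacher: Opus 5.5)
\textbf{There is a genuine gap: the key lemma is false.} Your first two links are fine: $D_\pi\le\Phi$ holds by concavity of $\log\det$, and $K_R^{\overline H}$ is optimal for $\Phi$. The problem is the reverse-Jensen lemma $\log\det(\sum_t\pi_tA_t)\le S(\kappa)\sum_t\pi_t\log\det A_t$, which fails already for commuting (scalar) $A_t$. So neither your chord/tangent reduction nor the diagonal fallback can prove it. Take $a_1=1$ and $a_2=\kappa$ with weights $(1-\delta,\delta)$. Then
\[
\frac{\ln(1+\delta(\kappa-1))}{\delta\ln\kappa}\longrightarrow\frac{\kappa-1}{\ln\kappa}=S(\kappa)\,\frac{e}{\kappa^{1/(\kappa-1)}}>S(\kappa)\qquad(\delta\to0),
\]
since $\ln\kappa<\kappa-1$. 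For $\kappa=3$ this is about $1.82$, versus $S(3)\approx1.16$.

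By concavity of $\log$, $(\kappa-1)/\ln\kappa$ is the sharp constant for your inequality. It is exactly what the chord-plus-tangent argument produces, because the one-variable supremum is approached as the mean tends to $I$, not at a Specht-type interior point.

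The failure also occurs at the point your chain actually uses. Take $d=1$, $H_1=0$, $H_2=L$, $\pi=(1-\delta,\delta)$. Then $\Phi(K_R)/D_\pi(K_R)\to Lk/\ln(1+Lk)$ with $k=2^{-2R}$, and this exceeds $S(1+L)$ for small $R>0$. Yet the theorem holds trivially in this case, because $K_R^{\overline H}=K_R$.

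So your chain can give at best $D_\pi(K_R^{\overline H})\le\frac{L}{\ln(1+L)}D_\pi^*(R)$. That is a valid but weaker bound, with relative regret $L/2+O(L^2)$ instead of $L^2/8+O(L^3)$. The Jensen gap you discard in the first step is first order in $L$. It cannot be recovered at $K_R$ by a constant that exceeds $1$ only at second order.

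\textbf{The missing idea is to avoid comparing log-det objectives altogether.} The paper proceeds in four steps.
\begin{enumerate}
\item Linearize the true objective $F=2\ln2\,D_\pi$ at the multitask optimum $K_*$. Since $F$ is concave in ordinary matrix coordinates, $F(\widehat K)\le F(K_*)+\tr(G(\widehat K-K_*))$ with $G=\nabla F(K_*)$. The bounded SNR then gives $G\preceq\overline H\preceq(1+L)G$.
\item Use the fact that $\widehat K=K_R^{\overline H}$ also minimizes the \emph{linear} objective $\tr(\overline HK)$ over $\mathcal K_R$, since the water-filling solution is the same.
\item Prove a trace-comparison lemma, $\tr(G\widehat K)\le S(1+L)\tr(GK)$ for all $K\in\mathcal K_R$. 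The proof uses the geodesic first-order condition at $\widehat K$, Jensen with a change of measure, and the chord bound on $x\ln x$. This is where the Specht ratio genuinely arises.
\item Close with $\tr(GK_*)=\sum_{t,i}\pi_t\lambda_{ti}/(1+\lambda_{ti})\le F(K_*)$, where $\lambda_{ti}$ are the eigenvalues of $H_t^{1/2}K_*H_t^{1/2}$.
\end{enumerate}
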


The pretext is fixed across rates, but requires knowledge of the task suite, as it depends on the weighted average task matrix $\overline H$. Recalling Corollary \ref{cor:limits}, we see that Theorem \ref{thm:bounded-snr} bounds the cost of approximating the multitask IB solution by its high-rate limit.

The bound is independent of task count and input dimension, and applies
even when optimal directions rotate. In particular, $L=2$ gives
$S(3)\approx 1.160$, bounding regret of the optimal fixed Gaussian pretext by $16\%$. At low SNR,
\begin{equation}
S(1+L)-1=\frac{L^2}{8}+O(L^3).
\label{eq:bounded-snr-small-L}
\end{equation}
Thus the guaranteed relative loss is second order in task strength.

See Appendix~\ref{app:bounded-snr} for the proof of Theorem \ref{thm:bounded-snr}. Intuitively, weak Gaussian task channels
have nearly linear distortion in $K$, so their objectives approximately
aggregate into one task. Strong channels can depart substantially from
this approximation through saturation.

\section{Experiments}
\label{sec:experiments}
Our experiments test two claims. First, whether compression changes the ranking of pretext objectives on held-out tasks as predicted in Section \ref{sec:taskknowledge}, and second, whether direct multitask training exhibits the allocation reversals predicted in Section \ref{sec:obstruction}. Code for all experiments is available at our \href{https://github.com/ngs123/multitask_information_bottleneck}
{\textcolor{blue}{GitHub repository}}.

\subsection{Pretext Selection Under Compression}
\label{sec:experiments-selection}

\paragraph{Setup.} We use CelebA \citep{liu2015deeplearningfaceattributes} with its official train/validation/test partitions. A frozen ImageNet-pretrained ResNet-18 \citep{he2015deepresiduallearningimage} produces 512-dimensional features, reduced to $d=32$ by PCA and whitening fitted on training data. These $d$-dimensional features define the common input $X$ to the IB problems. All objectives thus receive this same set of uncompressed ``master features", with compression heads re-optimized at each budget.

The pretext bank includes supervised specialists predicting various attribute-groups, feature reconstruction, augmented-view and complementary-view prediction, and semantic--reconstruction mixtures. Our prespecified primary comparison pairs feature reconstruction under Gaussian NLL with a specialist predicting the five attributes \textit{Mouth Slightly Open, High Cheekbones, Wavy Hair, Straight Hair}, and \textit{Receding Hairline}. Evaluation equally averages the four disjoint attributes \textit{Smiling, Eyeglasses, Blond Hair}, and \textit{Young}. We fixed the specialist and evaluation attribute subsets before inspecting the rate–distortion curves and did not search over alternative subsets to obtain a crossover. Weak correlations between the disjoint pretext and evaluation attributes motivate this comparison as an empirical analogue of the task alignment studied in Proposition \ref{prop:crossovers}. We use four evaluation attributes rather than all forty provided by the dataset to keep the rate sweep tractable.

\paragraph{Compression mechanisms and evaluation.}
We compare two compression mechanisms: the direct Gaussian covariance optimization of Proposition \ref{prop:gaussian} under models fitted to task/feature statistics, and a learned $B$-bit discrete bottleneck trained on the empirical pretext loss. We directly optimize the covariance program (\ref{eq:gaussian-covariance-objective}) in the former case, and use the straight-through gradient estimator in the latter (training details are given in Appendix \ref{app:experiment-details}). Their respective rate axes are the nominal Gaussian rate $\frac12\log_2\det(I+L\Sigma_XL^\top)$ for the channel $Z = LX + \xi$, and code budget $B$. Both upper-bound retained information.

To proxy distortion $D_\pi$, we measure task-averaged excess negative log-likelihood (NLL) relative to probes trained on uncompressed features, using the same probe class and training protocol across objectives and rates. This estimates the predictive cost of compression, subject to probe approximation and estimation error. We report 95\% pointwise percentile bootstrap intervals from 2,000 paired resamples. Gaussian-channel intervals resample test images, while learned-code intervals independently resample test images and the five fitted training seeds. Appendix \ref{app:celeba-training-details} details the uncertainty estimates, crossover estimation, and code entropies.

\paragraph{Results.} In Figure \ref{fig:celeba}, we see that in both mechanisms, the specialist performs better at low budgets and the generalist overtakes it as the budget grows. Crossovers occur near $16.8$ bits for the Gaussian channel and $21.2$ bits for discrete codes, with a sign change in 100\% of resamples. The specialist curve flattens while the generalist continues improving, consistent with (\ref{eq:crossovermechanism}). The fitted proxies satisfy both conditions of Proposition \ref{prop:crossovers}: above-average evaluation relevance per specialist dimension, $\Delta=0.018>0$, and omitted evaluation information, $D_{\mathrm{miss}}=0.215>0$.

\begin{figure}[t]
    \centering
    \includegraphics[width=\linewidth]{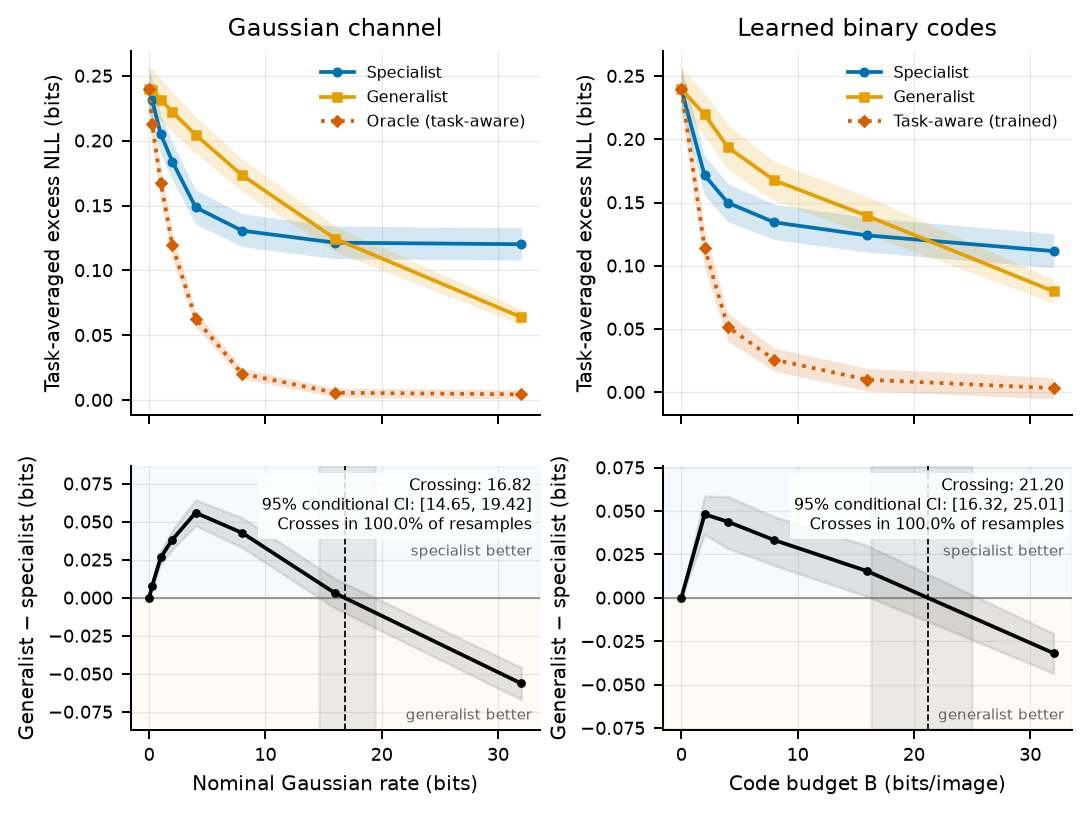}
    \caption{CelebA specialist--generalist comparison. Left: Gaussian channels under fitted task models, against nominal Gaussian rate. Right: learned discrete codes, against code budget. Top: task-averaged excess NLL. Bottom: generalist minus specialist distortion. Shading shows 95\% pointwise percentile bootstrap intervals from test-image resampling (left) and crossed resampling of test images and training seeds (right).}
    \label{fig:celeba}
\end{figure}

Across the full bank, 9 of 28 pretext pairs reverse order between the lowest and highest nonzero Gaussian budgets, and 13 of 28 reverse for discrete codes; corresponding Kendall correlations are $\tau_b=0.357$ and $0.07$. The primary crossover also persists with a one-hidden-layer MLP probe, supporting robustness beyond linear probes. Appendix \ref{app:celeba-training-details} reports full rankings, probe sensitivity, and the relationship between Gaussian proxy predictions and measured binary NLL.

\subsection{Allocation Reversal Under Direct Multitask Training}
\label{sec:experiments-allocation}

\paragraph{Setup.} We now fix the training objective to the evaluation mixture and test whether relative factor recoverability reverses with rate. This tests the hypothesis of Section \ref{sec:obstruction} that the multitask problem does not, in general, admit a rate-independent ordering of priorities across different information sources.

Using dSprites \citep{dsprites17}, we instantiate an analogous two-mode construction to that of Section \ref{sec:obstruction_example} with $M=4$. Prediction of horizontal position is a common and moderately precise task, while vertical position prediction is rare and highly precise, with weights $(0.75,0.25)$ and task SNRs $(16,256)$. We tune the SNRs by adding an appropriate amount of Gaussian noise. A convolutional encoder is then trained end-to-end on this mixture using a variational bottleneck, sweeping the compression penalty $\beta$ with three seeds per setting.

\paragraph{Rate and recoverability.} For encoder $q_\phi(z \mid x)$ and reference measure $p_0(z)$,
\begin{multline}
    \mathcal{R}_{\text{VIB}} = \mathbb{E}_X D_{\mathrm{KL}}\!\left(q_\phi(Z \mid X)\,\|\,p_0(Z)\right) \\= I(X;Z) + D_{\mathrm{KL}}\!\left(q_\phi(Z)\,\|\,p_0(Z)\right),
\end{multline}
so the achieved KL is an upper bound on retained information. In Appendix \ref{app:dsprites-training-details}, we plot achieved $\mathcal{R}_{\text{VIB}}$ in bits per image and report it per seed because achieved rate varies at fixed $\beta$.

Next, factor targets are standardized to unit variance,  and for each factor we report the \textit{recoverability index}
\begin{equation}
    \mathcal{I}_i = -\tfrac12\log_2 \mathrm{MSE}_i
\end{equation}
for a held-out probe, using an identical probe protocol at every budget. In the Gaussian model $\mathcal{I}_i$ equals the rate allocated to direction $i$, while for general factors it measures effective bits recovered.

\paragraph{Results.} Figure \ref{fig:dsprites} illustrates that both factors become more recoverable as rate increases, but their ordering reverses near $\hat R_\times=5.3$ bits, individually in all three seeds. This is the empirical analogue of Figure \ref{fig:reversal}, where $r_1^*$ and $r_2^*$ are both nondecreasing while their relative priority inverts.

The Gaussian construction of Section \ref{sec:obstruction_example} predicts a reversal at $R^\star\approx5.30$ bits. This agreement supports the proposed mechanism, but is not a calibrated rate prediction: $\mathcal{R}_{\text{VIB}}$ bounds rather than equals $I(X;Z)$, it can include nuisance factors, and $\mathcal{I}_i$ is exact only under a Gaussian model. Exchanging the factors' task roles reverses the effect (Appendix \ref{app:dsprites-training-details}), supporting the frequency-precision explanation over an intrinsic asymmetry between coordinates.

\paragraph{Scope.} The theory concerns globally optimal Gaussian multitask representations, which can reverse their allocation ordering with rate. The experiment shows that representations obtained by directly optimizing a variational multitask bottleneck from images exhibit the analogous reversal in relative factor recoverability, near the predicted rate. However, it does not certify optimality, so we describe these as task-optimized neural representations, not as empirically optimal ones. The theorem supplies the statement about optimal representations, while the experiment shows that its qualitative mechanism survives neural learning from non-Gaussian (in fact, discrete) observations.

\begin{figure}[t]
    \centering
    \includegraphics[width=0.75\linewidth]{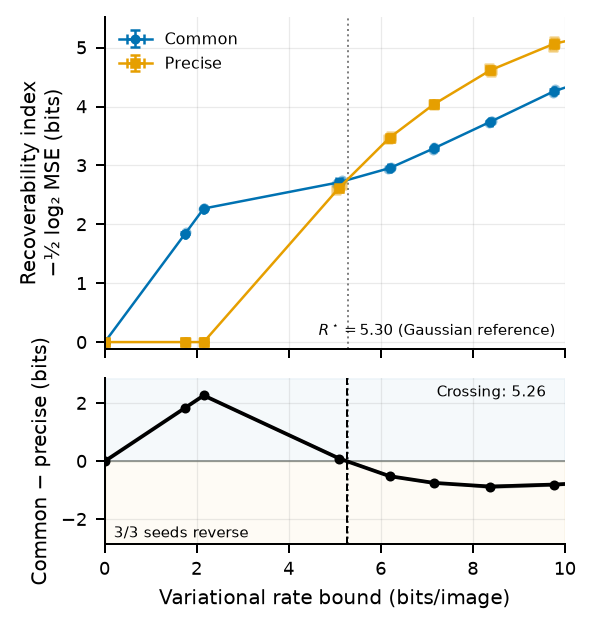}
    \caption{Factor recoverability reversal under direct multitask training on dSprites, $M=4$. Top: common/moderate-precision and rare/high-precision factors' recoverability against achieved variational rate. Points show individual seeds, vertical bars indicate recoverability spread, and horizontal bars show rate spread at fixed $\beta$. Bottom: difference in recoverability indices. The dashed vertical line marks the Gaussian prediction $R^\star\approx5.30$ bits, while the observed crossover is $\hat R_\times\approx5.26$ bits.}
    \label{fig:dsprites}
\end{figure}

\subsection{Experimental Scope}
The two experiments exhibit related but distinct reversals worth separating. Section \ref{sec:experiments-selection} demonstrates reversals within the ranking of objectives: which surrogate to optimize depends on the budget. Section \ref{sec:experiments-allocation} demonstrates reversals in the priority assigned to directions within the oracle objective's solution, showing that which information to keep depends on how much may be kept. Thus, these experiments are complementary demonstrations of rate-dependent objective rankings and rate-dependent allocation priorities.

Both experiments are limited in their own ways. Section \ref{sec:experiments-selection} fixes a backbone, so strictly it tests compression-objective selection rather than learning a complete feature extractor per budget. Section \ref{sec:experiments-allocation} uses a synthetic dataset with a deliberately constructed noisy task pair, so it demonstrates that the mechanism is realizable under neural learning, not that it is necessarily common in natural task suites.

\section{Concluding Remarks}
\label{sec:concluding-remarks}
Our results motivate evaluating representation objectives across the intended deployment bitrate range. The most direct applications concern representations compressed for transmission or storage, where the information budget is an explicit design choice. An objective that uses scarce capacity effectively may become limited by the information it omits at higher rates.

Our analysis also identifies when adapting the objective to the budget may matter most. Task frequency and precision jointly determine allocation priorities: rare tasks requiring precise predictions can receive greater priority as rate grows. In the Gaussian model, bounded task SNR limits the cost of using a rate-independent, task-aware objective, and the average task information matrix provides a principled choice. This guarantee requires knowledge of the evaluation suite; it does not establish that a generic reconstruction objective suffices. The experiments illustrate these allocation effects, while leaving their prevalence in broader task suites unresolved.

An open question is whether training limitations and architectural restrictions induce analogous allocation tradeoffs, which requires distinguishing information loss from information inaccessible to restricted decoders. Extending the analysis to weight quantization would additionally require connecting weight precision to the information retained by representations.

\bibliographystyle{plainnat}
\nocite{*} 
\bibliography{bib}

\clearpage
\appendix
\thispagestyle{empty}
\onecolumn
\aistatstitle{Appendix}

\section{Proof of the Gaussian Reduction}
\label{app:baseline}

Throughout this appendix, all logarithms appearing in information
quantities are base two unless otherwise indicated. We define
\begin{equation}
    \rho(K)
    =
    -\frac12\log_2\det K.
    \label{eq:rho-definition}
\end{equation}

\subsection{Gaussian extremality in Lagrangian form}

We use the following established Gaussian extremality result. It is the
only external extremality ingredient in the proof.

\begin{lemma}[Gaussian extremality]
\label{lem:gaussian-extremality}
Let $X\sim\N(0,I_d)$ and let
\[
    Y_t=A_tX+\epsilon_t,
    \qquad
    \epsilon_t\sim\N(0,N_t),
    \qquad
    N_t\succ0.
\]
For every $\beta>0$,
\begin{align}
    &\inf_{P_{Z\mid X}}
    \left\{
        \sum_t\pi_t I(X;Y_t\mid Z)
        +
        \beta I(X;Z)
    \right\}
    \notag\\
    &\qquad=
    \inf_{0\prec K\preceq I}
    \left\{
        \frac12\sum_t\pi_t
        \log_2\det\!\left(
            I+H_t^{1/2}KH_t^{1/2}
        \right)
        -
        \frac{\beta}{2}\log_2\det K
    \right\}.
    \label{eq:gaussian-extremality-lagrangian}
\end{align}
The right-hand side is attained by a jointly Gaussian
representation whose residual covariance is
$K=\Cov(X\mid Z)$.
\end{lemma}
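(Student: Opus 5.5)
The plan is to obtain Lemma~\ref{lem:gaussian-extremality} as a direct instance of \citep[Theorem~15]{liu2017informationtheoreticperspectivesbrascampliebinequality}. Most of the work is rewriting the Lagrangian in differential-entropy form, checking that the hypotheses match, and proving achievability separately by an explicit channel.

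\emph{Upper bound (achievability).} Fix $0\prec K\preceq I$. If $K=I$, take $Z$ independent of $X$. Otherwise take $Z=(K^{-1}-I)^{1/2}X+\xi$ with $\xi\sim\N(0,I)$ independent of $X$ (on the range of $K^{-1}-I$). Gaussian conditioning gives
\[
\Cov(X\mid Z)=\bigl(I+(K^{-1}-I)\bigr)^{-1}=K,
\qquad
I(X;Z)=-\tfrac12\log_2\det K.
\]
Since $Y_t-X-Z$ and $Y_t=A_tX+\epsilon_t$, we have $\Cov(Y_t\mid Z)=A_tKA_t^\top+N_t$. Therefore
\[
I(X;Y_t\mid Z)=\tfrac12\log_2\det\!\bigl(I+N_t^{-1/2}A_tKA_t^\top N_t^{-1/2}\bigr)=\tfrac12\log_2\det\!\bigl(I+H_t^{1/2}KH_t^{1/2}\bigr),
\]
where the last step uses Sylvester's identity $\det(I+BC)=\det(I+CB)$. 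So every value on the right-hand side of \eqref{eq:gaussian-extremality-lagrangian} is achieved, which gives ``$\le$'' and also the attainment claim once the right-hand infimum is shown to be a minimum.

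\emph{Lower bound.} Expand both terms as
\[
I(X;Y_t\mid Z)=h(Y_t\mid Z)-h(\epsilon_t),
\qquad
I(X;Z)=h(X)-h(X\mid Z).
\]
Up to constants, the objective becomes
\[
\sum_t\pi_t h(Y_t\mid Z)-\beta h(X\mid Z).
\]
This is minimized over all $P_{Z\mid X}$ with the marginal of $X$ fixed to $\N(0,I)$, and the $Y_t$ are Gaussian linear channels of $X$. This is exactly the conditional (forward) Brascamp--Lieb functional treated in Liu et al.'s Gaussian optimality theorem. That theorem asserts that the infimum is attained by a jointly Gaussian $(X,Z)$, and hence, by Gaussian conditioning, by a choice of constant conditional covariance $K=\Cov(X\mid Z)\preceq I$. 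Its proof uses the rotation/doubling argument: take two i.i.d.\ copies of an optimizer, rotate $(X_1\pm X_2)/\sqrt2$, and use that the optimum is subadditive and rotation invariant to force Gaussianity. Substituting a Gaussian $Z$ with residual covariance $K$ into the rewritten objective reproduces the right-hand side, which gives ``$\ge$''.

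\emph{Main obstacle.} The delicate part is verifying that Liu et al.'s hypotheses are met, so that the infimum is not approached only in a degenerate limit. The difficulty comes from the term $-\beta h(X\mid Z)$, which favors collapsing $K$. I would handle it in two steps. First, confine to $\overline H\succ0$-relevant directions as assumed in the paper. Second, note that on $\{0\prec K\preceq I\}$ the Gaussian objective tends to $+\infty$ as $\det K\to0$, because the $-\tfrac{\beta}{2}\log_2\det K$ term dominates the bounded-growth $\log\det(I+H_t^{1/2}KH_t^{1/2})$ terms. The Gaussian minimization therefore lives on a compact set, and a minimizer exists by continuity.

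The other point to check is the marginal constraint $\E\,\Cov(X\mid Z)\preceq I$ on the non-Gaussian side. I would invoke Liu et al.'s formulation, which already builds in the fixed Gaussian marginal of $X$, rather than attempt a per-$z$ Jensen argument. That argument fails because the Gaussian value, concave in $K$ through the $\log\det(N_t+A_tKA_t^\top)$ terms, is not convex.
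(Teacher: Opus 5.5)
Your proposal is correct and follows essentially the paper's route. You rewrite both terms in differential entropies so that encoder-independent constants cancel, invoke Liu et al.'s Theorem~15 for Gaussian optimality, and parameterize Gaussian test channels by $K=\Cov(X\mid Z)$; your explicit achievability step is the paper's Lemma~\ref{lem:gaussian-realization}, and your coercivity argument makes attainment of the right-hand infimum explicit.

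One small slip: the term $-\beta h(X\mid Z)$, i.e.\ $-\tfrac{\beta}{2}\log_2\det K$, penalizes rather than favors collapsing $K$, which is exactly why the Gaussian objective blows up as $\det K\to0$. Also, restricting to $\overline H\succ0$-relevant directions is unnecessary here, since this coercivity holds without it.
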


\begin{proof}
This is the specialization of the Gaussian extremality theorem
\citep[Theorem~15]{liu2017informationtheoreticperspectivesbrascampliebinequality} to the weighted collection of linear
Gaussian channels $(A_t,N_t,\pi_t)_t$. Constants independent of the
encoder cancel after writing
\[
    I(X;Y_t\mid Z)
    =
    h(Y_t\mid Z)-h(\epsilon_t), \quad \text{and} \quad
    I(X;Z)=h(X)-h(X\mid Z).
\]
The extremality theorem states that the resulting weighted conditional
entropy functional is minimized by a Gaussian test channel. Such a
channel is parameterized by its residual covariance
$0\prec K\preceq I$, giving
\eqref{eq:gaussian-extremality-lagrangian}.
\end{proof}

\subsection{The Gaussian covariance parameterization}

We first record the rate and distortion induced by an arbitrary
Gaussian residual covariance.

\begin{lemma}[Gaussian realization]
\label{lem:gaussian-realization}
For every $0\prec K\preceq I$, let $B=(K^{-1}-I)^{1/2}$ and define
\begin{equation}
    Z=BX+\xi,
    \qquad
    \xi\sim\N(0,I_d),
    \qquad
    \xi\perp X.
    \label{eq:generic-gaussian-encoder}
\end{equation}
Then
\begin{align}
    \Cov(X\mid Z)
    &=K,
    \label{eq:posterior-covariance-K}\\
    I(X;Z)
    &=
    -\frac12\log_2\det K,
    \label{eq:rate-from-K}\\
    I(X;Y_t\mid Z)
    &=
    \frac12\log_2\det\!\left(
        I+H_t^{1/2}KH_t^{1/2}
    \right).
    \label{eq:task-distortion-from-K}
\end{align}
\end{lemma}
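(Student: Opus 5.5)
The plan is to compute all three quantities with standard jointly Gaussian formulas, using $X\sim\N(0,I_d)$ and that $(X,Z,Y_t)$ are jointly Gaussian with $Z-X-Y_t$.

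\emph{Step 1: the posterior covariance.} Since $Z=BX+\xi$ with $B$ symmetric and $\xi\sim\N(0,I)$, Bayesian linear-Gaussian updating gives the posterior precision
\[
\Cov(X\mid Z)^{-1}=I+B^\top B=I+(K^{-1}-I)=K^{-1}.
\]
This yields \eqref{eq:posterior-covariance-K}. Because $0\prec K\preceq I$, we have $K^{-1}-I\succeq0$, so the square root $B$ is well defined. The conditional covariance does not depend on the value of $Z$.

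\emph{Step 2: the rate.} Write
\[
I(X;Z)=h(X)-h(X\mid Z)=\tfrac12\log_2\det I-\tfrac12\log_2\det K,
\]
which is \eqref{eq:rate-from-K}. The $2\pi e$ factors cancel.

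\emph{Step 3: the task distortion.} Write $I(X;Y_t\mid Z)=h(Y_t\mid Z)-h(Y_t\mid X,Z)$. By the Markov chain, $h(Y_t\mid X,Z)=h(\epsilon_t)$. Conditional on $Z$, $Y_t=A_tX+\epsilon_t$ is Gaussian with covariance $A_tKA_t^\top+N_t$. Hence
\[
I(X;Y_t\mid Z)=\tfrac12\log_2\det(I+N_t^{-1/2}A_tKA_t^\top N_t^{-1/2}).
\]

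\emph{Step 4: reduce to $H_t$.} Apply Sylvester's determinant identity $\det(I+MN)=\det(I+NM)$ twice. The first application with $M=N_t^{-1/2}A_tK^{1/2}$ moves to $\det(I+K^{1/2}H_tK^{1/2})$. The second moves to $\det(I+H_t^{1/2}KH_t^{1/2})$, giving \eqref{eq:task-distortion-from-K}.

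The main obstacle, such as it is, lies in this last step: the task covariances are stated with $N_t$ and $A_t$ (possibly non-square, with $H_t$ possibly singular), while the result is expressed through $H_t$. Sylvester's identity handles the rectangular dimensions without any invertibility of $A_t$. Everything else is routine Gaussian conditioning.
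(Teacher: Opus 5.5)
Your proof is correct and follows the paper's argument essentially step for step. Gaussian conditioning gives $\Cov(X\mid Z)=(I+B^\top B)^{-1}=K$, the task term is $h(Y_t\mid Z)-h(Y_t\mid X,Z)$ with $\Cov(Y_t\mid Z)=N_t+A_tKA_t^\top$, and Sylvester's identity converts this to $H_t$ (the only cosmetic difference is that you get the rate as $h(X)-h(X\mid Z)$ rather than the paper's channel form $\tfrac12\log_2\det(I+B^\top B)$, which is the same computation).
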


\begin{proof}
Since $K\preceq I$, the matrix $K^{-1}-I$ is positive semidefinite,
so $B$ is well defined. The standard Gaussian conditioning formula
gives
\begin{align}
    \Cov(X\mid Z) =
    I-B^\top(I+BB^\top)^{-1}B =
    (I+B^\top B)^{-1}.
\end{align}
Because $B^\top B=K^{-1}-I$, $\Cov(X\mid Z)=K$. Next, the mutual information of the linear Gaussian channel is
\begin{align}
    I(X;Z) =\frac12\log_2\det(I+B^\top B) =
    -\frac12\log_2\det K.
\end{align}

Conditioned on $Z$, the covariance of $Y_t$ is
\[
    \Cov(Y_t\mid Z)
    =
    N_t+A_tKA_t^\top.
\]
Moreover, $Y_t$ conditioned on $(X,Z)$ has covariance $N_t$.
Therefore
\begin{align}
    I(X;Y_t\mid Z)
    &=
    h(Y_t\mid Z)-h(Y_t\mid X,Z)\\
    &=
    \frac12\log_2
    \frac{\det(N_t+A_tKA_t^\top)}{\det N_t}\\
    &=
    \frac12\log_2\det\!\left(
        I+N_t^{-1/2}A_tKA_t^\top N_t^{-1/2}
    \right).
\end{align}
Sylvester's determinant identity gives
\[
    \det\!\left(
        I+N_t^{-1/2}A_tKA_t^\top N_t^{-1/2}
    \right)
    =
    \det\!\left(
        I+H_t^{1/2}KH_t^{1/2}
    \right),
\]
which proves \eqref{eq:task-distortion-from-K}.
\end{proof}

\subsection{Geodesic geometry}

For $K_0,K_1\in\mathbb S_{++}^d$ and $s\in[0,1]$, let
\begin{equation}
    K_0\#_sK_1
    =
    K_0^{1/2}
    \left(
        K_0^{-1/2}K_1K_0^{-1/2}
    \right)^s
    K_0^{1/2}
    \label{eq:matrix-geodesic}
\end{equation}
denote their affine-invariant geodesic.

\begin{lemma}[Geodesic convexity]
\label{lem:geodesic-convexity}
The feasible set
\[
    \mathcal K_R
    =
    \{0\prec K\preceq I:\rho(K)\le R\}
\]
is geodesically convex. The function $\rho$ is geodesically affine:
\begin{equation}
    \rho(K_0\#_sK_1)
    =
    (1-s)\rho(K_0)+s\rho(K_1).
    \label{eq:rho-geodesically-affine}
\end{equation}
Moreover, for every $H\succeq0$, the function
\begin{equation}
    f_H(K)=\log\det(I+H^{1/2}KH^{1/2})
\end{equation}
is geodesically convex.
\end{lemma}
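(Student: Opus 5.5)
We prove the three claims in order: $\rho$ is geodesically affine, the feasible set $\mathcal K_R$ is geodesically convex, and $f_H$ is geodesically convex. Throughout, write $K_s=K_0\#_sK_1$.

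\emph{Affineness of $\rho$.} Multiplicativity of the determinant gives
\[
\log\det K_s=\log\det K_0+s\log\det\bigl(K_0^{-1/2}K_1K_0^{-1/2}\bigr)=(1-s)\log\det K_0+s\log\det K_1 .
\]
Multiplying by $-\tfrac12\log_2 e$ yields \eqref{eq:rho-geodesically-affine}.

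\emph{Convexity of $\mathcal K_R$.} Let $K_0,K_1\in\mathcal K_R$. The geodesic stays in $\mathbb S_{++}^d$, and the rate constraint $\rho(K_s)\le R$ follows directly from affineness. It remains to show $K_s\preceq I$. For this I would use monotonicity of the Kubo--Ando geometric mean in each argument: $K_0\#_sK_1\preceq I\#_sI=I$ when $K_0,K_1\preceq I$. This is standard Kubo--Ando theory, since $\#_s$ is the operator mean associated with the operator monotone function $x\mapsto x^s$, $s\in[0,1]$. A self-contained route works via congruence. With $W=K_0^{-1/2}K_1K_0^{-1/2}$, the inequality $K_1\preceq I$ reads $W\preceq K_0^{-1}$. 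Löwner--Heinz gives $W^s\preceq K_0^{-s}$, and then
\[
K_s=K_0^{1/2}W^sK_0^{1/2}\preceq K_0^{1-s}\preceq I .
\]

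\emph{Convexity of $f_H$.} Sylvester's identity gives $\det(I+H^{1/2}KH^{1/2})=\det(K)\det(K^{-1}+H)$, so
\[
f_H(K)=\log\det K+\log\det(K^{-1}+H).
\]
The first term is geodesically affine. For the second, inversion is an isometry of the affine-invariant geometry mapping geodesics to geodesics, with $(K_0\#_sK_1)^{-1}=K_0^{-1}\#_sK_1^{-1}$. It therefore suffices to show that $g(P)=\log\det(P+H)$ is geodesically convex on $\mathbb S_{++}^d$. I would split this into two standard facts, taking $\log$ of $\det(P_0\#_sP_1+H)$ throughout.

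First, $\#_s$ is jointly concave, hence subadditive, so $H=H\#_sH$ gives
\[
(P_0\#_sP_1)+H\succeq P_0\#_sP_1+H\#_sH .
\]
This is the wrong direction for convexity. I would instead use the interpolation inequality $(P_0+H)\#_s(P_1+H)\succeq P_0\#_sP_1+H\#_sH$, which is again superadditivity of operator means and is also the wrong direction. So the direct route fails, and I would argue in the scalar-reduced way instead.

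This is the main obstacle. Geodesic convexity of a smooth $g$ is equivalent to $\tfrac{d^2}{ds^2}g(P_s)\ge0$, and by affine invariance it suffices to check at $s=0$ with $P_0=I$, so that $P_s=e^{sX}$ for symmetric $X$. Let $M=(I+H)^{-1}$. The second derivative is
\[
\tr\bigl(MX^2\bigr)-\tr\bigl(MXMX\bigr).
\]
Writing $T=M^{1/2}XM^{1/2}$, this equals $\tr(M^{1/2}X^2M^{1/2})-\tr(T^2)$. Since $M\preceq I$, we have
\[
\tr(T^2)=\tr\bigl(M^{1/2}XMXM^{1/2}\bigr)\le\tr\bigl(M^{1/2}X^2M^{1/2}\bigr),
\]
so the second derivative is nonnegative. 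Hence $g$, and therefore $f_H=\log\det K+g(K^{-1})$, is geodesically convex.

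The delicate part is this Hessian computation, specifically the use of $M\preceq I$ together with the reduction to $P_0=I$ by congruence invariance. Congruence invariance holds because $P\mapsto A^\top PA$ maps geodesics to geodesics and changes $g$ only through $H\mapsto A^{-\top}HA^{-1}$, and only up to the additive constant $\log\det(A^\top A)$.
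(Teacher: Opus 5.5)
Your proof is correct, and for $\rho$ and $\mathcal K_R$ it is the paper's argument; your L\"owner--Heinz derivation is a valid self-contained substitute for citing monotonicity of $\#_s$. For $f_H$ the packaging differs, but the substance does not. The paper does not split off $\log\det K$ or invert. It reduces by congruence to $K(u)=e^{uL}$, differentiates $\log\det(I+B^{1/2}e^{uL}B^{1/2})$ twice to get $\tr(CL^2)-\tr(CLCL)$ with $C=B(I+B)^{-1}$, and checks the sign entrywise after diagonalizing $C$. Your Sylvester split followed by inversion lands on literally the same quantity. Your transformed $H$ is the paper's $B$, your $X$ is $-L$, and with $M=(I+B)^{-1}=I-C$ one has $\tr(CL^2)-\tr(CLCL)=\tr(ML^2)-\tr(MLML)$. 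So the inversion step buys nothing, though your basis-free check via $XMX\preceq X^2$ is tidier than the paper's coordinate sum.

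You should correct your exploratory remarks, because you abandoned a valid route over a sign error. Joint concavity plus homogeneity gives \emph{super}additivity, not subadditivity. The inequality $(P_0+H)\#_s(P_1+H)\succeq P_0\#_sP_1+H$ is exactly the direction you need: taking determinants with $\det(A\#_sB)=(\det A)^{1-s}(\det B)^s$ yields $g(P_s)\le(1-s)g(P_0)+s\,g(P_1)$ immediately. (Your first display is just an equality, since $H\#_sH=H$.) Combine this with the Kubo--Ando transformer inequality $H^{1/2}(K_0\#_sK_1)H^{1/2}\preceq(H^{1/2}K_0H^{1/2})\#_s(H^{1/2}K_1H^{1/2})$, using the standard extension of $\#_s$ to semidefinite arguments. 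That gives a calculus-free proof for $f_H$ itself, with no inversion. The paper's Hessian route gives that up in exchange for elementarity: it needs only Sylvester's identity and a diagonalization, not joint concavity of operator means.
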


\begin{proof}
The determinant of the matrix geometric mean satisfies
\[
    \det(K_0\#_sK_1)
    =
    (\det K_0)^{1-s}(\det K_1)^s,
\]
which proves \eqref{eq:rho-geodesically-affine}. Monotonicity of the
matrix geometric mean gives
\[
    K_0\#_sK_1\preceq I\#_sI=I
\]
whenever $K_0,K_1\preceq I$. Thus $\mathcal K_R$ is geodesically
convex.

To verify convexity of $f_H$, it suffices to compute its second
derivative along a geodesic. At any point on such a geodesic, a
congruence transformation reduces the local calculation to
\[
    K(u)=e^{uL},
    \qquad L=L^\top.
\]
After transforming $H$ accordingly, write
\[
    B\succeq0,
    \qquad
    C=B(I+B)^{-1}.
\]
Using Sylvester's identity,
\[
    f_H(K(u))
    =
    \log\det\!\left(
        I+B^{1/2}e^{uL}B^{1/2}
    \right).
\]
At $u=0$, differentiation gives
\begin{equation}
    \frac{\mathrm d^2}{\mathrm du^2}
    f_H(K(u))\bigg|_{u=0}
    =
    \tr(CL^2)-\tr(CLCL).
    \label{eq:geodesic-hessian}
\end{equation}
Diagonalize $C=\diag(c_1,\ldots,c_d)$, where $0\le c_i<1$.
Then
\begin{align}
    \tr(CL^2)-\tr(CLCL)
    &=
    \sum_i c_i(1-c_i)L_{ii}^2
    \notag\\
    &\quad+
    \sum_{i<j}
    \left[
        c_i(1-c_j)+c_j(1-c_i)
    \right]L_{ij}^2
    \ge0.
    \label{eq:geodesic-hessian-positive}
\end{align}
The same calculation applies at every point of the geodesic, proving
geodesic convexity.
\end{proof}

\begin{lemma}[Strict geodesic convexity]
\label{lem:strict-geodesic-convexity}
If $\overline H=\sum_t\pi_tH_t\succ0$, then
\[
    D_\pi(K)
    =
    \frac12\sum_t\pi_t
    \log_2\det(I+H_t^{1/2}KH_t^{1/2})
\]
is strictly geodesically convex on $\mathbb S_{++}^d$.
\end{lemma}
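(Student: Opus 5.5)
We reuse the second-derivative computation from the proof of Lemma~\ref{lem:geodesic-convexity} and show that the weighted sum of the per-task Hessians cannot vanish in any nonzero direction. Take a geodesic through a point $K_0\succ0$, written $K(u)=K_0^{1/2}e^{uL}K_0^{1/2}$ with $L=L^\top\neq0$. By the congruence reduction, the second derivative of $f_{H_t}$ at $u=0$ is
\[
\tr(C_t'L^2)-\tr(C_t'LC_t'L),
\]
where $B_t=K_0^{1/2}H_tK_0^{1/2}$ and $C_t'=B_t(I+B_t)^{-1}$. Here $B_t$ is obtained from $H_t^{1/2}K_0H_t^{1/2}$ by Sylvester's identity, since $\log\det(I+H^{1/2}KH^{1/2})=\log\det(I+K^{1/2}HK^{1/2})$. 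The second derivative of $D_\pi$ along the geodesic is therefore $\frac{1}{2\ln2}\sum_t\pi_t Q_t(L)$, where $Q_t(L)$ denotes this per-task quantity. Strict geodesic convexity follows if $\sum_t\pi_tQ_t(L)>0$ for every nonzero $L$ at every base point. Since every point of a geodesic is such a base point, the function is then strictly convex along every nonconstant geodesic, and hence strictly geodesically convex.

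The key step is a sharper lower bound on each $Q_t$. Diagonalize $C_t'=\diag(c_i)$ with $0\le c_i<1$; the bound $c_i<1$ holds because $B_t\succeq0$. From \eqref{eq:geodesic-hessian-positive}, the diagonal terms carry coefficient $c_i(1-c_i)$. For the off-diagonal terms, use
\[
c_i(1-c_j)+c_j(1-c_i)\ge c_i(1-c_i)+c_j(1-c_j)\cdot 0 .
\]
It is cleaner, though, to bound each off-diagonal coefficient below by $\min_k(1-c_k)\,(c_i+c_j)$. Set $\lambda_t=1-\|C_t'\|=(1+\|B_t\|)^{-1}>0$. Then
\[
Q_t(L)\ge \lambda_t\Big(\sum_i c_iL_{ii}^2+\sum_{i<j}(c_i+c_j)L_{ij}^2\Big)=\lambda_t\tr(C_t'L^2).
\]
This gives $Q_t(L)\ge\lambda_t\tr(C_t'L^2)$ in basis-free form. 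Moreover, $C_t'\succeq \lambda_t B_t$ because $(I+B_t)^{-1}\succeq\lambda_tI$ and this factor commutes with $B_t$. Hence
\[
Q_t(L)\ge\lambda_t^2\tr(B_tL^2).
\]

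Summing over tasks with $\lambda=\min_t\lambda_t>0$ gives
\[
\sum_t\pi_tQ_t(L)\ge\lambda^2\tr\!\Big(K_0^{1/2}\overline H K_0^{1/2}L^2\Big).
\]
Because $\overline H\succ0$ and $K_0\succ0$, the matrix $K_0^{1/2}\overline HK_0^{1/2}$ is positive definite. Since $L^2\succeq0$ and $L^2\neq0$ whenever $L\neq0$, the trace is strictly positive, which completes the argument.

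The main obstacle is justifying the congruence reduction of Lemma~\ref{lem:geodesic-convexity} carefully. One must check that the transformed matrix at the base point is exactly $K_0^{1/2}H_tK_0^{1/2}$, so that the positive-definiteness of the average survives the transformation, and that a single base-point change serves all tasks simultaneously. This works because each task's expression depends only on $K_0^{1/2}H_tK_0^{1/2}$ and the common $L$. The remaining steps are elementary inequalities on the eigenvalues $c_i\in[0,1)$.
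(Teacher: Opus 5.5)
Your argument is correct and shares the paper's skeleton. Both use the per-task second derivative $\tr(C_t'L^2)-\tr(C_t'LC_t'L)$ from Lemma~\ref{lem:geodesic-convexity}, one congruence at the base point shared by all tasks, and a final appeal to $K_0^{1/2}\overline HK_0^{1/2}\succ0$. The difference is how per-task nonnegativity becomes strict positivity of the sum. The paper argues qualitatively, through the equality case of \eqref{eq:geodesic-hessian-positive}. Since $1-c_j>0$, the $t$th term $\sum_{i,j}c_i(1-c_j)L_{ij}^2$ vanishes only if $C_t'L=0$, i.e.\ $B_tL=0$. If this held for every task, then $K_0^{1/2}\overline HK_0^{1/2}L=0$, which forces $L=0$. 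You instead prove the quantitative bound $Q_t(L)\ge\lambda_t^2\tr(B_tL^2)$, which sums to
\[
\sum_t\pi_tQ_t(L)\ge\lambda^2\,\lambda_{\min}\bigl(K_0^{1/2}\overline HK_0^{1/2}\bigr)\,\|L\|_F^2 .
\]
The paper's route is shorter. Yours gives an explicit local modulus of strong geodesic convexity, which strictness alone does not provide. The modulus is uniform on $\{2^{-2R}I\preceq K\preceq I\}\supseteq\mathcal K_R$, because there $\lambda\ge(1+\max_t\|H_t\|)^{-1}$ and $\lambda_{\min}(K^{1/2}\overline HK^{1/2})\ge2^{-2R}\lambda_{\min}(\overline H)$. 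Two cleanups are worth making. First, the displayed inequality ending in ``$\cdot0$'' is true but unused and confusing; it suffices to note $Q_t(L)=\sum_{i,j}c_i(1-c_j)L_{ij}^2\ge\lambda_t\sum_{i,j}c_iL_{ij}^2=\lambda_t\tr(C_t'L^2)$. Second, at an interior point $\gamma(s_0)=K_0\#_{s_0}K_1$ the geodesic naturally reads $Pe^{uV}P^\top$ with $PP^\top=\gamma(s_0)$ and $P$ nonsymmetric. You can reach your normal form by a polar decomposition $P=\gamma(s_0)^{1/2}O$, giving $L=OVO^\top\ne0$. Alternatively, work directly with $B_t=P^\top H_tP$, whose weighted sum $P^\top\overline HP$ is still positive definite.
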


\begin{proof}
Consider a nonconstant geodesic and perform the same local
normalization as in Lemma~\ref{lem:geodesic-convexity}. Let $L\ne0$
be its local tangent and let
\[
    B_t=K^{1/2}H_tK^{1/2}.
\]
The equality characterization in
\eqref{eq:geodesic-hessian-positive} shows that the second derivative
of the $t$th summand vanishes only if $B_tL=0$. If the second derivative of every positive-weight summand vanished,
then
\[
    \left(\sum_t\pi_tB_t\right)L=0.
\]
But
\[
    \sum_t\pi_tB_t
    =
    K^{1/2}\overline H K^{1/2}\succ0,
\]
so this would imply $L=0$, contrary to the geodesic being
nonconstant. Thus every nonconstant geodesic has strictly positive
second derivative in at least one positive-weight task term.
Therefore $D_\pi$ is strictly geodesically convex.
\end{proof}

\subsection{Existence, uniqueness, and rate exhaustion}

\begin{lemma}[Existence and uniqueness]
\label{lem:existence-uniqueness}
For every finite $R\ge0$, the matrix program
\[
    \min_{K\in\mathcal K_R}D_\pi(K)
\]
has a unique minimizer $K_R$. If $R>0$, then $\rho(K_R)=R$. For $R=0$, one has $K_0=I$.
\end{lemma}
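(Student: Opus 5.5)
The plan has three parts: existence by compactness, uniqueness by strict geodesic convexity, and rate exhaustion by a monotonicity argument.

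\textbf{Existence.} The feasible set $\mathcal K_R$ is closed and bounded. It is bounded above by $K\preceq I$. It is bounded away from the boundary of the positive definite cone because $\rho(K)\le R$ and $K\preceq I$ together force every eigenvalue $\lambda_i$ of $K$ to satisfy
\[
\lambda_i\ge 2^{-2R}\prod_{j\ne i}\lambda_j^{-1}\ge 2^{-2R}.
\]
So $\mathcal K_R$ is a compact subset of $\mathbb S_{++}^d$. The objective $D_\pi$ is continuous there, hence a minimizer exists.

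\textbf{Uniqueness.} Suppose $K_0\ne K_1$ are two minimizers. By Lemma~\ref{lem:geodesic-convexity}, the geodesic $K_0\#_sK_1$ stays in $\mathcal K_R$. By Lemma~\ref{lem:strict-geodesic-convexity}, which applies since $\overline H\succ0$, the value $D_\pi(K_0\#_{1/2}K_1)$ is strictly below the common minimum. This is a contradiction.

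\textbf{Rate exhaustion.} The case $R=0$ is immediate: $\rho(K)\le0$ with $K\preceq I$ forces every eigenvalue to equal $1$, so $K=I$.

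For $R>0$, suppose $\rho(K_R)<R$. I will produce a feasible competitor that strictly lowers $D_\pi$, in two cases.

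First, if $K_R\ne I$, take $K'=(1-\varepsilon)K_R$ for small $\varepsilon>0$. This remains $\preceq I$, and $\rho(K')=\rho(K_R)-\tfrac d2\log_2(1-\varepsilon)$, which stays $\le R$ for $\varepsilon$ small enough. Each summand $\log\det(I+H_t^{1/2}KH_t^{1/2})$ is monotone nondecreasing in $K$ under the Loewner order. Strictness comes from the derivative in the direction $-K_R$, which equals
\[
-\tfrac1{2\ln2}\sum_t\pi_t\tr\!\big((I+H_t^{1/2}K_RH_t^{1/2})^{-1}H_t^{1/2}K_RH_t^{1/2}\big).
\]
This is strictly negative because $\sum_t\pi_tH_t^{1/2}K_RH_t^{1/2}\ne0$, using $\overline H\succ0$ and $K_R\succ0$. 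This scaling argument in fact works for any $K_R$, including $K_R=I$.

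Second, if $K_R=I$, the same scaling $K'=(1-\varepsilon)I$ is feasible whenever $R>0$, giving the same contradiction.

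Hence $\rho(K_R)=R$. The main obstacle is only making this strict-decrease step clean, which the trace computation above settles.
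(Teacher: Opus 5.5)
Your proposal is correct and follows essentially the same route as the paper: compactness via the bound $\lambda_{\min}(K)\ge 2^{-2R}$, uniqueness via the geodesic midpoint and strict geodesic convexity, and rate exhaustion by showing that the scaled matrix $(1-\varepsilon)K_R$ is feasible and strictly better. The only difference is cosmetic. The paper gets strict decrease from $G(K)\succ0$, i.e.\ strict Loewner monotonicity, whereas you use the directional derivative along $-K_R$, which needs only $\operatorname{tr}(K_R\overline H)>0$; your two-case split is redundant, as you yourself note.
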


\begin{proof}
If $K\in\mathcal K_R$, then $\det K\ge2^{-2R}$ and every eigenvalue of $K$ is at most one. Consequently,
\[
    \lambda_{\min}(K)\ge2^{-2R}.
\]
Thus $\mathcal K_R$ is contained in the compact set
\[
    \{K:2^{-2R}I\preceq K\preceq I\}.
\]
It is closed within that set, and $D_\pi$ is continuous, so a
minimizer exists.

If $K_0$ and $K_1$ were distinct minimizers, then their geodesic
midpoint would be feasible by
Lemma~\ref{lem:geodesic-convexity}. Strict geodesic convexity would
give
\[
    D_\pi(K_0\#_{1/2}K_1)
    <
    \frac12D_\pi(K_0)+\frac12D_\pi(K_1),
\]
contradicting optimality. The minimizer is therefore unique.

To prove rate exhaustion, define
\begin{equation}
    G(K)
    =
    \sum_t\pi_t
    A_t^\top(N_t+A_tKA_t^\top)^{-1}A_t.
    \label{eq:GK-definition}
\end{equation}
The Euclidean gradient is
\begin{equation}
    \nabla D_\pi(K)
    =
    \frac{G(K)}{2\ln2}.
    \label{eq:D-gradient}
\end{equation}
For every nonzero $x$,
\begin{align}
    x^\top G(K)x
    &=
    \sum_t\pi_t
    (A_tx)^\top
    (N_t+A_tKA_t^\top)^{-1}
    (A_tx).
\end{align}
This vanishes only if $A_tx=0$ for every positive-weight task,
equivalently if $H_tx=0$ for every such task. The assumption
$\overline H\succ0$ rules this out, so $G(K)\succ0$. Hence $D_\pi(K)$ is strictly increasing in the Loewner order.

Suppose $R>0$ and $\rho(K_R)<R$. For $\alpha<1$ sufficiently close
to one,
\[
    \rho(\alpha K_R)
    =
    \rho(K_R)-\frac d2\log_2\alpha
    \le R.
\]
Thus $\alpha K_R$ remains feasible, while $\alpha K_R\prec K_R$ and therefore
\[
    D_\pi(\alpha K_R)<D_\pi(K_R),
\]
which is a contradiction. Hence $\rho(K_R)=R$.

Finally, if $R=0$, feasibility requires
\[
    K\preceq I,
    \qquad
    \det K\ge1.
\]
All eigenvalues must therefore equal one, so $K_0=I$.
\end{proof}

\subsection{Constrained reduction}

We now pass from Gaussian Lagrangian extremality to the
rate-constrained problem.

\begin{lemma}[Supporting multiplier]
\label{lem:supporting-multiplier}
For every $R>0$, there exists $\beta_R>0$ such that the constrained
optimizer $K_R$ also minimizes
\begin{equation}
    K\longmapsto D_\pi(K)+\beta_R\rho(K)
    \label{eq:penalized-Gaussian-objective}
\end{equation}
over all $0\prec K\preceq I$.
\end{lemma}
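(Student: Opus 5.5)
Since $\rho$ is geodesically affine and $D_\pi$ is geodesically convex (Lemmas~\ref{lem:geodesic-convexity} and~\ref{lem:strict-geodesic-convexity}), I will treat the problem as convex programming on the manifold $\mathbb S_{++}^d$ with the affine-invariant geometry. The constrained problem is $\min\{D_\pi(K): 0\prec K\preceq I,\ \rho(K)\le R\}$. I will switch to the convex parametrization along geodesics through $K_R$. Concretely, I plan to argue through the value function $\phi(r)=\min_{K\in\mathcal K_r}D_\pi(K)$ for $r\ge 0$.

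\textbf{Step 1: $\phi$ is convex and nonincreasing.} Monotonicity holds because $\mathcal K_r$ grows with $r$. For convexity, take minimizers $K_{r_0},K_{r_1}$ from Lemma~\ref{lem:existence-uniqueness}. By the same lemma, $K_{r_0}\#_sK_{r_1}\preceq I$, and by \eqref{eq:rho-geodesically-affine} it has $\rho=(1-s)r_0+sr_1$. Geodesic convexity of $D_\pi$ then gives $\phi((1-s)r_0+sr_1)\le(1-s)\phi(r_0)+s\phi(r_1)$.

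\textbf{Step 2: $\phi$ strictly decreases at $R$.} Rate exhaustion $\rho(K_{r})=r$ together with uniqueness shows $\phi$ is strictly decreasing. If $\phi(r')=\phi(r)$ for some $r'>r$, then $K_r$ would be a minimizer at rate $r'$ with $\rho(K_r)=r<r'$, which contradicts exhaustion.

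\textbf{Step 3: choose the multiplier.} Since $\phi$ is convex on $[0,\infty)$ and $R>0$ is an interior point, $\phi$ has a subgradient $-\beta_R$ at $R$, with $\beta_R\ge0$. Pick $\beta_R=-\phi'_+(R)$. To get $\beta_R>0$, note that $\phi'_+(R)=0$ together with convexity would make $\phi$ nondecreasing on $[R,\infty)$, contradicting Step 2.

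\textbf{Step 4: verify the penalized minimization.} For any $0\prec K\preceq I$, set $r=\rho(K)\ge0$. Then $D_\pi(K)+\beta_R\rho(K)\ge\phi(r)+\beta_R r\ge\phi(R)+\beta_R R=D_\pi(K_R)+\beta_R\rho(K_R)$. The middle inequality is the subgradient inequality $\phi(r)\ge\phi(R)-\beta_R(r-R)$, and the last equality uses $\rho(K_R)=R$. This gives \eqref{eq:penalized-Gaussian-objective}. Note that $\rho(K)\ge 0$ holds automatically because $K\preceq I$, so the whole domain is covered by $r\in[0,\infty)$.

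\textbf{Main obstacle.} The delicate point is Step 1. Ordinary Lagrangian duality does not apply directly, because neither $D_\pi$ nor $\rho$ is convex in the Euclidean sense on the relevant set. The substitute is to interpolate minimizers geodesically, which needs three things at once:
\begin{itemize}
\item the feasibility $K\preceq I$ preserved along the geodesic,
\item $\rho$ affine along the geodesic,
\item $D_\pi$ convex along the geodesic.
\end{itemize}
All three are supplied by Lemma~\ref{lem:geodesic-convexity}. The remaining technicality is convexity at the endpoint $r=0$; I avoid it by noting that $R>0$ is interior to the domain, so a finite subgradient exists there.
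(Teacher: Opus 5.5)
Your argument is correct, and it takes a genuinely different route from the paper.

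\textbf{The paper's route.} The paper works at first order. It uses the strictly feasible point $\alpha I$ as a constraint qualification to obtain KKT multipliers $\beta_R\ge0$ and $\Lambda_R\succeq0$ with $G(K_R)=\beta_RK_R^{-1}-\Lambda_R$ and $\langle\Lambda_R,I-K_R\rangle=0$. It excludes $\beta_R=0$ because $G(K_R)\succ0$. It then appeals to geodesic convexity of $D_\pi+\beta_R\rho$ to turn this stationarity certificate into a global minimum over $0\prec K\preceq I$.

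\textbf{Your route.} You use the perturbation-function argument instead. Geodesic interpolation of minimizers makes the distortion--rate curve $\phi$ convex on $[0,\infty)$. A supporting line at the interior point $R$ gives $\beta_R=-\phi'_+(R)$. Uniqueness plus rate exhaustion give strict decrease, and hence $\beta_R>0$.

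\textbf{Comparison.} Your route is purely zeroth order. It needs no gradient formula, no KKT theorem for a problem with a matrix-inequality constraint, and no check that Euclidean multipliers yield a valid first-order certificate along geodesics. It also identifies $\beta_R$ as minus the slope of the distortion--rate function. In exchange, the paper's route produces the explicit stationarity system, which it reuses in Proposition~\ref{prop:gaussian-KKT} and the water-filling interpretation. It also gets $\beta_R>0$ directly from strict Loewner monotonicity of $D_\pi$ rather than through uniqueness.

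\textbf{Small corrections to your commentary.}
First, $\rho=-\tfrac12\log_2\det K$ is convex in the Euclidean sense, so $\mathcal K_R$ is a Euclidean-convex set. The only obstruction to ordinary Lagrangian duality is that $D_\pi$ is Euclidean concave.
Second, the endpoint $r=0$ is not something you avoid. Your Step~1 already covers $r_0=0$ with $K_0=I$, and that is what makes the subgradient inequality valid at $r=0$ in Step~4 (the case $K=I$). Interiority of $R$ is needed only to guarantee a finite subgradient.
Third, $K_{r_0}\#_sK_{r_1}\preceq I$ comes from Lemma~\ref{lem:geodesic-convexity}, not from the existence lemma.
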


\begin{proof}
For $R>0$, the matrix program satisfies a strict feasibility
condition. Indeed, choose $\alpha\in(0,1)$ sufficiently close to one
that
\[
    -\frac d2\log_2\alpha<R.
\]
Then $\alpha I\prec I$ and $\rho(\alpha I)<R$.

The first-order optimality conditions therefore provide a scalar
$\beta_R\ge0$ and a matrix $\Lambda_R\succeq0$ such that
\begin{align}
    G(K_R)
    &=
    \beta_RK_R^{-1}-\Lambda_R,
    \label{eq:KKT-stationarity}\\
    \left\langle\Lambda_R,I-K_R\right\rangle
    &=0.
    \label{eq:KKT-complementarity}
\end{align}
Because $G(K_R)\succ0$, equation
\eqref{eq:KKT-stationarity} cannot hold with $\beta_R=0$.
Hence $\beta_R>0$.

The function
\[
    D_\pi(K)+\beta_R\rho(K)
\]
is geodesically convex because $D_\pi$ is geodesically convex and
$\rho$ is geodesically affine. Equations
\eqref{eq:KKT-stationarity}--\eqref{eq:KKT-complementarity} give its
first-order optimality certificate over the geodesically convex set
$0\prec K\preceq I$. Therefore $K_R$ is a global minimizer of
\eqref{eq:penalized-Gaussian-objective}.
\end{proof}

\begin{proof}[Proof of Proposition~\ref{prop:gaussian}]
First suppose $R>0$, and let $K_R$ be the unique optimizer of the
matrix program. By Lemma~\ref{lem:supporting-multiplier}, there is a
$\beta_R>0$ for which
\begin{equation}
    \inf_{0\prec K\preceq I}
    \left\{
        D_\pi(K)+\beta_R\rho(K)
    \right\}
    =
    D_\pi(K_R)+\beta_RR.
    \label{eq:Gaussian-supporting-value}
\end{equation}

Let $P_{Z\mid X}$ be any encoder satisfying $I(X;Z)\le R$.
Gaussian extremality and
\eqref{eq:Gaussian-supporting-value} give
\begin{align}
    D_\pi(Z)+\beta_RI(X;Z)
    &\ge
    D_\pi(K_R)+\beta_RR.
\end{align}
Therefore
\begin{align}
    D_\pi(Z)
    &\ge
    D_\pi(K_R)
    +
    \beta_R\bigl(R-I(X;Z)\bigr) \ge
    D_\pi(K_R).
    \label{eq:arbitrary-encoder-lower-bound}
\end{align}
Thus no arbitrary, possibly non-Gaussian, encoder of rate at most $R$
can improve on the matrix-program value.

Conversely, Lemma~\ref{lem:gaussian-realization} constructs a
Gaussian encoder with residual covariance $K_R$ and
\[
    I(X;Z)=\rho(K_R)=R.
\]
Its distortion equals $D_\pi(K_R)$, so equality is attained in
\eqref{eq:arbitrary-encoder-lower-bound}. This proves
\[
    D_\pi^*(R)
    =
    \min_{K\in\mathcal K_R}D_\pi(K).
\]

If $R=0$, then $I(X;Z)=0$ implies that $Z$ is independent of $X$.
Under the Markov condition $Z-X-(Y_t)_t$, it is consequently
independent of every $(X,Y_t)$ pair, and
\[
    I(X;Y_t\mid Z)=I(X;Y_t).
\]
This agrees with the covariance formula at $K=I$, which is the unique
element of $\mathcal K_0$.

Existence, uniqueness, and rate exhaustion follow from
Lemma~\ref{lem:existence-uniqueness}, while the encoder realization
and its posterior covariance follow from
Lemma~\ref{lem:gaussian-realization}.
\end{proof}

\subsection{Numerical optimality conditions}

For completeness, the unique covariance $K_R$ can be characterized
by explicit first-order conditions.

\begin{proposition}[Optimality conditions]
\label{prop:gaussian-KKT}
Fix $R>0$, and define $G(K)$ by \eqref{eq:GK-definition}. A matrix
$K\in\mathbb S_{++}^d$ is equal to the unique optimizer $K_R$ if and
only if
\begin{align}
    K&\preceq I,
    &
    -\frac12\log_2\det K&=R,
    \label{eq:KKT-primal}\\
    G(K)&=\beta K^{-1}-\Lambda,
    &
    \Lambda&\succeq0,
    \label{eq:KKT-dual}\\
    \left\langle\Lambda,I-K\right\rangle&=0
    \label{eq:KKT-comp}
\end{align}
for some $\beta>0$.

Equivalently, defining
\begin{equation}
    Q(K)=K^{1/2}G(K)K^{1/2},
    \label{eq:QK-definition}
\end{equation}
the conditions are
\begin{align}
    Q(K)&\preceq\beta I,
    \label{eq:KKT-Q-bound}\\
    \bigl(\beta I-Q(K)\bigr)(K^{-1}-I)&=0.
    \label{eq:KKT-Q-complementarity}
\end{align}
\end{proposition}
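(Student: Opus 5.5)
The plan is to prove the two directions of the first characterization separately, reusing the Lagrangian machinery from the proof of Proposition~\ref{prop:gaussian}, and then to obtain the $Q$-form by a congruence with $K^{1/2}$.

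\emph{Necessity.} Necessity is essentially already done. By Lemma~\ref{lem:existence-uniqueness}, $K_R\preceq I$ and $\rho(K_R)=R$, which gives \eqref{eq:KKT-primal}. The proof of Lemma~\ref{lem:supporting-multiplier} supplies $\beta_R>0$ and $\Lambda_R\succeq0$ satisfying \eqref{eq:KKT-stationarity}--\eqref{eq:KKT-complementarity}. Those are exactly \eqref{eq:KKT-dual}--\eqref{eq:KKT-comp}.

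\emph{Sufficiency.} Suppose $K$ satisfies \eqref{eq:KKT-primal}--\eqref{eq:KKT-comp} for some $\beta>0$. I will show that $K$ minimizes $\Phi=D_\pi+\beta\rho$ over $\{0\prec K'\preceq I\}$. Then, for any $K'\in\mathcal K_R$,
\[
D_\pi(K')\ge D_\pi(K)+\beta\bigl(R-\rho(K')\bigr)\ge D_\pi(K),
\]
so $K$ solves the matrix program, and uniqueness in Lemma~\ref{lem:existence-uniqueness} forces $K=K_R$.

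Since $\Phi$ is geodesically convex (Lemma~\ref{lem:geodesic-convexity}), it suffices to show that its one-sided derivative at $s=0$ along each geodesic $K\#_sK'$ is nonnegative.
\begin{itemize}
\item The tangent at $s=0$ is $\Delta=K^{1/2}(\log W)K^{1/2}$, where $W=K^{-1/2}K'K^{-1/2}$.
\item Using \eqref{eq:D-gradient} and $\nabla\rho=-K^{-1}/(2\ln2)$, the derivative equals
\[
\tfrac{1}{2\ln2}\tr\bigl((G(K)-\beta K^{-1})\Delta\bigr)=-\tfrac{1}{2\ln2}\tr(\Lambda\Delta).
\]
\item So the task is to show $\tr(\Lambda\Delta)\le0$. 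This is the step I expect to be the main obstacle.
\end{itemize}

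To handle it, first note that $\Lambda\succeq0$, $I-K\succeq0$ and $\langle\Lambda,I-K\rangle=0$ together give $\Lambda(I-K)=0$. Hence $\Lambda=\sum_j\lambda_jv_jv_j^\top$ with $\lambda_j\ge0$ and unit vectors $v_j$ satisfying $Kv_j=v_j$, so also $K^{\pm1/2}v_j=v_j$. Therefore
\[
\tr(\Lambda\Delta)=\sum_j\lambda_j\,v_j^\top(\log W)v_j .
\]
By Jensen's inequality for the concave function $\log$ applied to the spectral measure of $W$ in the state $v_j$,
\[
v_j^\top(\log W)v_j\le\log\bigl(v_j^\top Wv_j\bigr)=\log\bigl(v_j^\top K'v_j\bigr)\le0,
\]
where the last step uses $K'\preceq I$. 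Hence $\tr(\Lambda\Delta)\le0$, the directional derivative is nonnegative, and geodesic convexity upgrades this first-order condition to global optimality of $K$ for $\Phi$.

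\emph{Equivalent $Q$-form.} Conjugate \eqref{eq:KKT-dual} by $K^{1/2}$ to get $Q(K)=\beta I-K^{1/2}\Lambda K^{1/2}$.
\begin{itemize}
\item Since $K^{1/2}$ is invertible, $\Lambda\succeq0$ holds if and only if $Q(K)\preceq\beta I$.
\item Using $K^{1/2}(K^{-1}-I)=(I-K)K^{-1/2}$, we get
\[
(\beta I-Q(K))(K^{-1}-I)=K^{1/2}\Lambda(I-K)K^{-1/2}.
\]
This vanishes if and only if $\Lambda(I-K)=0$.
\item For PSD $\Lambda$ and $I-K$, the condition $\Lambda(I-K)=0$ is equivalent to the trace condition \eqref{eq:KKT-comp}.
\end{itemize}
Conversely, given $\beta$ satisfying \eqref{eq:KKT-Q-bound}--\eqref{eq:KKT-Q-complementarity}, define $\Lambda=\beta K^{-1}-G(K)$ and read these steps backwards to recover \eqref{eq:KKT-dual}--\eqref{eq:KKT-comp}.
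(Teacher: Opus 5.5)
Your proof is correct and follows the paper's route. Necessity comes from Lemma~\ref{lem:existence-uniqueness} together with the multiplier argument of Lemma~\ref{lem:supporting-multiplier}. Sufficiency comes from geodesic convexity of $D_\pi+\beta\rho$, the same inequality chain $D_\pi(K')\ge D_\pi(K)+\beta(R-\rho(K'))$, and uniqueness. The $Q$-form follows by conjugating with $K^{1/2}$.

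The one place you go beyond the paper is the step it only asserts, namely that the Euclidean certificate yields a nonnegative geodesic directional derivative. You verify this explicitly: $\Lambda(I-K)=0$ and Jensen's inequality give $v_j^\top(\log W)v_j\le\log(v_j^\top K'v_j)\le0$, so $\operatorname{tr}(\Lambda\Delta)\le0$. That verification is sound.
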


\begin{proof}
Necessity follows from the multiplier argument in
Lemma~\ref{lem:supporting-multiplier}. Since both
$\Lambda$ and $I-K$ are positive semidefinite,
\[
    \langle\Lambda,I-K\rangle=0
\]
is equivalent to the matrix complementarity relation between their
supports.

For the equivalent form, multiply
\[
    \Lambda=\beta K^{-1}-G(K)
\]
on both sides by $K^{1/2}$ to obtain
\[
    K^{1/2}\Lambda K^{1/2}
    =
    \beta I-Q(K)\succeq0.
\]
Moreover,
\begin{align}
    \langle\Lambda,I-K\rangle
    &=
    \tr\!\left[
        \bigl(\beta I-Q(K)\bigr)(K^{-1}-I)
    \right].
\end{align}
Both matrices in the final trace are positive semidefinite, so the
trace vanishes if and only if their product vanishes. This yields
\eqref{eq:KKT-Q-complementarity}.

Conversely, suppose the stated conditions hold. They give a
first-order optimality certificate for
\[
    D_\pi(K)+\beta\rho(K)
\]
over $0\prec K\preceq I$. This objective is geodesically convex, so
the certificate is globally sufficient. Any feasible $K'$ with
$\rho(K')\le R=\rho(K)$ then satisfies
\begin{align}
    D_\pi(K')+\beta\rho(K')
    &\ge
    D_\pi(K)+\beta\rho(K),
\end{align}
and hence
\[
    D_\pi(K')
    \ge
    D_\pi(K)+\beta\bigl(R-\rho(K')\bigr)
    \ge
    D_\pi(K).
\]
Thus $K$ solves the constrained program. Uniqueness implies
$K=K_R$.
\end{proof}

The conditions have the usual water-filling interpretation. On every
encoded direction, meaning a direction in the range of $I-K_R$, the
normalized marginal matrix
\[
    Q(K_R)=K_R^{1/2}G(K_R)K_R^{1/2}
\]
equals the common multiplier $\beta_R$. On unencoded directions it is
bounded above by $\beta_R$.

\clearpage
\section{Task Knowledge: Crossover Proofs}
\label{app:knowledgeproofs}

\subsection{Optimal Specialist and Generalist Covariances}
We first justify \eqref{eq:focuscoverage}. Work in an orthonormal basis
whose first $k$ coordinates span $S$. Partition any feasible covariance
as
\[
K=\begin{pmatrix}A&B\\B^\top&C\end{pmatrix}.
\]
The specialist objective depends only on $A$. Replacing $K$ by
$\operatorname{diag}(A,I_{d-k})$ preserves this objective, is feasible,
and weakly reduces the required rate: the Schur complement satisfies
\[
0\prec C-B^\top A^{-1}B\preceq I,\qquad
\det K=\det A\,\det(C-B^\top A^{-1}B)\le\det A.
\]
We may therefore optimize on $S$ and leave $S^\perp$ unencoded.
On $S$, the specialist family has positive-definite average information
matrix $(h/k)I_k$. Its unique optimum, by the Gaussian reduction, is
invariant under all coordinate sign changes and permutations, since both
the objective and constraints have these symmetries.
It must consequently be a scalar multiple of $I_k$.
The objective strictly decreases when this scalar decreases, so all
available rate is used and $A=2^{-2R/k}I_k$.
Equivalently, its diagonal allocation problem is
\[
\min_{\substack{r_i\ge0\\\sum_{i=1}^k r_i\le R}}
\frac1{2k}\sum_{i=1}^k\log_2(1+h\,2^{-2r_i}),
\]
whose strictly convex, decreasing summands yield $r_i=R/k$.

For the generalist, $H_g=\sigma_g^{-2}I_d$. Orthogonal invariance
and uniqueness force its optimizer to be scalar, and rate saturation
gives $K_g(R)=2^{-2R/d}I_d$.
For squared-error reconstruction, the Gaussian distortion is $\tr K$.
The arithmetic--geometric mean inequality gives, for every
$K\in\mathcal K_R$,
\[
\tr K\ge d(\det K)^{1/d}\ge d\,2^{-2R/d},
\]
with equality at the same isotropic covariance.
This reconstruction statement concerns mean-square error; a noiseless
continuous target in the IB relevance objective gives
$I(Z;V_g)=I(Z;X)$ and does not distinguish rate-saturating geometries.

\subsection{Proof of Proposition~\ref{prop:crossovers}}
The differential of the evaluation distortion at $K=I$ is
\[
\mathrm dD_\pi(I)[E]
=\frac1{2\ln2}
\sum_t\pi_t\tr\!\left(
H_t^{1/2}(I+H_t)^{-1}H_t^{1/2}E
\right)
=\frac{\tr(\overline C E)}{2\ln2}.
\]
The covariances in \eqref{eq:focuscoverage} satisfy
\[
K_s(R)=I-\frac{2\ln2}{k}RP+O(R^2),\qquad
K_g(R)=I-\frac{2\ln2}{d}RI+O(R^2).
\]
Since $D_\pi$ is smooth near $I$, substitution yields
\[
D_\pi(K_g(R))-D_\pi(K_s(R))
=R\left\{\frac{\tr(P\overline C)}{k}
-\frac{\tr(\overline C)}{d}\right\}+O(R^2).
\]
The strict positivity of $\Delta$ proves the low-rate inequality.

The log-determinant expression for $D_\pi$ extends continuously to
positive semidefinite $K$. As $R\to\infty$,
$K_s(R)\to I-P$ and $K_g(R)\to0$, giving respectively
$D_{\rm miss}>0$ and zero. This proves the high-rate inequality.
Continuity and the opposite strict signs at finite small and large
rates imply at least one positive finite crossing; uniqueness of that
crossing is not asserted.

To characterize the missing-information condition, each term in
\[
D_{\rm miss}
=\frac12\sum_t\pi_t
\log_2\det\!\left(I+H_t^{1/2}(I-P)H_t^{1/2}\right)
\]
is nonnegative and is positive exactly when its positive semidefinite
matrix $H_t^{1/2}(I-P)H_t^{1/2}$ is nonzero.
For positive task weights this is equivalent to
$\sum_t\pi_t\tr((I-P)H_t)>0$, or
$\tr((I-P)\overline H)>0$.

An explicit sufficient high-rate threshold follows from monotonicity:
$K_s(R)\succeq I-P$ implies $D_\pi(K_s(R))\ge D_{\rm miss}$.
Using $\ln\det(I+A)\le\tr A$ for $A\succeq0$,
\[
D_\pi(K_g(R))
\le\frac{\tr\overline H}{2\ln2}\,2^{-2R/d}.
\]
Thus the generalist strictly outperforms the specialist whenever
\[
R>\frac d2\log_2\!\left(
\frac{\tr\overline H}{2(\ln2)D_{\rm miss}}\right).
\]

\clearpage
\section{Fixed-Pretext Regret Bound for the Two-Mode Example}
\label{app:gap}
This argument uses the rank-one, coordinate-specific tasks in
\eqref{eq:diagonalexample}; it is not a pinching argument for general
full-rank diagonal task matrices.
For any two-dimensional fixed pretext, write
$L_R=U\diag(\ell_1(R),\ell_2(R))U^\top$ with
$\ell_1(R)\le\ell_2(R)$. Then
\[
(L_R)_{11}-(L_R)_{22}
=(u_{11}^2-u_{21}^2)(\ell_1(R)-\ell_2(R))
\]
cannot change sign across rates. Thus it has the wrong, or tied,
coordinate ordering at at least one of $R_1,R_2$.

For these rank-one tasks, downstream distortion depends only on the
two diagonal entries. Deleting the off-diagonal entry preserves that
distortion and increases the determinant, weakly reducing the required
rate. The resulting diagonal covariance is feasible and has the same
wrong ordering. If it uses less than $R$ bits, scaling both entries down
to use $R$ bits preserves their ordering and reduces distortion.
It is therefore enough to minimize $F_R(r)$ on the wrong side of
$r=R/2$. Strict convexity places this constrained minimum at $R/2$.
This proves \eqref{eq:examplerelative}, including oblique pretexts.

For completeness, put $L_M=\log_2 M$. At $R_1=L_M$,
\[
D_1^*=\frac p2+\frac q2\log_2(1+M^4),\qquad
D_1^{\rm eq}=\frac p2\log_2(1+M)+\frac q2\log_2(1+M^3).
\]
At $R_2=4L_M$, let $w$ be the positive root of
\[
(M-1)w^2+(M-2)w-M^2=0.
\]
The optimum has
\[
r_1^*=2L_M-\tfrac12\log_2w,\qquad
r_2^*=2L_M+\tfrac12\log_2w,
\]
and
\[
D_2^*=\frac p2\log_2(1+w/M^2)+\frac q2\log_2(1+1/w),\qquad
D_2^{\rm eq}=\frac p2\log_2(1+M^{-2})+\frac q2.
\]
These expressions give Figure ~\ref{fig:reversal}.
As $M\to\infty$,
\[
D_1^*\to\tfrac12,\quad
\frac{D_1^{\rm eq}-D_1^*}{D_1^*}\sim\log_2M,\quad
D_2^*\sim\frac{M^{-3/2}}{\ln2},\quad
\frac{D_2^{\rm eq}-D_2^*}{D_2^*}\sim\frac{\ln2}{2}\sqrt M.
\]
Thus the worst-rate relative regret of every fixed Gaussian pretext can
be arbitrarily large, despite commuting tasks and perfectly nested
supports. This family uses increasing signal strengths and unequal task
weights; it is an existence result, not a claim about typical data.

\clearpage
\section{Proofs for Rate-Matched Gaussian Pretexts}
\label{app:matched}

Throughout this appendix, let
\[
    \rho(K)=-\frac12\log_2\det K
\]
and let $\mathcal K_R$ be defined by
\eqref{eq:gaussian-feasible-set}. We assume that $N_t\succ0$, that
$\pi_t>0$ with $\sum_t\pi_t=1$, and that
\[
    \overline H=\sum_t\pi_tA_t^\top N_t^{-1}A_t\succ0.
\]

\subsection{Preliminary identities}

For each task, define
\begin{equation}
    B_t(K)
    =
    A_t^\top(N_t+A_tKA_t^\top)^{-1}A_t.
\end{equation}
The differential of the downstream objective is
\begin{equation}
    \mathrm dD_\pi(K)[\Delta]
    =
    \frac{1}{2\ln2}
    \sum_t\pi_t\tr\!\left(B_t(K)\Delta\right),
\end{equation}
and hence
\begin{equation}
    \nabla D_\pi(K)
    =
    \frac{1}{2\ln2}\sum_t\pi_tB_t(K).
    \label{eq:down-gradient}
\end{equation}

Similarly, for a pretext with corruption covariance $\Sigma\succ0$,
\begin{align}
    D_\Sigma(K)
    &=
    \frac{1}{2\ln2}
    \left[
        \ln\det(\Sigma+K)-\ln\det\Sigma
    \right],\\
    \nabla D_\Sigma(K)
    &=
    \frac{(\Sigma+K)^{-1}}{2\ln2}.
    \label{eq:pretext-gradient}
\end{align}

Letting $H_t=A_t^\top N_t^{-1}A_t$, the Woodbury identity gives
\begin{align}
    B_t(K)
    &=
    H_t-H_t(K^{-1}+H_t)^{-1}H_t
    \label{eq:B-woodbury-1}\\
    &=
    K^{-1}
    -
    K^{-1}(K^{-1}+H_t)^{-1}K^{-1}.
    \label{eq:B-woodbury-2}
\end{align}
In particular,
\begin{equation}
    0\preceq B_t(K)\prec K^{-1}.
    \label{eq:B-bounds}
\end{equation}
Moreover, $\ker B_t(K)=\ker A_t=\ker H_t$. It follows that
\begin{equation}
    \sum_t\pi_tB_t(K)\succ0
\end{equation}
whenever $\overline H\succ0$.

\subsection{Proof of Theorem~\ref{thm:matched}}

\begin{proof}
Fix $R>0$ and a downstream-optimal covariance
$K_R\in\mathcal K_R$. Define
\[
    G_R=\sum_t\pi_tB_t(K_R).
\]
Because $\overline H\succ0$, the common null space of the positive-weight
task matrices is trivial. Therefore $G_R\succ0$. By \eqref{eq:B-bounds} and $\sum_t\pi_t=1$,
\[
    G_R
    \prec
    \sum_t\pi_tK_R^{-1}
    =
    K_R^{-1}.
\]
Inversion reverses the strict Loewner order, so $G_R^{-1}\succ K_R$, and consequently,
\[
    \Sigma_a^*(R)=G_R^{-1}-K_R\succ0.
\]

It remains to prove that $K_R$ is globally optimal for the matched
pretext. By construction,
\[
    \Sigma_a^*(R)+K_R=G_R^{-1}.
\]
Using \eqref{eq:down-gradient} and
\eqref{eq:pretext-gradient},
\begin{align}
    \nabla D_{\Sigma_a^*(R)}(K_R) =
    \frac{(\Sigma_a^*(R)+K_R)^{-1}}{2\ln2} =
    \frac{G_R}{2\ln2} =
    \nabla D_\pi(K_R).
    \label{eq:matched-gradients}
\end{align}

For completeness, consider any $K\in\mathcal K_R$ and the
affine-invariant geodesic
\[
    \gamma(s)=K_R\#_sK,
    \qquad 0\le s\le1.
\]
The set $\mathcal K_R$ is geodesically convex. Indeed,
\[
    \det(K_R\#_sK)
    =
    (\det K_R)^{1-s}(\det K)^s,
\]
so $\rho(\gamma(s))$ is affine in $s$, and monotonicity of the matrix
geometric mean gives $\gamma(s)\preceq I$ whenever
$K_R,K\preceq I$.

Because $K_R$ minimizes $D_\pi$ on $\mathcal K_R$,
\[
    \left.\frac{\mathrm d}{\mathrm ds}
    D_\pi(\gamma(s))\right|_{s=0^+}
    \ge0.
\]
Equation~\eqref{eq:matched-gradients} implies
\[
    \left.\frac{\mathrm d}{\mathrm ds}
    D_{\Sigma_a^*(R)}(\gamma(s))\right|_{s=0^+}
    =
    \left.\frac{\mathrm d}{\mathrm ds}
    D_\pi(\gamma(s))\right|_{s=0^+}
    \ge0.
\]
The map
\[
    K\mapsto\log\det(\Sigma+K)
\]
is geodesically convex for every $\Sigma\succ0$. Therefore
$s\mapsto D_{\Sigma_a^*(R)}(\gamma(s))$ is convex, and
\begin{align}
    D_{\Sigma_a^*(R)}(K)
    &\ge
    D_{\Sigma_a^*(R)}(K_R)
    +
    \left.\frac{\mathrm d}{\mathrm ds}
    D_{\Sigma_a^*(R)}(\gamma(s))\right|_{s=0^+}\\
    &\ge
    D_{\Sigma_a^*(R)}(K_R).
\end{align}
Since this holds for every $K\in\mathcal K_R$, the covariance $K_R$ is
a global optimum of the matched-pretext problem. Evaluating the
downstream objective at this same covariance gives
\[
    \Reg_\pi(a_R,R)=0.
\]
\end{proof}

\paragraph{Nonuniqueness.}
For any $c>0$, let
\[
    H_a=cK_R^{-1},
    \qquad
    \Sigma_a=c^{-1}K_R.
\]
Then
\[
    \nabla D_a(K_R)
    =
    \frac{c}{c+1}\frac{K_R^{-1}}{2\ln2},
\]
which is proportional to the gradient of the active rate constraint.
The same geodesic first-order argument shows that $K_R$ is also an
optimum for this pretext. Thus Theorem~\ref{thm:matched} identifies a
task-dependent matched pretext, not the unique pretext inducing $K_R$.

\subsection{Endpoint behavior of the downstream optimum}
\label{app:endpoint-expansions}

We first note that the rate constraint is exhausted: $\rho(K_R)=R$. Indeed, if $\rho(K_R)<R$, then $(1-\epsilon)K_R$ is feasible for
sufficiently small $\epsilon>0$. Since $\overline H\succ0$,
$D_\pi((1-\epsilon)K_R)<D_\pi(K_R)$, contradicting optimality.
Consequently,
\begin{equation}
    \det K_R=2^{-2R}.
    \label{eq:det-KR}
\end{equation}

\subsubsection{Low-rate limit}

Let
\[
    E_R=I-K_R\succeq0.
\]
Equation~\eqref{eq:det-KR} implies $K_R\to I$, and hence $E_R\to0$, as
$R\downarrow0$. Expanding the rate gives
\begin{align}
    2(\ln2)R =
    -\ln\det(I-E_R) =
    \tr E_R+O(\|E_R\|_F^2).
    \label{eq:rate-low-proof}
\end{align}
In particular,
\begin{equation}
    \frac{\tr E_R}{2(\ln2)R}\longrightarrow1.
    \label{eq:trace-E}
\end{equation}

At $K=I$,
\begin{align}
    B_t(I) = A_t^\top(N_t+A_tA_t^\top)^{-1}A_t =
    H_t(I+H_t)^{-1} = C_t.
\end{align}
A first-order Taylor expansion therefore gives
\begin{equation}
    D_\pi(I-E)
    =
    D_\pi(I)
    -
    \frac{\tr(\overline C E)}{2\ln2}
    +O(\|E\|_F^2).
    \label{eq:D-low-proof}
\end{equation}

Let $\lambda_1$ be the largest eigenvalue of $\overline C$, with unit
eigenvector $v$. The rank-one covariance
\[
    \widehat K_R
    =
    I-\left(1-2^{-2R}\right)vv^\top
\]
is feasible and satisfies $\det\widehat K_R=2^{-2R}$. Comparing $K_R$
with $\widehat K_R$ in \eqref{eq:D-low-proof} shows that
\begin{equation}
    \frac{\tr(\overline C E_R)}{\tr E_R}
    \longrightarrow\lambda_1.
    \label{eq:Rayleigh-limit}
\end{equation}

The matrices
\[
    Q_R=\frac{E_R}{\tr E_R}
\]
are positive semidefinite with unit trace. Every accumulation point
$Q$ therefore satisfies
\[
    Q\succeq0,\qquad \tr Q=1,\qquad
    \tr(\overline C Q)=\lambda_1.
\]
If $\lambda_1$ is simple, equality in the maximal-eigenvalue bound
forces $Q=vv^\top$. Thus
\[
    \frac{E_R}{\tr E_R}\longrightarrow vv^\top.
\]
Combining this with \eqref{eq:trace-E} yields
\[
    \frac{I-K_R}{2(\ln2)R}
    \longrightarrow vv^\top.
\]

\subsubsection{High-rate limit}

For $K\to0$, Taylor expansion of the logarithmic determinant gives
\begin{equation}
    D_\pi(K)
    =
    \frac{\tr(\overline H K)}{2\ln2}
    +O(\|K\|_F^2).
    \label{eq:D-high-proof}
\end{equation}
Let
\[
    \delta_R=2^{-2R},
    \qquad
    c_R=(\det\overline H)^{1/d}\delta_R^{1/d}.
\]
The candidate $\widehat K_R=c_R\overline H^{-1}$ satisfies
\[
    \det\widehat K_R
    =
    \frac{c_R^d}{\det\overline H}
    =
    \delta_R.
\]
It is therefore feasible for all sufficiently large $R$. Moreover,
\[
    D_\pi(\widehat K_R)
    =
    \frac{dc_R}{2\ln2}+O(c_R^2).
\]
Optimality implies $D_\pi(K_R)=O(c_R)$.

There exists a constant $\alpha>0$, independent of
$0\prec K\preceq I$, such that
\[
    D_\pi(K)\ge\alpha\,\tr(\overline H K).
\]
For example, this follows from
$\ln(1+x)\ge x/(1+M)$ on $[0,M]$, with
$M=\max_t\|H_t\|_{\mathrm{op}}$. Hence
\[
    \tr(\overline H K_R)=O(c_R).
\]
Since $\overline H\succ0$, this implies $\|K_R\|_{\mathrm{op}}=O(c_R)$.

Define
\[
    L_R=\frac{K_R}{c_R}.
\]
The sequence $(L_R)$ is bounded and
\begin{equation}
    \det L_R
    =
    \frac{\delta_R}{c_R^d}
    =
    \frac{1}{\det\overline H}.
    \label{eq:det-L}
\end{equation}
Dividing \eqref{eq:D-high-proof} by $c_R$ and comparing with
$\widehat K_R$ shows that every accumulation point $L$ of $(L_R)$
minimizes $\tr(\overline H L)$ subject to
\[
    L\succ0,
    \qquad
    \det L=(\det\overline H)^{-1}.
\]
By the AM-GM inequality,
\begin{align}
    \tr(\overline H L) =
    \tr\!\left(\overline H^{1/2}L\overline H^{1/2}\right) \ge
    d\det\!\left(
        \overline H^{1/2}L\overline H^{1/2}
    \right)^{1/d} =d.
\end{align}
Equality holds uniquely when $\overline H^{1/2}L\overline H^{1/2}=I$, or equivalently when $L=\overline H^{-1}$. Every accumulation point is therefore $\overline H^{-1}$, proving
\[
    \frac{K_R}{c_R}\longrightarrow\overline H^{-1}.
\]

\subsection{Proof of Corollary~\ref{cor:limits}}

\begin{proof}
Recall that
\[
    G(K)
    =
    \sum_t\pi_t
    A_t^\top(N_t+A_tKA_t^\top)^{-1}A_t
\]
is continuous on $K\succeq0$.

As $R\downarrow0$, the rate constraint gives $K_R\to I$. Hence
\begin{align}
    G_R
    &\longrightarrow
    G(I)\\
    &=
    \sum_t\pi_t
    A_t^\top(N_t+A_tA_t^\top)^{-1}A_t =
    \sum_t\pi_tH_t(I+H_t)^{-1} = \overline C.
\end{align}
Since $\overline C\succ0$, inversion is continuous at $\overline C$.
Therefore
\[
    \Sigma_a^*(R)
    =
    G_R^{-1}-K_R
    \longrightarrow
    \overline C^{-1}-I.
\]

As $R\to\infty$, equation~\eqref{eq:high} gives $K_R\to0$. Consequently,
\begin{align}
    G_R \longrightarrow
    G(0) =
    \sum_t\pi_tA_t^\top N_t^{-1}A_t = \overline H.
\end{align}
Thus
\[
    \Sigma_a^*(R)
    =
    G_R^{-1}-K_R
    \longrightarrow
    \overline H^{-1},
\]
completing the proof.
\end{proof}

\clearpage
\section{Proof of the Bounded-SNR Approximation}
\label{app:bounded-snr}
All logarithms in this section of the appendix are natural unless explicitly marked otherwise. Define
\begin{equation}
F(K)=2\ln2\,D_\pi(K)
=\sum_t\pi_t\ln\det(I+H_t^{1/2}KH_t^{1/2}).
\label{eq:snr-F}
\end{equation}
We first prove a comparison for linear covariance objectives. The argument
does not assume simultaneous diagonalization.

\begin{lemma}[Trace comparison under bounded matrix mismatch]
\label{lem:snr-trace}
Let $A,G\succ0$ satisfy $G\preceq A\preceq\kappa G$ for some
$\kappa\ge1$. Fix a finite $R\ge0$ and let
\[
\widehat K\in\arg\min_{K\in\mathcal K_R}\operatorname{tr}(AK).
\]
Then, for every $K\in\mathcal K_R$,
\begin{equation}
\operatorname{tr}(G\widehat K)
\le S(\kappa)\operatorname{tr}(GK).
\label{eq:snr-trace-comparison}
\end{equation}
\end{lemma}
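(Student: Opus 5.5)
The plan is to reduce everything to a single arithmetic--geometric mean comparison, using the KKT certificate of $\widehat K$ to absorb the ceiling constraint $K\preceq I$. The case $R=0$ is trivial, since $\mathcal K_0=\{I\}$. So fix $R>0$ and write $\delta=2^{-2R}$.

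\textbf{Step 1: structure of $\widehat K$.} The map $K\mapsto\tr(AK)$ is linear, $\rho$ is geodesically affine, and $\mathcal K_R$ is geodesically convex (Lemma~\ref{lem:geodesic-convexity}). The problem is also strictly feasible. Exactly as in Lemma~\ref{lem:supporting-multiplier} and Proposition~\ref{prop:gaussian-KKT}, I therefore expect multipliers $\nu>0$ and $\Lambda\succeq0$ with
\[
A=\nu\widehat K^{-1}-\Lambda,\qquad \Lambda(I-\widehat K)=0,\qquad \det\widehat K=\delta .
\]
Rate exhaustion holds as in Lemma~\ref{lem:existence-uniqueness}, because $A\succ0$ makes the objective strictly Loewner-increasing. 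Concretely, in the eigenbasis of $A$ one has $\widehat k_i=\min(1,\nu/a_i)$. Set $A'=A+\Lambda=\nu\widehat K^{-1}$. The complementarity condition gives $\Lambda\widehat K=\Lambda$, so $\tr(\Lambda\widehat K)=\tr\Lambda$.

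\textbf{Step 2: a lower bound on $\tr(GK)$ using the same $\Lambda$.} Put $G'=G+\Lambda$. For any $K\in\mathcal K_R$, the constraint $K\preceq I$ gives $\tr(\Lambda K)\le\tr\Lambda$. Combining this with AM--GM applied to $G'^{1/2}KG'^{1/2}$ and $\det K\ge\delta$, I get
\[
\tr(GK)\ \ge\ \tr(G'K)-\tr\Lambda\ \ge\ d\,(\det G')^{1/d}\delta^{1/d}-\tr\Lambda=:\ell-\tr\Lambda .
\]

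\textbf{Step 3: an upper bound on $\tr(G\widehat K)$ via Specht's inequality.} By Step 1, $\tr(G\widehat K)=\tr(G'\widehat K)-\tr\Lambda=\nu\tr(G'A'^{-1})-\tr\Lambda$. Adding $\Lambda\succeq0$ preserves the sandwich: since $\kappa\ge1$, we still have $G'\preceq A'\preceq\kappa G'$. Hence the eigenvalues of $A'^{-1/2}G'A'^{-1/2}$ lie in $[1/\kappa,1]$. Specht's theorem (arithmetic mean over geometric mean of numbers in $[m,M]$ is at most $S(M/m)$) then gives
\[
\tr(G'A'^{-1})\le S(\kappa)\,d\,\det(G'A'^{-1})^{1/d}.
\]
Using $\det A'=\nu^d/\delta$, this yields $\nu\tr(G'A'^{-1})\le S(\kappa)\,\ell$. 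Therefore
\[
\tr(G\widehat K)\le S(\kappa)\ell-\tr\Lambda\le S(\kappa)(\ell-\tr\Lambda)\le S(\kappa)\tr(GK),
\]
where the middle inequality uses $S(\kappa)\ge1$ and $\tr\Lambda\ge0$.

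\textbf{Main obstacle.} The delicate point is the ceiling $K\preceq I$. Without it, the naive bound only gives the factor $\kappa$, or requires the optimal directions for $A$ and $G$ to coincide. The trick that should resolve this is to shift both $A$ and $G$ by the same multiplier $\Lambda$. This turns $\widehat K$ into an unconstrained $\det$-constrained optimum for $A'$, keeps the mismatch bound $\kappa$ intact, and contributes only an additive $-\tr\Lambda$ to both sides, which the factor $S(\kappa)\ge1$ absorbs. I would also verify that the KKT multipliers exist with $\nu>0$ in the nondifferentiable corner cases. This should follow verbatim from the Slater argument in Lemma~\ref{lem:supporting-multiplier}. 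Alternatively, it can be read off directly from the explicit water-filling formula in $A$'s eigenbasis, with $\Lambda=\diag\bigl((\nu-a_i)_+\bigr)$.
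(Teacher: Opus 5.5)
\textbf{There is a genuine gap in the last line of Step~3.} Your individual bounds $\tr(G\widehat K)\le S(\kappa)\ell-\tr\Lambda$ and $\tr(GK)\ge\ell-\tr\Lambda$ are correct. The sandwich $G'\preceq A'\preceq\kappa G'$, the Specht step and $\det A'=\nu^d/\delta$ all check out. The problem is the middle inequality $S(\kappa)\ell-\tr\Lambda\le S(\kappa)(\ell-\tr\Lambda)$, which is equivalent to $(S(\kappa)-1)\tr\Lambda\le0$. It therefore goes the wrong way whenever $\kappa>1$ and $\Lambda\ne0$.

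Subtracting the same constant $c=\tr\Lambda>0$ from both sides of a ratio bound worsens it rather than absorbing it. Your two bounds only give $\tr(G\widehat K)/\tr(GK)\le S(\kappa)+(S(\kappa)-1)c/(\ell-c)$, and $\ell-c$ can even be nonpositive. Take $d=2$, $A=\diag(1,0.1)$, $G=0.1I$, $\kappa=10$, $R=\tfrac12$. Then $\nu=\tfrac12$, $\widehat K=\diag(\tfrac12,1)$, $\Lambda=\diag(0,0.4)$ and $\ell=2\sqrt{0.025}\approx0.316<\tr\Lambda$. Your chain would then force $\tr(G\widehat K)<0$, whereas in fact $\tr(G\widehat K)=0.15$.

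So the argument proves the lemma only when $\Lambda=0$, i.e.\ when the ceiling $K\preceq I$ is inactive at $\widehat K$. That is exactly the case that avoids the obstacle you identified. Shifting by a common multiplier cannot close this, because the offset enters additively while the target bound is multiplicative.

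\textbf{How the paper avoids the problem.} It never dualizes the ceiling. For feasible $K$ it uses the geodesic $K(s)=\widehat K^{1/2}e^{sV}\widehat K^{1/2}$, with $V=\log(\widehat K^{-1/2}K\widehat K^{-1/2})$. The rate is affine in $s$ and $K(s)\preceq(1-s)\widehat K+sK\preceq I$, so the whole path lies in $\mathcal K_R$. Optimality of $\widehat K$ then yields the log-domain first-order condition $\tr(CV)\ge0$, where $C=\widehat K^{1/2}A\widehat K^{1/2}$.

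With $B=\widehat K^{1/2}G\widehat K^{1/2}$, one has $\tr(GK)/\tr(G\widehat K)=\tr(Be^V)/\tr B$. In an eigenbasis of $V$, Jensen's inequality bounds this below by $e^{-D_{\rm KL}(P\|Q)}$, where $P,Q$ are the normalized diagonals of $C$ and $B$. The ratios $c_i/b_i\in[1,\kappa]$ together with a chord bound on $x\ln x$ give $D_{\rm KL}(P\|Q)\le\ln S(\kappa)$.

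The comparison is multiplicative from the outset, so the ceiling costs nothing. Your Specht/AM--GM argument is a clean proof of the case where the ceiling is inactive. To handle $\Lambda\ne0$ you need a log-domain first-order condition of this kind. The Euclidean optimality condition alone only gives $\tr(G\widehat K)\le\tr(A\widehat K)\le\tr(AK)\le\kappa\,\tr(GK)$.
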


\begin{proof}
If $\kappa=1$, then $A=G$ and the claim follows from optimality.
Suppose $\kappa>1$. Given any feasible $K$, write
\[
V=\log(\widehat K^{-1/2}K\widehat K^{-1/2}),\qquad
K(s)=\widehat K^{1/2}e^{sV}\widehat K^{1/2},\quad 0\le s\le1.
\]
This path remains feasible. Indeed,
\[
\ln\det K(s)=(1-s)\ln\det\widehat K+s\ln\det K,
\]
so the rate constraint is preserved. Applying
$x^s\le(1-s)+sx$ to the eigenvalues of $e^V$ gives
\[
K(s)\preceq(1-s)\widehat K+sK\preceq I.
\]
Optimality at $s=0$ therefore implies
\begin{equation}
\operatorname{tr}(CV)\ge0,\qquad
C=\widehat K^{1/2}A\widehat K^{1/2}.
\label{eq:snr-directional}
\end{equation}
Let $B=\widehat K^{1/2}G\widehat K^{1/2}$, so
$B\preceq C\preceq\kappa B$. In an orthonormal eigenbasis of $V$,
write its eigenvalues as $v_i$ and the diagonal entries of $B,C$ as
$b_i,c_i$. Then
\[
0<b_i\le c_i\le\kappa b_i,\qquad \sum_i c_iv_i\ge0.
\]
Define probability vectors
$P_i=c_i/\sum_jc_j$ and $Q_i=b_i/\sum_jb_j$.
By Jensen's inequality and \eqref{eq:snr-directional},
\begin{align}
\frac{\operatorname{tr}(GK)}{\operatorname{tr}(G\widehat K)}
&=\frac{\operatorname{tr}(Be^V)}{\operatorname{tr}(B)}
=\sum_iQ_ie^{v_i}\nonumber\\
&=\sum_iP_i\exp\!\left(v_i+\ln\frac{Q_i}{P_i}\right)
\ge \exp\!\left(\sum_iP_iv_i-D_{\rm KL}(P\|Q)\right)
\ge e^{-D_{\rm KL}(P\|Q)}.
\label{eq:snr-jensen}
\end{align}
Here $D_{\rm KL}$ uses natural logarithms. Set
$x_i=c_i/b_i\in[1,\kappa]$ and $t=\sum_iQ_ix_i$. Then
\[
D_{\rm KL}(P\|Q)
=\frac{\sum_iQ_ix_i\ln x_i}{t}-\ln t.
\]
Convexity gives the chord bound
\[
x\ln x\le\frac{\kappa\ln\kappa}{\kappa-1}(x-1),
\qquad 1\le x\le\kappa.
\]
Writing $a=\kappa\ln\kappa/(\kappa-1)$, we obtain
\begin{equation}
D_{\rm KL}(P\|Q)
\le a(1-t^{-1})-\ln t
\le a-1-\ln a
=\ln S(\kappa).
\label{eq:snr-kl}
\end{equation}
The middle expression is maximized over $t>0$ at $t=a$.
Combining \eqref{eq:snr-jensen} and \eqref{eq:snr-kl} proves the lemma.
\end{proof}

\begin{proof}[Proof of Theorem~\ref{thm:bounded-snr}]
Set $A=\overline H$ and fix $R\ge0$. Abbreviate the downstream optimum
as $K_*=K_R$ and the fixed-pretext optimum as $\widehat K=K_R^A$.
The case $R=0$ is immediate because $\mathcal K_0=\{I\}$, so assume
$R>0$.

We first identify $\widehat K$ as a minimizer of the linear objective
$\operatorname{tr}(AK)$. If $A=U\operatorname{diag}(a_i)U^\top$, the
single-task Gaussian allocation rule \eqref{eq:pretextalloc} gives
\begin{equation}
\widehat K=U\operatorname{diag}(\widehat k_i)U^\top,
\qquad
\widehat k_i=\min\{1,\nu/a_i\},
\label{eq:snr-waterfill}
\end{equation}
where $\nu>0$ is selected to exhaust the rate budget. This matrix also
solves the convex program minimizing $\operatorname{tr}(AK)$ over
$\mathcal K_R$. To verify this directly, its KKT stationarity condition is
\[
A-\nu\widehat K^{-1}+M=0,
\qquad
M=U\operatorname{diag}((\nu-a_i)_+)U^\top\succeq0,
\]
with $M(I-\widehat K)=0$ and active constraint
$-\ln\det\widehat K=2R\ln2$. These conditions are sufficient by convexity.

Next let $G=\nabla F(K_*)$, namely
\begin{equation}
G=\sum_t\pi_tH_t^{1/2}
(I+H_t^{1/2}K_*H_t^{1/2})^{-1}H_t^{1/2}.
\label{eq:snr-gradient}
\end{equation}
Since $K_*\preceq I$ and $H_t\preceq LI$, each residual-SNR matrix
$B_t=H_t^{1/2}K_*H_t^{1/2}$ satisfies $0\preceq B_t\preceq LI$.
It follows that
\begin{equation}
\frac{A}{1+L}\preceq G\preceq A,
\qquad\text{or equivalently}\qquad
G\preceq A\preceq(1+L)G.
\label{eq:snr-gradient-sandwich}
\end{equation}
In particular, $G\succ0$. Lemma~\ref{lem:snr-trace} now gives
\[
\operatorname{tr}(G\widehat K)
\le S(1+L)\operatorname{tr}(GK_*).
\]
The function $F$ is concave in the ordinary matrix coordinates, hence
\begin{align}
F(\widehat K)
&\le F(K_*)+\operatorname{tr}(G(\widehat K-K_*))\nonumber\\
&\le F(K_*)+[S(1+L)-1]\operatorname{tr}(GK_*).
\label{eq:snr-final-concavity}
\end{align}
If $\lambda_{ti}$ denote the eigenvalues of $B_t$, then
\[
\operatorname{tr}(GK_*)
=\sum_{t,i}\pi_t\frac{\lambda_{ti}}{1+\lambda_{ti}}
\le\sum_{t,i}\pi_t\ln(1+\lambda_{ti})
=F(K_*).
\]
Substitution in \eqref{eq:snr-final-concavity} and division by $2\ln2$
prove \eqref{eq:bounded-snr-factor}. The same matrix $A$ works at every
rate, proving \eqref{eq:bounded-snr-regret}.

For the expansion, let $x=\ln(1+L)$ and
$a=x/(1-e^{-x})=1+x/2+O(x^2)$. Since
$\ln S(e^x)=a-1-\ln a=x^2/8+O(x^3)$ and
$x=L+O(L^2)$, equation~\eqref{eq:bounded-snr-small-L} follows.
\end{proof}

\paragraph{Task-relevant subspace.}
If $\overline H$ is singular, apply the result after restricting the input
to $\operatorname{range}(\overline H)$, as in Section~\ref{sec:gaussian-task-model}.
Equivalently, allow a positive-semidefinite pretext that ignores the common
task-nullspace. Requiring a strictly positive-definite pretext on additional
task-irrelevant dimensions is a different restriction and need not admit
this uniform all-rate guarantee.

\clearpage
\section{Experiment Details}
\label{app:experiment-details}

This appendix describes the data partitions, training procedures, estimators,
and numerical optimization used in the experiments. Section~\ref{app:celeba-training-details} gives the details for the CelebA experiment, Section~\ref{app:dsprites-training-details} gives the details for the dSprites experiment, and Section~\ref{app:gaussian-solver} describes the algorithm used for the Gaussian covariance optimization \eqref{eq:program}. The main crossover and recovery figures remain in the main text.

\paragraph{Scope and model selection.}
We describe the frozen-backbone CelebA experiment and the main dSprites
experiment used in the paper figures. Within each experiment, pretexts share
the representation architecture, optimization settings, data-partition
protocol, and rate or regularization grid. Test losses are used for reporting,
not for the checkpoint or hyperparameter selection procedures described
below. Architectural dimensions, learning rates, epoch budgets, and numerical
regularization constants are fixed configuration settings. The recorded
experiments do not contain a broader search establishing their optimality.
We distinguish these fixed settings from the explicitly implemented
validation-based choices.

\paragraph{Computational resources.}
The experiments were conducted on computing infrastructure equipped with
four NVIDIA H100 GPUs.

\paragraph{Consent and sensitive content.} CelebA contains identifiable celebrity face images and appearance-related attribute annotations. Its official dataset page states that the images were obtained from the Internet, but the page does not document whether the depicted individuals consented to their inclusion or subsequent research use. Our study reuses the existing dataset and does not collect new data from participants. The supplement redistributes no CelebA images, annotations, or per-image feature caches. dSprites consists of procedurally generated shapes and contains no human-subject imagery.

\subsection{CelebA experiment}
\label{app:celeba-training-details}

\paragraph{Data and frozen features.}
We sample without replacement 10,000 training images, 2,000 validation images,
and 2,000 test images from the official CelebA partitions of sizes
162,770, 19,867, and 19,962, respectively. Subsampling uses NumPy generators
with seeds 42, 43, and 44 for the three partitions. The validation subset
is randomly divided, using seed $42+9182$, into 1,000 images for encoder
checkpoint selection and 1,000 images for selection among pretexts. The
training and test subsets remain unchanged across the five representation
seeds $\{42,43,44,45,46\}$. These seeds therefore do not constitute five
independent data splits.

An ImageNet-1K V1 pretrained ResNet-18, with its final classifier removed,
provides 512-dimensional features. The backbone is frozen and evaluated in
inference mode. Anchor images use deterministic bilinear resizing to a shorter edge of 256,
a $224\times224$ center crop, and channel normalization with means
$(0.485,0.456,0.406)$ and standard deviations $(0.229,0.224,0.225)$. PCA and whitening are fitted exclusively to training
anchor features and retain at most 32 eigen-directions with eigenvalues
exceeding $\max(10^{-5}\lambda_{\max},10^{-12})$. The fitted transform is
applied unchanged to validation and test images. Target-view features use
separately fitted training-only PCA/whitening transforms, also of dimension
at most 32. Feature extraction uses batches of 64 and two data-loader workers.

\paragraph{Tasks and pretexts.}
The four equally weighted evaluation attributes are \texttt{Smiling},
\texttt{Eyeglasses}, \texttt{Blond\_Hair}, and \texttt{Young}. The specialist
shown in the figures predicts five disjoint source attributes:
\texttt{Mouth\_Slightly\_Open}, \texttt{High\_Cheekbones},
\texttt{Wavy\_Hair}, \texttt{Straight\_Hair}, and
\texttt{Receding\_Hairline}. The generalist reconstructs the anchor feature
vector, rather than image pixels. The complete eight-pretext bank additionally
contains lower-region prediction, grayscale-view prediction, separate
expression and hair attribute objectives, an equally weighted
semantic/reconstruction objective, and equally weighted upper/lower-region
prediction. The task-aware baseline uses the four evaluation attributes
on the training partition. It is distinct from the source-label pretexts. 

The average-task baseline uses a single-target Gaussian pretext with information matrix \(\overline H=\sum_t\pi_tH_t\), formed from the fitted evaluation-task matrices, as in Theorem \ref{thm:matched}. Unlike the task-aware baseline, which optimizes the weighted mixture of individual task losses, this baseline optimizes the single Gaussian prediction objective defined by their average information matrix.

Two reproducible target views per configured image policy are cached.
Target augmentations use a random resized crop to $224\times224$ with crop
area fraction in $[0.9,1]$ and horizontal-flip probability $0.5$.
Grayscale and region-prediction views use brightness, contrast, and saturation
jitter of $0.1$ and hue jitter of $0.025$. Upper and lower crops retain
horizontal fractions $[0.08,0.92]$ and vertical fractions $[0.05,0.62]$ and
$[0.40,0.97]$, respectively, with a vertical shift sampled from
$[-0.025,0.025]$. Views use ImageNet channel normalization. The cached views
are reused during bottleneck training, rather than generated afresh each epoch.

\paragraph{Gaussian surrogates and channels.}
For each scalar attribute, we fit a ridge regression with an intercept and
ridge coefficient $10^{-3}$ on whitened training features. Five-fold shuffled
cross-fitting, with seed 42, estimates the residual mean squared error.
The final mean predictor is refitted on all training examples. With fitted
slope $b_t$ and residual variance
$\hat\sigma_t^2=\max(\mathrm{MSE}_{\mathrm{OOF},t},10^{-4})$, the task matrix is
$H_t=b_tb_t^\top/\hat\sigma_t^2$. The cross-fitting folds share the unlabeled
PCA/whitening transform fitted on the complete training partition. Preprocessing
is not refitted separately within each regression fold. Feature-prediction
surrogates are obtained from training cross-covariances and canonical
correlations, clipped at $0.99$ before conversion to Gaussian signal-to-noise
ratios. Source-attribute surrogates use the same ridge/cross-fitting procedure.

The nominal Gaussian rate grid is
$\{0,0.25,1,2,4,8,16,32\}$ bits. Given residual covariance $K$, the channel
is $C=AX+\varepsilon$, with independent standard Gaussian noise and
$A^\top A=K^{-1}-I$. Pure-noise output directions are removed. Zero rate
is represented by a constant feature. Two independent channel-noise draws
per image are used for probe fitting and scoring. Random-number seeds are
shared across candidates and budgets. The nominal rate is mutual information
for the fitted whitened Gaussian model, not an empirical measurement of
mutual information for the non-Gaussian image features.

\paragraph{Finite-code training.}
Code budgets are $B\in\{0,2,4,8,16,32\}$ bits per image. For $B>0$, the
encoder is a $32\!\rightarrow\!128\!\rightarrow\!B$ multilayer perceptron
(with the actual retained PCA dimension replacing 32 if necessary), using
GELU after its hidden layer. Forward passes produce hard bits by thresholding
logits at zero. During training, a straight-through estimator uses the
sigmoid derivative while retaining hard bits in the forward pass. Evaluation
uses deterministic hard bits. Each prediction head has architecture
$B\!\rightarrow\!128\!\rightarrow\!m$ with GELU, where $m$ is the target
dimension. Attribute heads minimize mean binary cross-entropy. Feature heads
minimize Gaussian negative log likelihood per target coordinate, with a
learned full residual covariance parameterized by a Cholesky factor whose
diagonal is $\operatorname{softplus}(a)+10^{-3}$. Composite objectives use
separate heads and a shared encoder. At zero bits, no encoder is trained.

Each nonzero-budget encoder is trained independently for 30 epochs using
AdamW, learning rate $10^{-3}$, weight decay $10^{-4}$, batch size 256, and
gradient-norm clipping at 10. Training shuffles images every epoch and samples
one cached target draw per image and update. The checkpoint minimizing the
corresponding hard-code validation objective, averaged over all cached
validation target draws, is retained. This is best-checkpoint selection over
a fixed 30-epoch run, not early termination. The trained task-aware code
baseline uses the same procedure and is not claimed to be globally optimal.

\paragraph{Downstream probes and selection.}
Each attribute is evaluated with an independent logistic regression fitted
on representations of the 10,000 training images. An internal randomized
80/20 training partition selects $C\in\{0,0.1,1,10\}$ by binary log loss.
$C=0$ denotes an intercept-only predictor. The selected model is refitted on
all training representations. Nonconstant logistic models use L-BFGS,
tolerance $10^{-7}$, and at most 300 iterations. For stochastic channels,
regularization selection uses one draw per training image, followed by
refitting on both draws, avoiding a split of replicas of the same image
between fitting and selection. These CelebA probes reuse representation-training
images. Unlike the dSprites experiment, they do not have an independent
probe-training image partition.

The rate-dependent pretext rule minimizes mean downstream loss on the
separate 1,000-image selection subset, averaged across representation seeds.
The fixed-pretext rule minimizes that selection loss averaged across budgets
and seeds. The Gaussian-predicted rule instead minimizes the fitted Gaussian
surrogate distortion. None of these rules uses test losses. The figures'
explicit specialist/generalist comparison is a specified pair, not a
per-budget test-selected comparison. Reported measured excess NLL subtracts
the fitted uncompressed probe's loss, rather than a known Bayes loss.

The decoder-capacity diagnostic replaces logistic probes with one-hidden-layer
64-unit ReLU classifiers. It uses Adam, learning rate $10^{-3}$, L2 parameter
$10^{-4}$, batches of at most 256, at most 100 epochs, and patience 10 on
checkpoint-validation NLL. Validation improvements smaller than $10^{-8}$
are not counted. Hyperparameters are shared across pretexts.

\paragraph{Uncertainty estimates.}
We use 2,000 paired bootstrap resamples and report pointwise 2.5th and 97.5th percentiles. Gaussian-channel intervals resample test images conditional on the fitted representations and probes, after averaging repeated channel draws within each image. Learned-code intervals independently resample test images and training seeds as crossed factors. Each resample uses the same image indices, and where applicable seed indices, across candidates, budgets, and tasks. Difference intervals are computed from within-resample differences. The decoder-capacity diagnostic uses image resampling only for both compressors. These intervals are not simultaneous confidence bands and do not include uncertainty from drawing new training datasets or repeating model selection.

A separate bootstrap for the Gaussian proxy tasks requests 200 training-image
resamples and refits PCA/whitening and task/pretext moments in each resample,
grouping bootstrap copies of an original image in the same cross-fitting fold.
This is conditional on the frozen backbone and cached view bank. This separate procedure concerns fitted Gaussian surrogate quantities and does not generate the measured-NLL bands in Figure \ref{fig:celeba}.

\paragraph{Crossover and ranking estimators.}
For a generalist-minus-specialist difference curve on the observed rate grid,
the crossover is the first positive-to-negative zero crossing of its
piecewise-linear interpolant. For bootstrap curves, the median and
2.5/97.5 percentiles are computed only over resamples with a crossing.
The crossing fraction and the number of resamples with multiple crossings
are reported separately. No extrapolation beyond the measured grid is used.
Pretexts are ranked by distortion at each nonzero rate. Endpoint instability
is summarized by tie-corrected Kendall $\tau_b$ and the number of discordant
pairs between the lowest and highest nonzero rates. Zero rate is excluded
because all pretexts tie there.

\paragraph{Diagnostics.}
Figures~\ref{fig:app-achieved-rate} and~\ref{fig:app-whitening} assess code
occupancy and the fitted feature coordinates. The plots use retained run
data. No neural networks were retrained to assemble this appendix.

The pair compared in the main text consists of the five-source-attribute
specialist \texttt{joint\_attributes} and the anchor-feature reconstruction
generalist. Figure~\ref{fig:app-full-bank} reports loss curves for every
configured pretext in both compression mechanisms, complementing the
rank views of Figure~\ref{fig:app-rank-bank}.

Figures~\ref{fig:app-per-task} and~\ref{fig:app-per-task-codes}
resolve the aggregate comparison into individual evaluation attributes,
showing that the crossover need not occur uniformly across tasks.
Figure~\ref{fig:app-proxy-fidelity} assesses the association between
Gaussian surrogate distortion and measured excess NLL.
Figure~\ref{fig:app-decoder-capacity} evaluates sensitivity to replacing
linear probes with MLP probes in both compression mechanisms.
Table~\ref{tab:app-theory-data} reports the fitted theoretical quantities
alongside the observed crossovers.

\begin{figure*}[t]
  \centering
  \includegraphics[width=.5\linewidth]{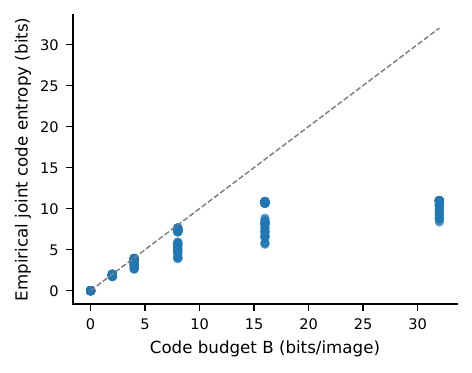}\par\medskip
  \includegraphics[width=.75\linewidth]{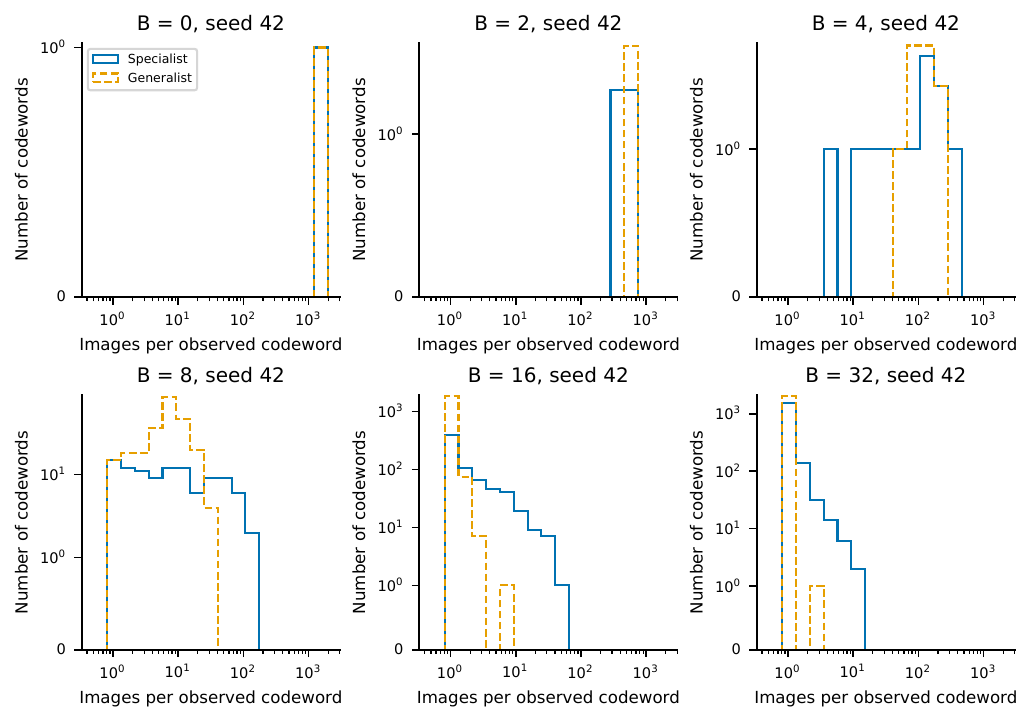}
  \caption{\textbf{Finite-code rate accounting.}
    Top: empirical joint code entropy versus nominal budget for
    300 fitted candidate/budget/seed cells. The identity line
    denotes equality. The maximum observed entropy is 10.96578 bits,
    equal to the finite-test-set ceiling $\log_2(2000)=10.96578$.
    Thus this estimator cannot resolve a 16- or 32-bit population entropy.
    Bottom: test-image codeword-occupancy histograms for the five-attribute
    specialist and reconstruction generalist at all six budgets,
    $B\in\{0,2,4,8,16,32\}$, using seed 42.
    Unobserved codewords are omitted. Sparse occupancy can bias plug-in
    entropy downward. The histograms use logarithmic occupancy and symlog
    count axes to display both singleton and heavily reused codewords.}
  \label{fig:app-achieved-rate}
\end{figure*}

\begin{figure*}[t]
  \centering
  \includegraphics[width=\linewidth]{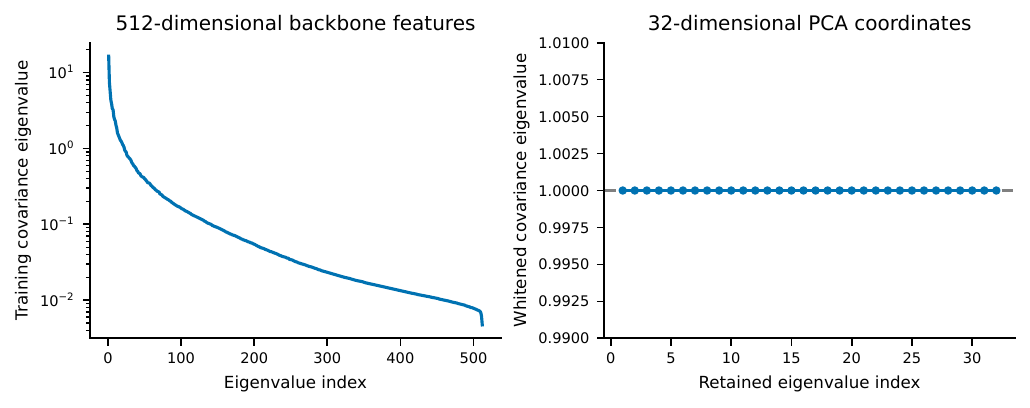}
  \caption{\textbf{Training covariance and whitening.}
    Left: the 512 eigenvalues of the empirical covariance of the 10,000
    training anchor features before PCA, using normalization $1/n$.
    Eigenvalues range from 0.004730 to 16.451719.
    Right: covariance eigenvalues after applying the saved 32-dimensional
    PCA/whitening transform to the same training features. Their maximum
    absolute deviation from one is $5.11\times10^{-15}$, confirming the numerical training-coordinate normalization.}
  \label{fig:app-whitening}
\end{figure*}

\begin{figure*}[t]
  \centering
  \includegraphics[width=0.8\linewidth]{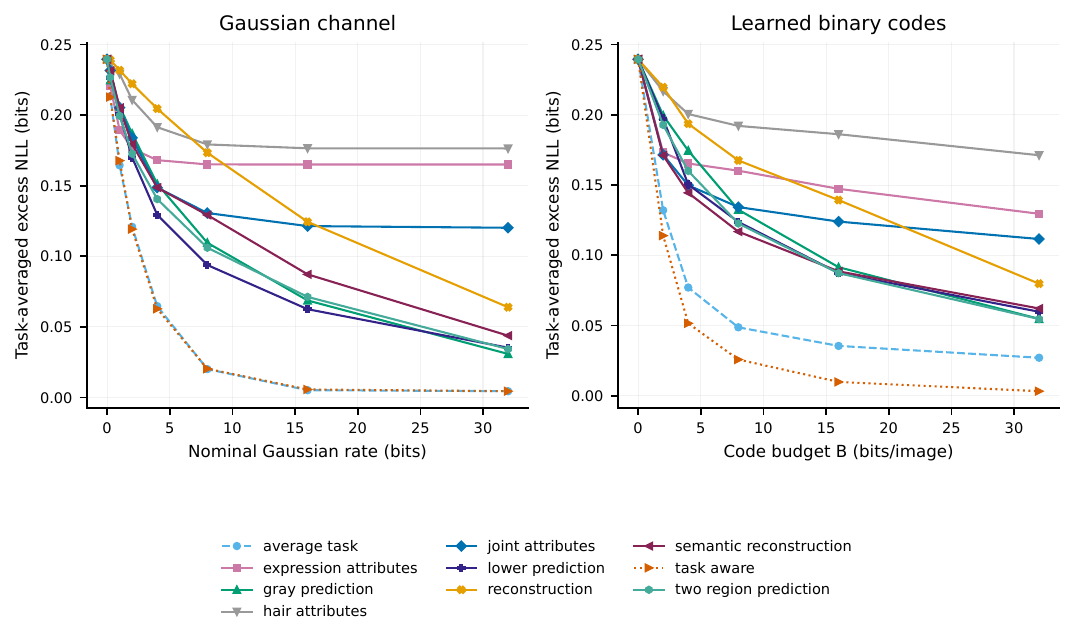}
  \caption{\textbf{Complete CelebA pretext-bank comparison.}
    Both panels show the eight configured pretexts: expression attributes,
    hair attributes, joint attributes, grayscale prediction, lower-region
    prediction, reconstruction, semantic reconstruction, and two-region
    prediction. The task-aware mixture and the single-target Gaussian baseline with information matrix $\overline H$ (“average task”) are also included.
    Curves show five-seed mean task-averaged measured excess NLL relative
    to the uncompressed probe, with markers at each evaluated rate. Bands
    are omitted here for legibility. Colors and markers identify the same
    objectives across panels. The task-aware curve is dotted orange in
    both panels, and the average-task reference is dashed. The left axis
    uses nominal Gaussian rate, the right code budget. The different
    curve shapes reveal objective-dependent changes in relative performance.}
  \label{fig:app-full-bank}
\end{figure*}

\begin{figure*}[t]
  \centering
  \includegraphics[width=.48\linewidth]{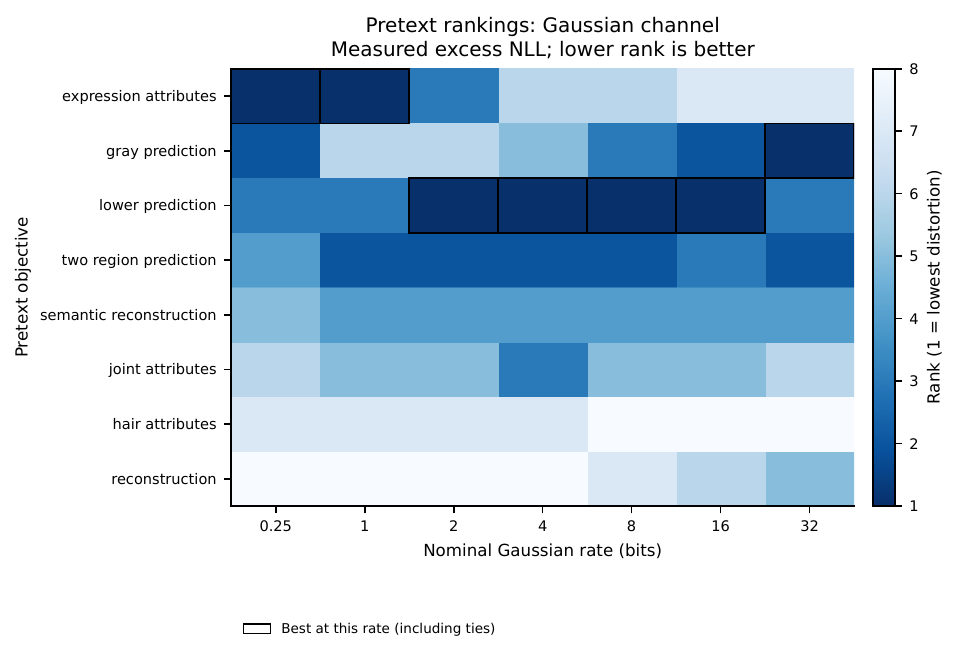}
  \includegraphics[width=.48\linewidth]{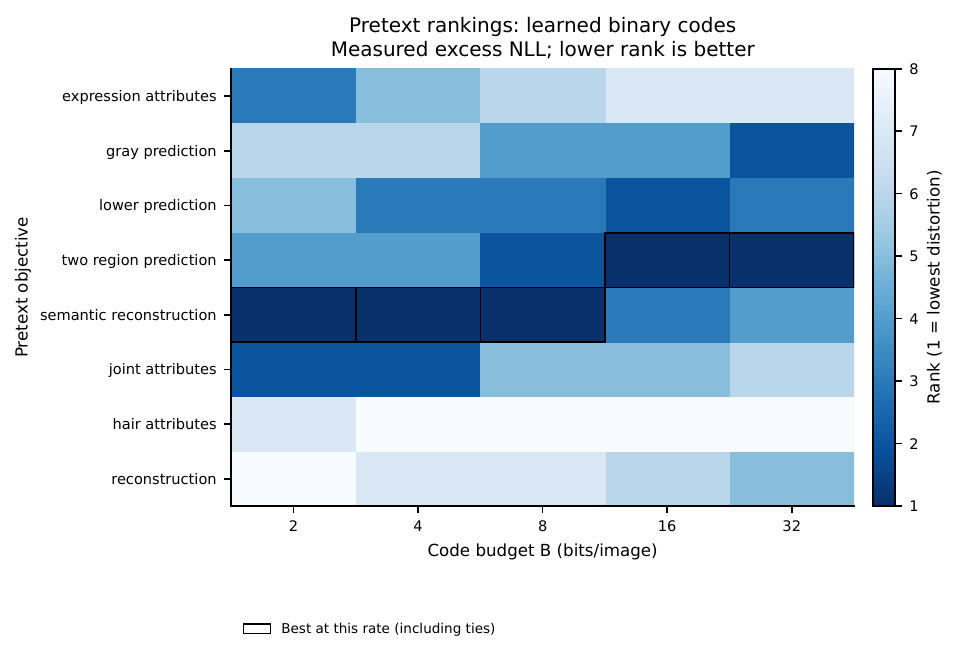}
  \caption{\textbf{Pretext-bank rank instability.} Left: Gaussian representations. Right: learned codes.
    Rows are ordered by the lowest-nonzero-rate rank. Darker cells indicate
    better ranks, and outlines identify column winners including ties.
    Zero rate and the task-aware/average-task references are excluded.
    Gaussian endpoints have $\tau_b=0.357143$, with 9 reversed pairs out of
    28 comparable pairs and a maximum rank change of 6 places. Learned-code endpoints have $\tau_b=0.071429$, with 13 reversed pairs out of 28, and a maximum rank change of 4 places. Thus a pretext's relative performance changes with nominal rate.}
  \label{fig:app-rank-bank}
\end{figure*}

\begin{figure*}[t]
  \centering
  \includegraphics[width=.9\linewidth]{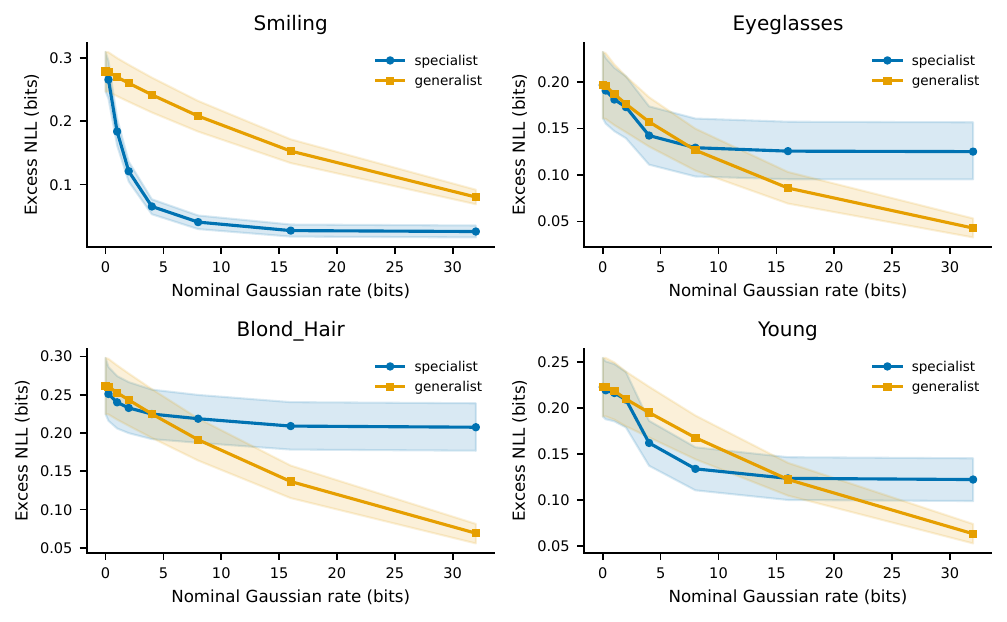}
  \caption{\textbf{CelebA per-attribute Gaussian behavior.}
    Nominal rates are $\{0,0.25,1,2,4,8,16,32\}$ bits. Panels show Smiling,
    Eyeglasses, Blond Hair, and Young. The ordinate is measured excess NLL
    relative to the uncompressed linear probe. Bands are 95\% pointwise
    percentile intervals from 2,000 paired test-image resamples conditional
    on five fitted seeds. At rate 0.25 the specialist has lower mean NLL on
    all four tasks. At rate 32 the generalist is better on Eyeglasses,
    Blond Hair, and Young, while the specialist remains better on Smiling:
    the aggregate reversal is not uniform across tasks.}
  \label{fig:app-per-task}
\end{figure*}

\begin{figure*}[t]
  \centering
  \includegraphics[width=.9\linewidth]{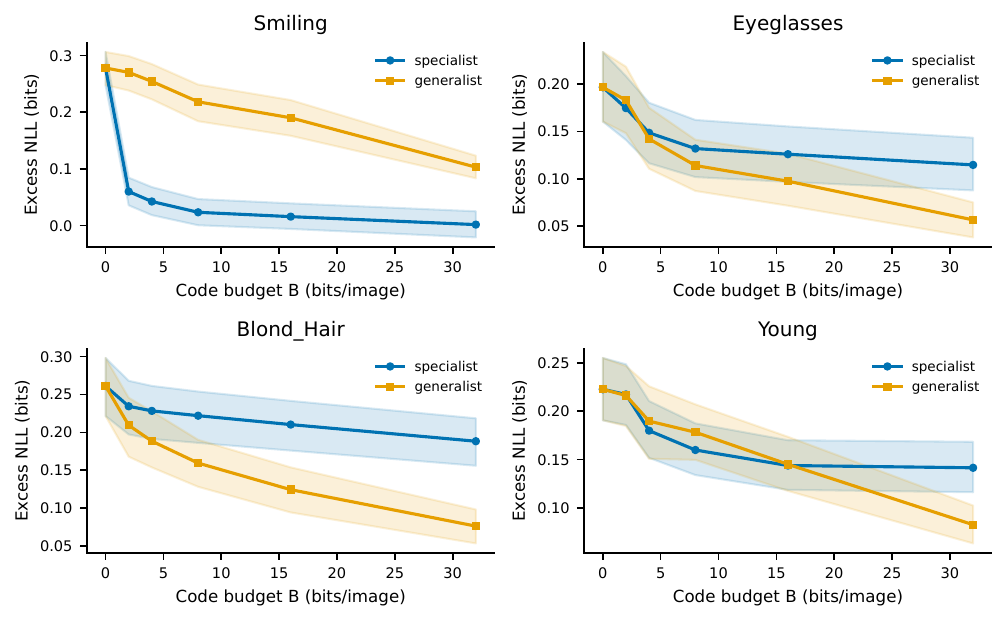}
  \caption{\textbf{CelebA per-attribute learned-code behavior.}
    Code budgets are $B\in\{0,2,4,8,16,32\}$ bits/image. The same four
    evaluation tasks, specialist, generalist, and uncompressed-probe baseline
    are used as in Figure~\ref{fig:app-per-task}. Bands are 95\% pointwise
    percentile intervals from 2,000 paired training-seed/test-image resamples
    using five fitted seeds. These panels expose task-level variation behind
    the aggregate code-budget crossover reported in Table~\ref{tab:app-theory-data}.}
  \label{fig:app-per-task-codes}
\end{figure*}

\begin{figure}[t]
  \centering
  \includegraphics[width=0.5\linewidth]{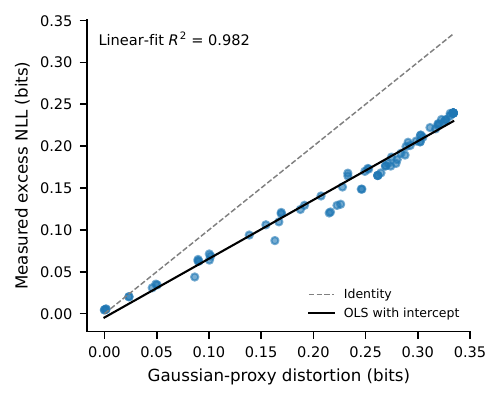}
  \caption{\textbf{Gaussian surrogate fidelity.}
    Ordinary least squares of measured excess NLL on predicted Gaussian
    distortion, with a fitted intercept, gives $R^2=0.9815$ across 80
    objective/rate pairs. The fitted line is
    $\widehat D_{\mathrm{measured}}=0.701817 D_{\mathrm{proxy}}-0.004594$.
    The black line shows this fit. The dashed gray identity line remains
    a calibration reference. The score is descriptive and in-sample, not a
    held-out regression score. Strong linear association coexists with a
    non-unit slope, so it does not imply identity calibration.}
  \label{fig:app-proxy-fidelity}
\end{figure}

\begin{figure*}[t]
  \centering
  \includegraphics[width=0.9\linewidth]{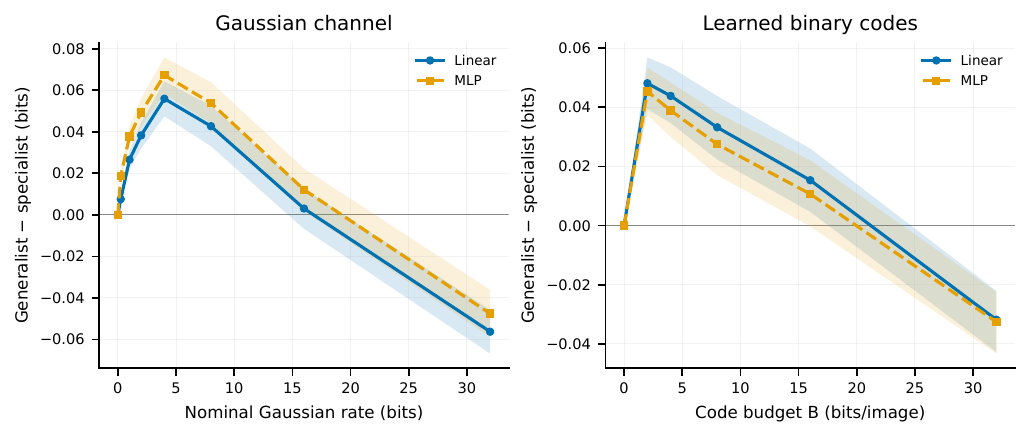}
  \caption{\textbf{Decoder-capacity sensitivity.}
    Left: Gaussian channel. Right: learned binary codes. Linear and
    64-hidden-unit MLP probes evaluate the same representations in each
    mechanism. The ordinate is generalist minus specialist excess NLL.
    The zero line denotes equal performance. Bands in both panels use
    2,000 paired test-image resamples conditional on the five fitted seeds,
    rather than additional seed resampling. Gaussian crossings are 16.8182
    and 19.2453 nominal bits for linear and MLP probes, respectively.
    The code-panel crossings are 21.2035 and 19.9533 bits/image.
    The two rate axes represent different budgets and are not shared.}
  \label{fig:app-decoder-capacity}
\end{figure*}

\begin{table*}[t]
  \centering
  \begin{tabular}{lrl}
\toprule
Quantity & Value & Units / interpretation \\
\midrule
$\lambda_{\max}(\widehat H_{\mathrm{Smiling}})$ & 0.414760 & fitted task SNR \\
$\lambda_{\max}(\widehat H_{\mathrm{Eyeglasses}})$ & 0.737435 & fitted task SNR \\
$\lambda_{\max}(\widehat H_{\mathrm{Blond\_Hair}})$ & 0.884189 & fitted task SNR \\
$\lambda_{\max}(\widehat H_{\mathrm{Young}})$ & 0.375638 & fitted task SNR \\
$d$ & 32 & PCA dimension \\
$\dim S$ & 5 & specialist rank \\
$\Delta$ & 0.0181922 & uniform-subspace reference \\
$D_{\mathrm{miss}}$ & 0.215195 & bits; proxy asymptote \\
Sufficient threshold & 16.2447 & bits; isotropic reference \\
Gaussian crossover & 16.8182 & nominal bits; 95\% CI [14.6479, 19.4241] \\
Code crossover & 21.2035 & bits/image; 95\% CI [16.3215, 25.0128] \\
\bottomrule
\end{tabular}

  \caption{\textbf{Fitted theory-to-data quantities.}
    Values concern the five-source-attribute specialist and reconstruction
    generalist. The first four rows are per-task largest eigenvalues.
    $\Delta=0.0181922$ and $D_{\mathrm{miss}}=0.215195$ are fitted proxy
    quantities. The 16.2447-bit threshold is a sufficient isotropic-reference
    bound, not an equality prediction for the empirical crossover.
    The fitted specialist is anisotropic on its span. The saved strict
    isotropy diagnostic also rejects exact full-space isotropy for the
    reconstruction components. Consequently, the reference threshold is
    not asserted as a certificate for these fitted components.
    Crossover intervals condition on a first positive-to-negative crossing.
    All 2,000 resamples cross in each mechanism, with no multiple crossings.}
  \label{tab:app-theory-data}
\end{table*}

\subsection{dSprites experiment}
\label{app:dsprites-training-details}
\paragraph{Data partitions.}
The primary dSprites experiment uses binary $64\times64$ images from the
official archive, with no resizing or image augmentation. Exact duplicate
pixel arrays are grouped before splitting. Groups are shuffled using
seed $100000+10s$ for experimental seed $s\in\{0,1,2\}$ and assigned intact
to five disjoint folds. Requested sizes are 32,768 representation-training
images, 4,096 representation-validation images, 16,384 probe-training images,
4,096 probe-validation images, and 8,192 test images. Splits are identical across pretexts and $\beta$ values within a seed, but different seeds may reuse images.

The targets are horizontal and vertical position indices, each standardized
using the known uniform 32-level mean $15.5$ and standard deviation
$\sqrt{(32^2-1)/12}$. Normalization is not estimated from held-out labels.
Other rendering factors remain nuisance variation. For $M=4$, task weights
are $(3/4,1/4)$ and independent Gaussian task-noise variances are
$(M^{-2},M^{-4})=(1/16,1/256)$. Task noise is integrated analytically in the
loss, rather than repeatedly sampled as noisy target labels.

\paragraph{Architecture and optimization.}
The image encoder uses three convolutional layers with channels
$1\!\rightarrow\!16\!\rightarrow\!32\!\rightarrow\!32$, kernel sizes
$(5,3,3)$, stride 2, and padding $(2,1,1)$, each followed by SiLU.
The resulting $32\times8\times8$ tensor is flattened and mapped to 128
SiLU units. A linear layer produces the mean and log variance of an
8-dimensional diagonal Gaussian representation. Log variance is clamped
to $[-24,12]$. Representations are sampled by reparameterization.
The task decoder is an $8\!\rightarrow\!128\!\rightarrow\!128\!\rightarrow\!4$
MLP with SiLU hidden activations. It predicts two means and two positive
residual variances, the latter parameterized by softplus plus $10^{-7}$.
The reconstruction decoder is an
$8\!\rightarrow\!128\!\rightarrow\!4096$ MLP with a SiLU hidden layer.

All four pretexts (two specialists, weighted task mixture, and image
reconstruction) minimize
\[
 \mathcal L= D_{\mathrm{pretext}}+
 \beta\,\mathbb E_X\operatorname{KL}_2(q(Z\mid X)\Vert\mathcal N(0,I)).
\]
The task loss for the clean standardized factor value $y_t$, predicted mean $\mu_t$, and total predicted
variance $v_t=\sigma_t^2+v_{\mathrm{res},t}$ is
\[
 \ell_t=\frac{\log(v_t/\sigma_t^2)
       +((y_t-\mu_t)^2+\sigma_t^2)/v_t-1}{2\log 2}.
\]
Specialists minimize their respective task loss. The mixture minimizes
$\tfrac34\ell_1+\tfrac14\ell_2$. Reconstruction minimizes mean Bernoulli
cross-entropy per pixel, converted to bits. Thus reconstruction loss is
normalized by the number of pixels. It is not a summed image log loss.

Each representation is trained from scratch for 100 epochs with Adam,
learning rate $10^{-3}$, batch size 256, no weight decay, and gradient-norm
clipping at 10. The best checkpoint minimizes its own validation
$D_{\mathrm{pretext}}+\beta\mathrm{KL}_2$ objective. Common Monte Carlo
validation draws are reused across epochs by resetting the validation RNG
while restoring the training RNG state afterwards. The regularization grid is
\[
 \beta\in\{0.8,0.4,0.3,0.2,0.15,0.1,0.05,0.02,0.01,
                 0.005,0.002,0.001,0.0005,0.0002\}.
\]
Four pretexts, fourteen regularization values, and three seeds give 168
independent training jobs. The broad $\beta$ grid samples achieved rates on both sides of the potential reversal. Curves are plotted against measured KL, not $\beta$.

\paragraph{Probes and measurement.}
After representation checkpoint selection, the encoder is frozen and a fresh
MLP with the same architecture as the task decoder is trained on the separate
probe-training fold. Probe training uses equal task weights, Adam at
$10^{-3}$, batch size 256, gradient clipping at 10, and 80 epochs. Equal
weighting prevents the probe objective itself from downweighting the rarer
task. The checkpoint with minimum probe-validation NLL is retained.
Probe validation uses eight fixed channel draws per image. Probe training
uses fresh channel samples. The probe and test RNG seeds are $s+30000$
and $s+40000$, respectively.

Test metrics average eight stochastic-channel losses per image, not predictions before evaluating a loss. We report the analytic test-average KL in bits, prevalence-weighted excess NLL, per-factor MSE and $R^2$, and
$\mathcal I_i=-\tfrac12\log_2\mathrm{MSE}_i$.
The KL is a variational upper bound on mutual information.
The recoverability index is a diagnostic transformation of MSE, not an
allocated number of bits for discrete dSprites factors. Its equality to
$-\tfrac12\log_2(1-R_i^2)$ requires unit empirical target variance.
Per-run standard errors cluster duplicate-image groups. Main paper figures
show every seed and across-seed min--max ranges at fixed $\beta$, rather
than interpreting three-seed ranges as confidence intervals.

Table~\ref{tab:dsprites} summarizes mixture-encoder recovery at selected
settings around the reversal.
Figure~\ref{fig:app-dsprites-accounting} compares summed recoverability
indices with the variational rate bound. These indices do not constitute
an additive information decomposition.
Figure~\ref{fig:app-dsprites-context} compares recovery under both
specialists, reconstruction, and the mixture over the full regularization grid.

\paragraph{Diagnostics and controls.}
The primary run contains 168 completed jobs, and the role-swap control run contains 42 completed mixture jobs, with 14 regularization values and seeds $\{0,1,2\}$ in each comparison. The role swap retains 100 representation-training epochs, 80 probe-training epochs, learning rate $10^{-3}$, batch size 256, and an 8-dimensional code.

\paragraph{Role-swap control and crossover audit.}
The role swap exchanges the ordered factors to vertical then horizontal,
retaining task-index weights $(3/4,1/4)$ and noise variances $(1/16,1/256)$.
Figure~\ref{fig:app-role-swap} shows that the recovery ordering reverses
when horizontal and vertical positions exchange task roles.
Crossovers are obtained from the piecewise-linear interpolant on the
observed rate grid, without extrapolation. The primary gap uses the first
positive-to-negative crossing. The swapped plot keeps the physical-factor
gap horizontal minus vertical and uses its first negative-to-positive crossing.
The primary and swapped mean-curve crossings are 5.2638 and
5.1938 bits/image, respectively. All three seeds reverse in each
run. Each seed has exactly one expected-direction crossing.
Table~\ref{tab:app-dsprites-crossovers} gives per-seed estimates and the
adjacent observed-rate brackets used for interpolation. The brackets are
not confidence intervals. The Gaussian surrogate reference is
$R^\star=5.2996$ bits.

\begin{table*}
\centering
    \begin{tabular}{lllll}
        \toprule
        $\mathcal R_{\mathrm{VIB}}$ & $\mathcal I_{\mathrm{common}}$ & $\mathcal I_{\mathrm{precise}}$ & Gap & $\sum_i\mathcal I_i$ \\
        \midrule
        1.75 & 1.85 & 0.00 & 1.84 & 1.85 \\
        2.16 & 2.27 & 0.00 & 2.27 & 2.27 \\
        5.10 & 2.71 & 2.62 & \textbf{0.09} & 5.34 \\
        6.19 & 2.96 & 3.47 & \textbf{-0.52} & 6.43 \\
        7.15 & 3.29 & 4.03 & -0.75 & 7.32 \\
        8.38 & 3.74 & 4.62 & -0.87 & 8.36 \\
        \bottomrule
    \end{tabular}
    \caption{\textbf{dSprites factor recovery.} The stated results are for $M = 4$, averaged over seeds 0,1, and 2.}
    \label{tab:dsprites}
\end{table*}

\begin{table*}[t]
  \centering
  \begin{tabular}{llrrr}
\toprule
Run & Seed & Crossover & Lower grid rate & Upper grid rate \\
\midrule
Primary & 0 & 5.2337 & 5.0378 & 6.1546 \\
Primary & 1 & 5.2432 & 5.1039 & 6.2125 \\
Primary & 2 & 5.3144 & 5.1595 & 6.2178 \\
Role swap & 0 & 5.1481 & 2.1223 & 5.1627 \\
Role swap & 1 & 5.1855 & 5.1095 & 6.1823 \\
Role swap & 2 & 5.2495 & 5.1796 & 6.1687 \\
\bottomrule
\end{tabular}

  \caption{\textbf{Per-seed dSprites crossover audit.}
    Rates are measured variational KL bounds in bits/image. The primary
    gap crosses positive-to-negative. The role-swap physical-factor gap
    crosses negative-to-positive. Every seed has one such crossing.
    Brackets are observed neighboring grid rates, not uncertainty intervals.}
  \label{tab:app-dsprites-crossovers}
\end{table*}

\begin{figure}[t]
  \centering
  \includegraphics[width=0.5\linewidth]{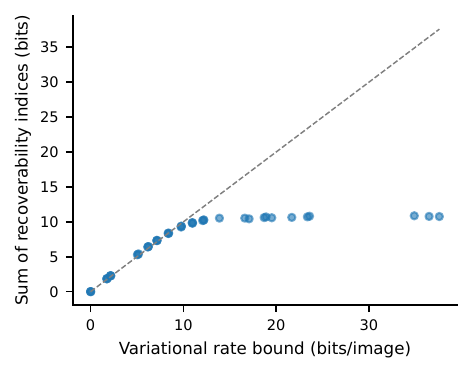}
  \caption{\textbf{dSprites recoverability accounting.}
    The 42 mixture runs compare the sum of per-run MSE indices with the
    variational KL bound over the full 14-setting grid. Maximum positive
    index-sum excess is 0.2470 bits. The sum is a recovery diagnostic, not
    a quantitative decomposition of the rate bound. The identity line is
    retained as a visual reference. No linear relationship or
    information-theoretic equality is assumed for these discrete factors.}
  \label{fig:app-dsprites-accounting}
\end{figure}

\begin{figure}[t]
  \centering
  \includegraphics[width=0.5\linewidth]{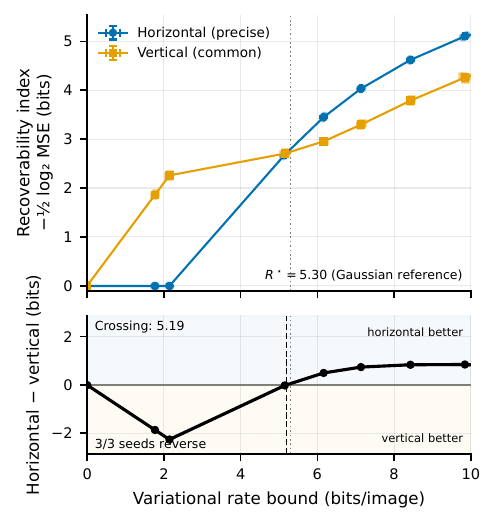}
  \caption{\textbf{dSprites role-swap control.}
    Vertical position is common/moderate and horizontal position rare/precise.
    Individual points and thin gap curves show three seeds. Central curves
    show means and error bars give seed min--max ranges at fixed $\beta$.
    The horizontal-minus-vertical gap crosses negative-to-positive at
    5.1938 bits/image. All three seeds reverse. The displayed
    view covers 0--10 bits/image. The dotted reference is
    $R^\star=5.2996$ from the Gaussian surrogate, not an exact
    dSprites prediction. The changed physical-factor ordering supports a
    task-role effect rather than an explanation based solely on one fixed
    spatial direction being harder to recover.}
  \label{fig:app-role-swap}
\end{figure}

\begin{figure*}[t]
  \centering
  \includegraphics[width=0.9\linewidth]{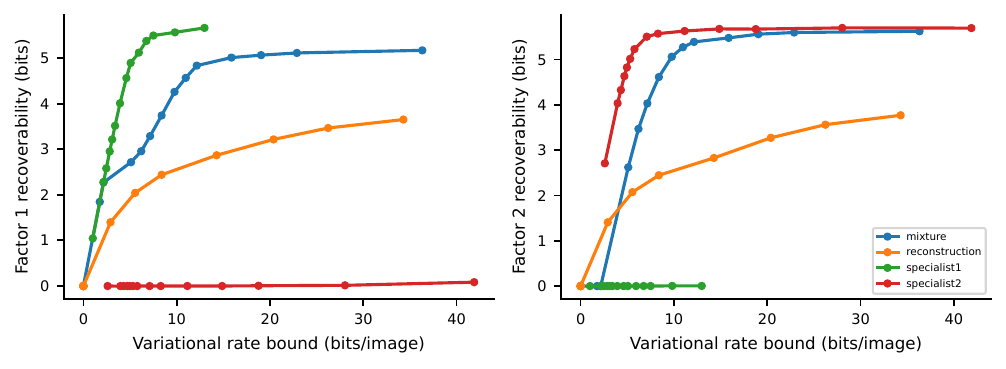}
  \caption{\textbf{Recovery under the full dSprites bank.}
    The two panels show horizontal and vertical recoverability versus
    measured KL for both specialists, reconstruction, and mixture.
    Lines connect three-seed means at each of 14 settings. No confidence
    bands are drawn. At each objective's highest mean achieved rate, the
    horizontal specialist has index pair $(5.663,0.007)$, the vertical
    specialist $(0.081,5.697)$, reconstruction $(3.651,3.773)$, and mixture
    $(5.171,5.626)$. These endpoints occur at different achieved rates and
    are not a matched-rate performance comparison. They illustrate that
    specialists primarily recover their own factor, while reconstruction
    and mixture retain both.}
  \label{fig:app-dsprites-context}
\end{figure*}

\subsection{Gaussian covariance optimization}
\label{app:gaussian-solver}
In this subsection, we explain the algorithm used to compute the optimal Gaussian channel in Proposition \ref{prop:gaussian}.

\paragraph{Program and relevant subspace.}
For positive semidefinite task matrices $H_t\in\mathbb R^{d\times d}$ and
nonnegative weights $w_t$ summing to one, the Gaussian surrogate program is
\begin{equation}
 \min_{0\prec K\preceq I}\quad
 D(K)=\frac{1}{2\log 2}\sum_{t=1}^T w_t
          \log\det(I+F_t^\top K F_t),
 \qquad -\frac{\log\det K}{2\log 2}\le R,
 \label{eq:implemented-covariance-program}
\end{equation}
where $H_t=F_tF_t^\top$. The determinant expression is equivalent to
$\log\det(I+H_t^{1/2}KH_t^{1/2})$ but permits low-rank factors.
Here $K$ is the residual covariance of the whitened Gaussian source given
its representation. Scalar regression targets give rank-one $H_t$.

Before the iterative solve, the code computes a basis $U$ for the numerically
relevant span of $\sum_t w_tH_t$, keeping eigenvalues larger than
$\max(10^{-12},10^{-10}\lambda_{\max})$. It solves in this span, of dimension
$q\le d$, using $\widetilde H_t=U^\top H_tU$, and returns
$K=I+U(\widetilde K-I_q)U^\top$. Irrelevant directions receive no rate.
For the scalar-target task-aware and five-attribute specialist objectives,
$q\le4$ and $q\le5$, respectively. With a zero relevant span the code returns
$K=I$ without iteration. The span truncation is numerical. Certificates
refer to the retained, numerically represented program.

\paragraph{Log-covariance coordinates and projected steps.}
Within the relevant span, the solver parameterizes $K=\exp(-S)$.
It optimizes on
\[
 \mathcal S_R=\{S\succeq0:\operatorname{tr}S=2R\log2\}.
\]
The objective is nondecreasing in $K$, so an optimum can exhaust the rate
budget when the relevant span is nonempty. This parameterization enforces
$K\preceq I$ and the rate constraint without a penalty parameter.
It does not assert ordinary Euclidean convexity of $D(\exp(-S))$.
At positive rate, the main experiment initializes
$S_0=(2R\log2/q)I_q$ separately for each solve. Although the low-level solver
accepts a warm start, the main task/pretext wrappers do not supply one.
At zero rate it returns $K=I_q$ directly.

Let $G_K=\nabla_KD(K)$. The implementation evaluates
\[
 G_K=\frac{1}{2\log2}\sum_t w_t
 F_t(I+F_t^\top K F_t)^{-1}F_t^\top.
\]
If $S=V\operatorname{diag}(s)V^\top$, its gradient is obtained using the
Fr\'echet derivative of the matrix exponential:
\[
 \nabla_SD(\exp(-S))=-V\bigl[C\odot(V^\top G_KV)\bigr]V^\top,
 \quad
 C_{ij}=\begin{cases}
 e^{-s_i},&s_i=s_j,\\
 e^{-\min(s_i,s_j)}\dfrac{1-e^{-|s_i-s_j|}}{|s_i-s_j|},&s_i\ne s_j.
 \end{cases}
\]
The code evaluates the divided difference with \texttt{expm1} and uses its
limiting value when $|s_i-s_j|\le10^{-10}$.

Each candidate step is
$S^+=\Pi_{\mathcal S_R}(S-\eta\nabla_SD)$.
Projection diagonalizes the symmetric argument and projects its eigenvalues
onto the nonnegative simplex of mass $2R\log2$ using sorting and a threshold.
The initial step size is 10. Subsequent iterations use the
Barzilai--Borwein proposal
$\eta=\|\Delta S\|_F^2/\langle\Delta S,\Delta G_S\rangle$
when the denominator exceeds $10^{-18}$, clipped to $[10^{-3},10^7]$.
Backtracking halves the step until
\[
 D(e^{-S^+})\le D(e^{-S})+
 \langle\nabla_SD,S^+-S\rangle+
 \frac{\|S^+-S\|_F^2}{2\eta}+10^{-15}.
\]
At most 60 trials are allowed per iteration. If this limit is exhausted,
the implementation still advances to its last candidate. Convergence must
therefore be assessed from the final certificate, not assumed from the line
search alone.

\paragraph{Stopping certificate.}
The stopping rule uses a geodesic Lagrangian residual, rather than objective
change or a fixed iteration count. Write $K=V\operatorname{diag}(k)V^\top$,
$Q=K^{1/2}$, and $M=2\log2\,QG_KQ$.
Eigen-directions with $k_i<1-10^{-7}$ are classified as active. The rate
multiplier $\lambda$ is the nonnegative mean of the active diagonal entries
of $V^\top MV$. If there are no active directions it is the nonnegative
largest eigenvalue of $M$. On the inactive subspace, the code forms the
positive-semidefinite part of
$(\lambda I-(V^\top MV)_{\mathrm{inactive}})/(2\log2)$
and lifts it to a dual matrix $Z\succeq0$, zero on active directions.
Define
\[
 E=\frac{M-\lambda I}{2\log2}+QZQ,
 \qquad r(K)=-\frac{\log\det K}{2\log2}.
\]
The reported residual-based gap bound is
\begin{equation}
 \widehat\epsilon=
 \max\{0,\lambda(R-r(K))+\operatorname{tr}[Z(I-K)]\}
 +2R\log2\sqrt q\,\|E\|_F.
 \label{eq:implemented-gap}
\end{equation}
This is the implementation's a posteriori certificate for the Gaussian
surrogate program. It is not a statistical error bound for learned image
representations. It is computed at iterations $1,11,21,\ldots$ and again
on the returned iterate. The main tolerance is $10^{-5}$ bits of weighted
surrogate distortion, with at most 3,000 outer iterations. Main-run wrappers
reject an unconverged solve. In the task-aware run, all seven
positive-rate solves report 11 iterations and gap bounds below $10^{-5}$.
This is an empirical observation, not an iteration-complexity guarantee.

\paragraph{Single-matrix objectives.}
A pretext represented by one matrix bypasses projected iterations.
For $H=V\operatorname{diag}(h)V^\top$, the code computes
$K=V\operatorname{diag}(k)V^\top$ with
$k_i=\min(1,e^a/h_i)$ on positive eigenvalues and $k_i=1$ otherwise.
Brent's method solves the scalar equation
\[
 \frac12\sum_{i:h_i>0}\max\left\{\frac{\log h_i-a}{\log2},0\right\}=R
\]
in log-water-level coordinates, with absolute tolerance $10^{-12}$.
This avoids loss of relative accuracy when the water level is small.
A multivariate attribute pretext with several scalar task matrices uses the
weighted multi-task solver, not this single-matrix shortcut.

\paragraph{Time complexity.}
Let $q$ be the retained dimension, $T$ the number of matrices in one objective,
$r_t=\operatorname{rank}(\widetilde H_t)$ after numerical factorization,
$N$ the number of outer iterations, and $L\le60$ the maximum number of
backtracking trials used in an iteration. Dense symmetric eigendecompositions,
matrix-exponential reconstruction, gradient basis changes, and PSD/simplex
projection cost $O(q^3)$ each. Simplex sorting adds $O(q\log q)$.
With factored tasks, an objective/gradient evaluation costs
\[
 \mathcal C_H=
 O\!\left(\sum_{t=1}^T(q^2r_t+qr_t^2+r_t^3)\right).
\]
When all tasks have rank at most one, the vectorized implementation reduces
this to $O(Tq^2)$. Certificate evaluation adds $O(q^3+\mathcal C_H)$.
Although backtracking requests only an objective value, the
\texttt{value} routine also computes its gradient, so the same
$\mathcal C_H$ cost applies to those calls. Consequently, one solve has
iteration cost
\[
 O\!\left(N(1+L)(q^3+\mathcal C_H)\right),
\]
or $O(N(1+L)(q^3+Tq^2))$ for rank-one families and
$O(N(1+L)Tq^3)$ in the full-rank case. These are arithmetic-operation bounds
conditional on $N$ and $L$. The implementation establishes no worst-case
bound on $N$ as a function of the requested gap tolerance.

Preprocessing also matters. Forming the weighted matrix and its relevant-span
basis costs $O(Td^2+d^3)$. In the wrappers, the direct, unoptimized
three-operand \texttt{einsum} for $U^\top H_tU$ has contraction count
$O(Td^2q^2)$. A staged multiplication could reduce this to
$O(T(d^2q+dq^2))$, but that optimization is not implemented here.
Factorizing the reduced matrices costs $O(Tq^3)$. Lifting the solution costs
$O(dq^2+d^2q)$. These setup operations are repeated by the main wrappers
at each rate rather than cached across the grid.
The single-matrix shortcut costs $O(d^3+Jd)$, where $J$ is the number of
scalar root-function evaluations, including dense reconstruction of $K$.

\paragraph{Space complexity and repeated solves.}
The solver retains dense task matrices even for rank-one families. Its
resident matrix storage is therefore $O(Tq^2)$, with factors requiring
$O(q\sum_t r_t)$ and a constant number of $q\times q$ work arrays and
iterates. Including the original matrices and lifted covariance in the
wrappers gives $O(Td^2+Tq^2+dq)$ storage for one solve. It would be incorrect
to describe the present rank-one implementation as using only $O(Tq)$
space. Storing solutions for $G$ rates additionally requires $O(Gd^2)$
per candidate. The solve itself does not depend on the number of training
images once the task matrices have been formed. Fitting features, moments,
and neural representations has separate costs. Rate grids and bootstrap
replications repeat the solve, so their work is additive over all
candidate/rate/resample combinations, including the corresponding refitting
costs for the moment bootstrap.

\end{document}